\DeclareMathOperator{\rank}{rank}
\DeclareMathOperator{\linearspan}{span}
\DeclareMathOperator{\ols}{LS}
\providecommand{\Sym}{\ensuremath{\mathrm{Sym}}}
\definecolor{color1}{HTML}{105e8a}
\definecolor{color2}{HTML}{e99926}
\definecolor{color3}{HTML}{b82a0c}
\definecolor{color4}{HTML}{3e8a10}
\definecolor{color5}{HTML}{80037e}
\definecolor{oceanboatblue}{rgb}{0.0, 0.47, 0.75}
\DeclareFontFamily{OMX}{MnSymbolE}{}
\DeclareSymbolFont{MnLargeSymbols}{OMX}{MnSymbolE}{m}{n}
\DeclareFontShape{OMX}{MnSymbolE}{m}{n}{
    <-6>  MnSymbolE5
   <6-7>  MnSymbolE6
   <7-8>  MnSymbolE7
   <8-9>  MnSymbolE8
   <9-10> MnSymbolE9
  <10-12> MnSymbolE10
  <12->   MnSymbolE12
}{}
\DeclareFontShape{OMX}{MnSymbolE}{b}{n}{
    <-6>  MnSymbolE-Bold5
   <6-7>  MnSymbolE-Bold6
   <7-8>  MnSymbolE-Bold7
   <8-9>  MnSymbolE-Bold8
   <9-10> MnSymbolE-Bold9
  <10-12> MnSymbolE-Bold10
  <12->   MnSymbolE-Bold12
}{}
\let\llangle\@undefined
\let\rrangle\@undefined
\DeclareMathDelimiter{\llangle}{\mathopen}%
                     {MnLargeSymbols}{'164}{MnLargeSymbols}{'164}
\DeclareMathDelimiter{\rrangle}{\mathclose}%
                     {MnLargeSymbols}{'171}{MnLargeSymbols}{'171}
\renewenvironment{algomathdisplay}
{\[}
{\@endalgocfline\]}
\title{Metric Learning from Limited Pairwise Preference Comparisons}
\author[1]{Zhi Wang\thanks{Work done at University of California San Diego.}}
\author[2]{Geelon So}
\author[1]{Ramya Korlakai Vinayak}
\affil[1]{
Dept. of Electrical and Computer Engineering, \ University of Wisconsin--Madison
}
\affil[2]{
Dept. of Computer Science and Engineering, \ University of California San Diego

\texttt{zhi.wang@wisc.edu, geelon@ucsd.edu, ramya@ece.wisc.edu}}
\begin{document}
\maketitle
\setcounter{footnote}{0}
\begin{bibunit}

\begin{abstract}
We study metric learning from preference comparisons under the ideal point model, in which a user prefers an item over another if it is closer to their latent ideal item. These items are embedded into $\mathbb{R}^d$ equipped with an unknown Mahalanobis distance shared across users. While recent work shows that it is possible to simultaneously recover the metric and ideal items given $\mathcal{O}(d)$ pairwise comparisons per user, in practice we often have a limited budget of $o(d)$ comparisons. We study whether the metric can still be recovered, even though it is known that learning individual ideal items is now no longer possible. We show that in general, $o(d)$ comparisons reveal no information about the metric, even with infinitely many users. However, when comparisons are made over items that exhibit low-dimensional structure, each user can contribute to learning the metric restricted to a low-dimensional subspace so that the metric can be jointly identified. We present a divide-and-conquer approach that achieves this, and provide theoretical recovery guarantees and empirical validation.
\end{abstract}

\section{Introduction} 
\label{sec:intro}
\begin{samepage}
Metric learning is commonly used to discover measures of similarity for downstream applications~\citep[e.g.,][]{kulis2013metric}.
In this paper, we study metric learning from pairwise preference comparisons. In particular, we consider the ideal point model~\citep{coombs1950psychological}, in which a set of items are embedded into $\RR^d$, 
and a user prefers an item $x$ over another $x'$ if it is {\em closer} to the user's latent ideal point $u \in \RR^d$, i.e.,
\[\rho(x, u) < \rho(x', u),\]
for some underlying metric $\rho: \RR^d \times \RR^d \rightarrow \RR_{\ge 0}$. 
\end{samepage}
While high-quality item embeddings have become increasingly 
available, for example from foundation models pre-trained on internet-scale data \citep[e.g.,][]{radford2021learning}, naively equipping these representations with the Euclidean distance may not accurately capture the semantic relations between items as perceived by humans, and therefore may not align with human values or preferences \citep[][]{yu2014semantic,canal2022one}.
Meanwhile, people often agree on their perception of item similarities \citep{colucci2016evaluating}.
In this work, we study when and how a shared Mahalanobis distance can be learned from a large crowd, where each user answers a few queries of the form: ``Do you prefer $x$ or $x'$?''

The line of work on simultaneous metric and preference learning was recently introduced by \cite{xu2020simultaneous}, who studied it under the ideal point model for a single user. They proposed an alternating minimization algorithm to recover both the Mahalanobis distance and user ideal point. After, \cite{canal2022one} introduced a convex formulation of the problem, providing the first theoretical guarantees while extending the results to crowdsourced data. They showed that the cost of learning a Mahalanobis distance can be amortized among users; it is possible to jointly learn the metric and ideal points in $\RR^d$ so long as sufficiently many users each provides $\Theta(d)$ preference comparisons.

However, when the representations of data are very high-dimensional, obtaining $\Omega(d)$ preference comparisons from each user can be practically infeasible. It can be expensive to ask a user more than a few queries \citep{cohen2005bother} both in terms of cost and cognitive overload, and users may have concerns over their privacy \citep{jeckmans2013privacy}. Fortunately, through crowdsourcing, we often have access to preference comparisons from a {\em large} pool of users. 
In this paper, we ask the fundamental question:
\begin{center}
    \emph{Can we learn an unknown Mahalanobis distance in $\mathbb{R}^d$ from $o(d)$ preference comparisons per user? }
\end{center}

We provide a twofold answer to this question. First, we show a negative result: even with infinitely many users, it is generally impossible to learn anything at all about the underlying metric when each user provides fewer than $d$ preference comparisons. In general, there is no hope for recovering the unknown metric from preference comparisons without learning individual preference points as well.

Second, we show that the negative result does not rule out the possibility of learning the metric when the set of items are {\em subspace-clusterable} (Definition~\ref{def:subspace-clusterable}); that is, when they lie in {a union of low-dimensional subspaces} \citep{parsons2004subspace,ma2008estimation}. These subspaces may capture, for instance, different categories or classes of items \citep{elhamifar2013sparse}. Such structure has also been studied extensively in compressed sensing \citep{lu2008theory,eldar2009robust} and face recognition \citep{basri2003lambertian,ho2003clustering}, among others.
Given items with subspace-clusterable structure, we show that we can learn the Mahalanobis distance using a {\em divide-and-conquer} approach (Figure~\ref{fig:geometric-intuition-1}). This involves learning the metric restricted to each subspace, which is feasible using very few comparisons per user, and then reconstructing the full metric from these subspace metrics. 

\paragraph{Contributions}  We study the fundamental problem of learning an unknown metric with limited pairwise comparison queries, i.e, whether it is possible to learn a shared unknown metric without learning the individual preference points. Our main contributions are as follows:
\begin{enumerate}[leftmargin=*]
    \item We provide an impossibility result: nothing can be learned if the items are in general position (\Cref{sec:neg_result});
    \item We define the notion of subspace-clusterable items and propose a divide-and-conquer approach, such that:
    \begin{itemize}[leftmargin=*]
        \item Given noiseless, unquantized comparisons that indicate how much a user prefers one item over another, we show that subspace-clusterability is necessary and sufficient for identifying the unknown metric (\Cref{sec:unquantized});

        \item Given noisy, quantized comparisons in the form of binary responses over subspace-clusterable items, we present recovery guarantees in terms of identification errors for our approach (\Cref{sec:approx_recovery});
    \end{itemize}
    \item We implement our proposed algorithm and validate our findings using synthetic data (\Cref{sec:experiments}).
\end{enumerate}

\begin{figure}[t]
    \centering
    \includegraphics[height=0.31\linewidth]{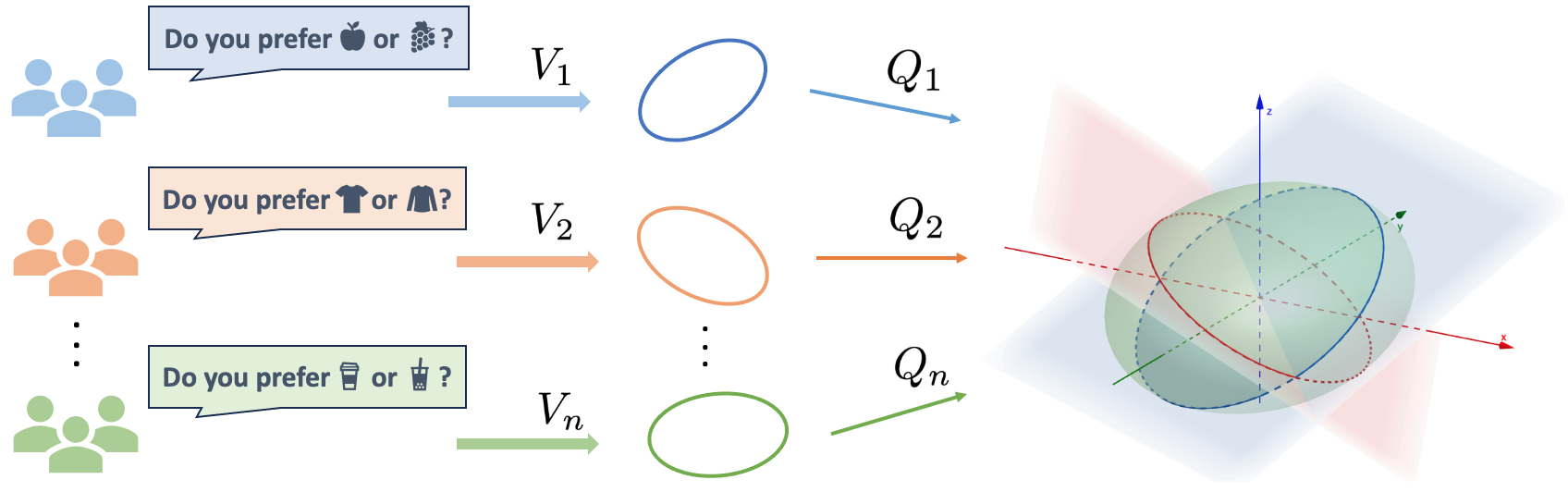}
    \caption{In our divide-and-conquer approach, users help us recover the metric $Q_\lambda$ restricted to subspaces $V_\lambda$. We stitch these together to recover the metric $M$ on $\RR^d$. The ellipses visualize the low-dimensional unit spheres, which are ``slices'' of the full metric.}
    \label{fig:geometric-intuition-1} 
\end{figure}

\subsection{Related work} 
There is a rich literature on metric learning; see \citep{kulis2013metric} for a survey. 
A line of metric learning from human feedback focuses on learning Mahalanobis distances from triplet comparisons \citep{schultz2003learning,verma2015sample,mason2017learning}, in which users are asked ``is $u$ closer to $x$ or $x'$?'' However, triplet comparisons are a specific type of feedback that is not always practical to obtain. And so, an important extension of these works is metric learning from preference comparisons, which can be seen as a variant of triplet comparisons with an unknown latent comparator $u$. Even though preference comparisons are a weaker form of feedback, they are also much more prevalent. For example, they can be inferred from user behavior, assuming users tend to engage more with items perceived to be more ideal.

Specifically, in this paper, we consider the ideal point model \citep{coombs1950psychological}, where the latent comparator $u$ in a preference comparison represents a user's ideal item. Note that, if the ideal points are known beforehand, one can simply treat this as a problem of metric learning from triplet comparisons. Conversely, if the metric is known, one can also localize user ideal points using techniques from \citep{jamieson2011active,massimino2021you,tatli2024learning}.

Our paper builds upon recent research that studies \emph{simultaneous} metric and preference learning \citep{xu2020simultaneous,canal2022one}. In a single user setting, \cite{xu2020simultaneous} developed an algorithm that iteratively alternate between estimating the metric and the user ideal point. \cite{canal2022one} generalized the setting to involve multiple users. They established identifiability guarantees when users provide unquantized measurements, and presented generalization bounds and recovery guarantees when users provide binary responses. While \cite{canal2022one} showed that it is possible to jointly recover a metric and user ideal points when each user answers ${\Theta}(d)$ queries, we address the fundamental question of learning Mahalanobis distances when we have a much limited budget of $o(d)$ preference comparisons per user. The $o(d)$ budget is more realistic especially when items are embedded in higher dimensions, but also poses interesting new challenges as learning user ideal points is no longer possible.

Several other works in the broader literature are related. For example, learning ordinal embeddings or kernel functions from triplet comparisons has been studied. \cite{tamuz2011adaptively} developed an active multi-dimensional scaling algorithm to learn item embeddings, with the goal of capturing item similarities perceived by humans. See also \citep{van2012stochastic,jain2016finite,kleindessner2017kernel}, among other works.
\cite{hsieh2017collaborative} introduced a collaborative metric learning algorithm, which uses matrix factorization to learn user and item embeddings such that the Euclidean distance reflects user preferences and item/user similarities. 
A divide-and-conquer approach for deep metric learning has been studied by \cite{sanakoyeu2019divide}, who use $k$-means to cluster items and learn separate metrics for each cluster before concatenating them together; they performed an extensive empirical study based on image data. In comparison, we consider the ideal point model to study the fundamental problem of metric learning from limited preference comparisons. As we build directly on this line of work by \cite{xu2020simultaneous} and \cite{canal2022one}, we now present background and existing results in greater detail.

\section{Preliminaries}
\label{sec:preliminaries}
\paragraph{The ideal point model}
Let $\Xcal$ be a set of items embedded into $\RR^d$ with an unknown Mahalanobis distance $\rho$. Let $M$ be its matrix representation in $\RR^{d\times d}$. That is, $M$ is a positive-definite (symmetric) matrix and for all $x, x' \in \RR^d$,
\[
\rho(x, x') := \sqrt{(x-x')^\top M (x-x')} =  \|x - x' \|_M.
\]
Suppose there is a large pool of users, 
and each user is associated with an unknown ideal point in $\RR^d$. 
A user with ideal point $u$ prefers an item $x$ over another $x'$ if and only if $\rho(x, u) < \rho(x', u)$; 
or whenever
$\psi(x,x'; u) < 0$, where:
\begin{equation} \label{eqn:unquantized}
\psi_M(x,x';u) := \|x - u\|_M^2 - \|x' - u\|_M^2.
\end{equation} 
Each user's ideal point may be distinct, but we assume that the metric $\rho$ is a shared. We aim to recover $\rho$ when each user provides very few preference comparisons.

We consider two types of user preference comparisons for learning the metric: \emph{unquantized} and \emph{quantized measurements}. From a user with ideal point $u$, these are of the form:
\begin{align*}
    \underbrace{(x, x', \psi)}_{\textrm{unquantized}}\qquad \textrm{and}\qquad \underbrace{(x, x', y)}_{\textrm{quantized}},
\end{align*}
where $\psi = \psi(x, x';u)$ is a real number that indicates the difference between the squared distances, and $y$ is binary, taking values in $\cbr{-1, +1}$. When $y = -1$, $x$ is preferred over $x'$, and $y = +1$ indicates otherwise. 

\paragraph{Metric learning from preference measurements} 
We now review the existing algorithmic ideas for recovering the metric from preference feedback under the ideal point model. Suppose that we are given unquantized measurements from a single user with an ideal point $u \in \RR^d$. With a little algebra \citep{canal2022one}, the measurement in Eq.~\eqref{eqn:unquantized} becomes:
\begin{equation}  \label{eqn:unquantized-prime}
\psi_M(x,x'; u) = \big\langle xx^\top - x'{x'}^\top,\, M\big\rangle + \big\langle x - x', v\big\rangle,
\end{equation}
where $v := -2 Mu$.
The first inner product is the trace inner product for matrices, 
while the second inner product is the usual inner product on $\RR^d$. The re-parametrization $v$ of $u$ is sometimes called the \emph{pseudo-ideal point}. Thus, unquantized measurements are linear over the joint variables $(M,v)$. Given a set of unquantized measurements from a user, one can just solve a linear system of equations to recover the matrix representation $M$ of $\rho$, as described in Algorithm~\ref{alg:one-user} of Appendix~\ref{appendix:algorithms}. Since $M$ has full rank and therefore invertible, we can then recover $u$ from $M$ and $v$ \citep{canal2022one}.

As there are $\frac{d(d+1)}{2} + d$ degrees of freedom in $(M,v)$, to recover the metric in this way requires at least that many measurements from a single user; the first term corresponds to the dimension of symmetric $d \times d$ matrices representing Mahalanobis distances, the second for the user ideal point.

When $d^2$ is very large, we may want to amortize learning the metric over many users. \cite{canal2022one} show that this is possible. Let the users be indexed by elements in $[K]$. We can construct a larger linear regression problem, where each user has a separate covariate corresponding to their ideal point. Now, the joint variable is $(M, v_1, v_2, \ldots, v_K)$, which has $\frac{d(d+1)}{2} + dK$ degrees of freedom. When the population is large, it suffices to ask each user $\Theta(d + d^2/K)$ preference queries, which can be much closer to $d$ than $d^2$. This procedure is given in Algorithm~\ref{alg:multiple-user} of Appendix~\ref{appendix:algorithms}.

However, modern representations of data may be extremely high-dimensional, and it would be too onerous for any single user to provide $d$ measurements. 
In this paper, we tackle this question:
If we have access to many users but can only ask each user a much more limited number $m \ll d$ of preferences queries, can we still recover $\rho$? We note that with  $o(d)$ pairwise queries, it is impossible to localize the ideal preference point of a user even with a known metric~\citep{jamieson2011active, massimino2021you}. So, our goal here is to address the open question of whether it is possible to learn an \emph{unknown} metric with such limited queries per users given a sufficiently large pool of users.

\paragraph{Notation} Let $\Sym(\RR^d)$ denote the symmetric $d \times d$ matrices equipped with the trace inner product, and let $\Sym^+(\RR^d)$ be the positive-definite matrices. For readability, we often make abbreviations of the form $\Delta \in \Sym(\RR^d)$ and $\delta \in \RR^d$:
\[\Delta \equiv x{x^{\vphantom{\prime}}}^\top - x'{x'}^\top\qquad \textrm{and}\qquad \delta \equiv x - x'.\]
Then, $\Delta \oplus \delta $ is an element of $\Sym(\RR^d) \oplus \RR^d$, the direct sum of inner product spaces, and we can shorten Eq.~\eqref{eqn:unquantized-prime} to:
\[\psi_M(x,x'; u) = \big\langle \Delta \oplus \delta, M \oplus v\big\rangle.\]
Following the experimental design literature, let us call a collection of such elements a \emph{design matrix}:
\begin{definition}
\label{def:design}
Let $\{(x_{i_0}, x_{i_1})\}_{i \in [m]}$ be a collection of item pairs. It induces the linear map $D : \Sym(\RR^d) \times \RR^d \to \RR^m$,
\[D(A, w)_i = \big\langle \Delta_i \oplus \delta_i, A \oplus w\big\rangle,\]
where $\Delta_{i} = x_{i_0}^{\phantom{\top}}x_{i_0}^\top - x_{i_1}^{\phantom{\top}}x_{i_1}^\top$ and $\delta_{i} = x_{i_0} - x_{i_1}$ for $i \in [m]$. As a slight abuse of language, we call $D$ the induced \emph{design matrix}. If item pairs are drawn from a distribution $\Pcal_m$ over $(\RR^d \times \RR^d)^m$, we say that $D$ is a \emph{random design} and write $D \sim \Pcal_m$. We also define $\sigma^2_{\mathrm{min}}(\Pcal_m) = \frac{1}{m} \cdot \sigma_\mathrm{min}\big(\mathbb{E}[D^*D]\big)$.
\end{definition}
For additional background and notation, see Appendix~\ref{sec:notation}.

\section{An impossibility result}
\label{sec:neg_result}
Consider the mathematically simplified setting in which users provide \emph{unquantized} responses. We show a negative result stating that when users provide fewer than $d$ comparisons, we fundamentally cannot learn anything about $M$ if the items are in general position in the following sense:

\begin{restatable}{definition}{pairwisegeneric}
    \label{def:pairwise-generic}
    A set $\Xcal \subset \RR^d$ has \emph{generic pairwise relations} if for any acyclic graph $G = (\Xcal, E)$ with at most $d$ edges, the set $\{x - x' : (x,x') \in E\}$ is linearly independent.
\end{restatable}

The geometric meaning of having generic pairwise relations is simple: if any $d$ pairs of points are connected by lines, then those lines are linearly independent (unless they form cycles; see Figure~\ref{fig:generic-pairwise-relations-main} for an illustration).
Proposition~\ref{prop:generic-as} shows that almost all finite subsets of Euclidean space have generic pairwise relations with respect to the Lebesgue measure\footnote{In Appendix~\ref{appendix:impossibility}, we discuss the connection between generic pairwise relations and general linear position, a standard notion from geometry.}.

The following theorem shows that if items have generic pairwise relations, then sets of $m \leq d$ unquantized measurements from a single user provide no information about the underlying metric. In particular, suppose that $M$ and $v$ are the underlying matrix representation and user's pseudo-ideal point, both unknown to us. Then, for any other Mahalanobis matrix $M'$, we can find a pseudo-ideal point $v'$ that is also consistent with the data. In fact, the negative result holds even with infintely many users:

\begin{restatable}{theorem}{negativeresult}
    \label{thm:negative-result}
Fix $M \in \Sym^+(\RR^d)$ and $v_k \in \RR^d$ for each $k \in \NN$. Let $(D_k)_{k \in \NN}$ be a collection of design matrices, each for a set of $m \leq d$ pairwise comparisons.  If each set of compared items has generic pairwise relations, then for all $M' \in \Sym^+(\RR^d)$, there exists $(v_k')_{k \in \NN}^{\vphantom{\prime}} \subset \RR^d$ such that:
\[\phantom{\qquad \forall k \in [K].}D_k(M, v_k) = D_k(M', v_k'),\qquad \forall k \in \NN.\]
\end{restatable}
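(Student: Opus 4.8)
The plan is to reduce to a single user and then to a linear-algebraic statement relating the two ``halves'' of the design map. Because the constraint $D_k(M,v_k)=D_k(M',v_k')$ couples $M$ only with $v_k$, and $M,M'$ are both fixed, it is enough to handle one comparison set at a time and then repeat for every $k\in\NN$. So fix a comparison set $\{(x_{i_0},x_{i_1})\}_{i\in[m]}$ with generic pairwise relations, $m\le d$, inducing a design matrix $D$; write $\delta_i=x_{i_0}-x_{i_1}$ and $\Delta_i=x_{i_0}x_{i_0}^\top-x_{i_1}x_{i_1}^\top$, and let $D_\delta:\RR^d\to\RR^m$, $w\mapsto(\langle\delta_i,w\rangle)_i$, and $D_\Delta:\Sym(\RR^d)\to\RR^m$, $A\mapsto(\langle\Delta_i,A\rangle)_i$. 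Unwinding $D(M,v)=D(M',v')$ coordinatewise turns it into $D_\delta(v'-v)=D_\Delta(M-M')$, so the theorem follows once I show $\mathrm{range}(D_\Delta)\subseteq\mathrm{range}(D_\delta)$: then for the given $M'$ I pick any $w$ with $D_\delta w=D_\Delta(M-M')$ and set $v'=v+w\in\RR^d$.

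By taking adjoints, $\mathrm{range}(D_\Delta)\subseteq\mathrm{range}(D_\delta)$ is equivalent to $\ker(D_\delta^{*})\subseteq\ker(D_\Delta^{*})$, i.e.\ every $a\in\RR^m$ with $\sum_i a_i\delta_i=0$ also satisfies $\sum_i a_i\Delta_i=0$. I would establish this through the graph $G$ whose edges are the compared pairs. Pick a spanning forest $F$ of $G$; as $F$ is acyclic with at most $m\le d$ edges, generic pairwise relations makes $\{\delta_i:i\in F\}$ linearly independent, while every chord difference $\delta_i$ (for $i\notin F$) is a signed sum of forest differences along the unique $F$-path joining the endpoints of edge $i$. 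Hence $\mathrm{rank}\{\delta_i\}_{i\in[m]}=|F|$, so $\dim\ker(D_\delta^{*})=m-|F|$, exactly the dimension of the cycle space of $G$ relative to $F$; and since the fundamental cycles give $m-|F|$ independent dependency vectors (each contains its own defining chord and no other), they span $\ker(D_\delta^{*})$. It therefore suffices to verify the inclusion on one cycle at a time.

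This last check is the crux and it is immediate: around a cycle $z_1-z_2-\cdots-z_\ell-z_1$ of $G$, the corresponding signed combination of the $\Delta$'s telescopes, $\sum_{j}(z_jz_j^\top-z_{j+1}z_{j+1}^\top)=0$ (indices mod $\ell$), precisely mirroring $\sum_j(z_j-z_{j+1})=0$. So the cycle's dependency vector $a$ also has $\sum_i a_i\Delta_i=0$; consequently $\sum_i a_i c_i=\langle\sum_i a_i\Delta_i,\,M-M'\rangle=0$ where $c=D_\Delta(M-M')$, which is exactly the solvability condition for $D_\delta w=c$. That yields $w$, hence $v'$, for the single user, and doing this independently for each $k\in\NN$ finishes the proof.

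I expect the only genuine obstacle to be the middle step---pinning down $\ker(D_\delta^{*})$ as the cycle space---and this is exactly where the hypotheses do their work. Generic pairwise relations, together with $m\le d$, guarantees there are no ``accidental'' linear dependencies among the $\delta_i$ beyond those forced by cycles; for an accidental dependency the matching combination $\sum_i a_i\Delta_i$ need not vanish, so the inclusion $\ker(D_\delta^{*})\subseteq\ker(D_\Delta^{*})$ could fail and with it the conclusion. A little extra bookkeeping handles degenerate pairs ($x_{i_0}=x_{i_1}$, or a pair repeated), but these only contribute zero or duplicated rows and change nothing.
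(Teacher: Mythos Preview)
Your argument is correct and rests on the same graph-theoretic core as the paper's proof: both reduce to a single user, recast the question as solvability of the linear system $D_\delta w = D_\Delta(M-M')$, and use a spanning forest together with the generic-pairwise-relations hypothesis to identify the linear dependencies among the $\delta_i$ with the cycle space of the comparison graph.

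The packaging differs in a way worth noting. The paper factors both sides through the edge--vertex incidence matrix $S$: writing $A=SX$ and $b=Sb'$ (with $b'_j=\langle x_jx_j^\top,\,M-M'\rangle+\langle x_j,v\rangle$), it invokes Rouch\'e--Capelli and the rank identity $\mathrm{rank}(SX)=\mathrm{rank}(S)$ to conclude $\mathrm{rank}([A\,|\,b])=\mathrm{rank}(A)$. You instead pass to the adjoint and prove $\ker(D_\delta^{*})\subseteq\ker(D_\Delta^{*})$ by verifying directly that the fundamental-cycle vectors telescope on the $\Delta_i$ side. Your route is a touch more explicit about \emph{why} the $\Delta$-side respects the same dependencies (the telescoping $\sum_j z_jz_j^\top-z_{j+1}z_{j+1}^\top=0$), while the paper's factorization through $S$ handles both $\delta$ and $\Delta$ in one stroke without ever writing down a cycle. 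Both are short and either could be substituted for the other.
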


\begin{figure}[t]
\centering
\begin{tikzpicture}[scale=1.8]
  \useasboundingbox (-1,-1) rectangle (1.5,1);

  \draw [fill=color1, color1] (-0.2,0.1) circle (1pt) node (A) {};

  \draw [fill=color1, color1] (1.3,0.2) circle (1pt) node (B) {};

  \draw [fill=color1, color1] (0.3,0.8) circle (1pt) node (C) {};

  \draw [fill=color1, color1] (-1,-0.4) circle (1pt) node (E) {};

  \draw [fill=color1, color1] (-0.23,-0.7) circle (1pt) node (F) {};

  \draw (E) -- (B);
  \draw (F) -- (C);
\end{tikzpicture}
\caption{Points in $\mathbb{R}^2$ with generic pairwise relations.}
\label{fig:generic-pairwise-relations-main}
\end{figure}
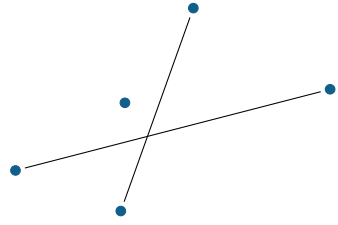

See Appendix~\ref{appendix:impossibility} for a proof of Theorem~\ref{thm:negative-result}.
This theorem shows that when items have \emph{generic pairwise relations} it is not just that we cannot recover $\rho$, but that we cannot glean anything at all about $\rho$ when users each provide $d$ or fewer comparisons, for every matrix in $\Sym^+(\RR^d)$ is consistent with $D$. While each user provides us with more data, each also introduces new degrees of freedom---the unknown ideal points. When learning from crowds, more data does not necessarily lead to more usable information.

\section{Exact recovery with low-rank subspace structure}
\label{sec:unquantized}

The above negative result applies to almost all finite sets of items. It seems to tell a pessimistic story for metric learning when data is embedded into high dimensions and when it is infeasible to obtain $\Omega(d)$ preference comparisons per user. 

However, the story is not closed and shut yet. Real-world data often exhibit additional structure that could help us recover the metric, such as low intrinsic dimension \citep{fefferman2016testing}. In particular, we assume that many items of $\Xcal$ lie on a \emph{union of subspaces}. The approximate validity of this assumption is the basis of work in manifold learning \citep{roweis2000nonlinear,tenenbaum2000global,belkin2003laplacian}, compressed sensing \citep{donoho2006compressed}, and sparse coding \citep{olshausen1997sparse}, among others.

In this case, we can take a divide-and-conquer approach to metric learning by identifying the metric restricted to those subspaces, before stitching them back together to recover the full metric. Let's define subspace Mahalanobis distances:

\begin{definition}
    Let $V$ be a subspace of $\RR^d$.  A metric on $V$ is a \emph{subspace Mahalanobis distance} if it is a subspace metric of some Mahalanobis distance $\rho$ on $\RR^d$. In that case, we denote the subspace metric by $\rho\big|_V$, where for all $x, x' \in V$,
    \[\rho\big|_V(x,x') = \rho(x,x').\]
\end{definition}

In general, we cannot hope to identify an arbitrary metric from a finite number of its subspace metrics. However, Mahalanobis distances have much more structure than arbitrary metrics on $\RR^d$. A Mahalanobis distance on $\RR^d$ can be fully specified using $d(d+1)/2$ numbers. By recovering its subspace metrics, we can hope to chip away at the degrees of freedom of Mahalanobis distances. 

As another way of intuition, each Mahalanobis distance may be identified with its unit sphere---points that are unit distance away from the origin. These points form a $(d-1)$-dimensional ellipsoid in $\RR^d$. To recover a subspace Mahalanobis distance on $V$ means that we are able to determine which points of $V$ intersect this ellipsoid (see Figure~\ref{fig:geometric-intuition-1}). If we do this for sufficiently many subspaces, we can determine the whole ellipsoid. To formalize this intuition, we now linear-algebraically relate a Mahalanobis distance  with its subspace metrics. 

\subsection{A linear parametrization of Mahalanobis distances}
\label{sec:linear-parameterization}
To describe the linear relationship between a Mahalanobis distance and its subspace metrics, we need to parametrize the subspace metrics. To do so, we first need to fix a choice of coordinates on each $V \subset \RR^d$. In the following, let $V$ be an $r$-dimensional subspace of $\RR^d$ and let $B \in \RR^{d \times r}$ be an orthonormal basis of $V$, where $r \ll d$. 

\begin{definition}
    We say $V$ has a \emph{canonical representation} if it is equipped with an orthonormal basis $B$, where the canonical representation of a vector $x \in V$ is given by $B^\top x \in \RR^r$.\footnote{We shall always equip $\RR^d$ with the standard basis, so that a vector is its own canonical representation in $\RR^d$.}
\end{definition}

\begin{definition}
Let $\Sym(V)$ and $\Sym^+(V)$ respectively denote the pairs $(\Sym(\RR^r), B)$ and $(\Sym^+(\RR^r), B)$, where $V$ has a canonical representation given by $B$. 

Let $Q \in \Sym(V)$ mean that $Q \in \Sym(\RR^{r})$, and that it carries the basis information $B$ along with it.
\end{definition}

Just as Mahalanobis distances on $\RR^d$ are in one-to-one correspondence with positive-definite matrices, so too are Mahalanobis distances on $V$ in correspondence with $\Sym^+(V)$. Furthermore, Proposition~\ref{prop:metric-to-matrix} shows that the matrix representations of a Mahalanobis distance and its restriction to a subspace is given by the following linear map. 

\begin{definition}
    \label{def:linear-projection}
    Let $V$ and $B$ be as before. Define the linear map $\Pi_V : \Sym(\RR^d) \to \Sym(V)$ by:
    \begin{equation} \label{eqn:proj-V}
        \Pi_V(A) = B^\top A B.
    \end{equation}
\end{definition}

Thus, if a Mahalanobis distance $\rho$ on $\RR^d$ and its restriction $\rho\big|_V$ to a subspace $V$ have representations $M \in \Sym^+(\RR^d)$ and $Q \in \Sym^+(V)$, respectively, then:
\[Q = \Pi_V(M) =  B^\top M B.\]

\subsection{Learning with low-rank subspaces}
\label{sec:exact-low-rank-subspace}
To see how low-dimensional structure can help us make progress in learning the metric, consider a simple setting where all items lie in some low-dimensional subspace $V$. Instead of learning the full metric $\rho$, we could aim for a more modest goal of learning the subspace metric $\rho\big|_V$.

As before, let $V$ be an $r$-dimensional subspace of $\RR^d$ with a canonical representation. If all items and ideal points lie in $V$, then learning $\rho\big|_V$ immediately reduces to the usual setting of learning a Mahalanobis distance, since we can simply ignore the remaining dimensions and reparametrize the problem. But when the ideal points are not assumed to lie on $V$, it is not evident \textit{a priori} that we can ignore the dimensions extending beyond the set of items. However, it turns out that for Mahalanobis distances, we may.

The next lemma shows that even if a user's ideal point $u \in \RR^d$ falls outside of $V$, for items in $V$, there is a phantom ideal point $u_V \in V$ such that preference comparisons for items in $V$ generated by $u$ and $u_V$ are equivalent.

\begin{restatable}{lemma}{phantomreduction} \label{lem:phantom}
    Let $V$ be an $r$-dimensional subspace of $\RR^d$ with a canonical representation given by $B \in \RR^{d \times r}$. Fix any Mahalanobis distance $M \in \Sym^+(\RR^d)$, any pair of items $x, x' \in \RR^d$, and ideal point $u \in \RR^d$. Suppose that $x$ and $x'$ are contained in $V$ with canonical representation $x_V = B^\top x$ and $x'_V = B^\top x'$ in $\RR^r$. Then:
    \[\psi_M\big(x,x'; u\big) = \psi_Q\big(x_V^{\phantom{\prime}}, x'_V; u_V^{\phantom{\prime}}\big),\]
    where the phantom ideal point $u_V$ of $u$ on $V$ satisfies $(B^\top M B) u_V = B^\top M u$, and $Q = \Pi_V(M)$ is the matrix representation in $\Sym^+(V)$ of the subspace metric $\rho\big|_V$. 
\end{restatable}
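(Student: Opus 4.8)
The plan is to expand both sides of the claimed identity directly using the definition of $\psi$ in Eq.~\eqref{eqn:unquantized}, and show that the difference in squared distances depends on $u$ only through the quantity $B^\top M u$, which is exactly what the phantom point $u_V$ is designed to reproduce. Concretely, I would first write out the right-hand side: since $Q = B^\top M B$ and $x_V = B^\top x$, we have $\|x_V - u_V\|_Q^2 = (B^\top x - u_V)^\top (B^\top M B)(B^\top x - u_V)$. Expanding the quadratic form gives three types of terms: a term $x^\top B B^\top M B B^\top x$ quadratic in $x$, a cross term $-2\, x^\top B (B^\top M B) u_V$, and a term $u_V^\top (B^\top M B) u_V$ independent of the item. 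Doing the same for $x'_V$ and subtracting, the item-independent term $u_V^\top Q u_V$ cancels, leaving
\[
\psi_Q(x_V, x'_V; u_V) = x^\top B B^\top M B B^\top x - {x'}^\top B B^\top M B B^\top x' - 2(x - x')^\top B (B^\top M B) u_V.
\]

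Next I would simplify using the hypothesis $x, x' \in V$, so that $B B^\top x = x$ and $B B^\top x' = x'$ (since $B$ is an orthonormal basis of $V$ and $x,x'$ lie in $V$). This collapses the quadratic terms to $x^\top M x - {x'}^\top M {x'}$, matching the quadratic part of the left-hand side $\psi_M(x,x';u) = \|x-u\|_M^2 - \|x'-u\|_M^2 = x^\top M x - {x'}^\top M x' - 2(x-x')^\top M u$. It then remains to match the linear-in-$u$ terms: I must show $(x-x')^\top B (B^\top M B) u_V = (x-x')^\top M u$. Here I invoke the defining equation of the phantom point, $(B^\top M B) u_V = B^\top M u$, so the left side becomes $(x-x')^\top B B^\top M u$, and again using $B B^\top(x - x') = x - x'$ because $x - x' \in V$, this equals $(x-x')^\top M u$, as desired.

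A couple of points deserve care. First, the phantom point $u_V$ is well-defined precisely because $Q = B^\top M B$ is positive-definite (it is the representation of a genuine subspace Mahalanobis distance, as noted before the lemma via $\Pi_V$ applied to $M \in \Sym^+(\RR^d)$), hence invertible, so $u_V = (B^\top M B)^{-1} B^\top M u$ exists and is unique. Second, the only place the assumption $x, x' \in V$ is used is to replace $B B^\top$ acting on $x$, $x'$, and $x - x'$ by the identity; this is the crux of why dimensions orthogonal to $V$ can be ignored for items in $V$ even when $u \notin V$. I do not anticipate a genuine obstacle here — the lemma is essentially a bookkeeping computation — but the one spot to be attentive is keeping the projection identities straight: $B^\top B = I_r$ always, whereas $B B^\top = P_V$ is the orthogonal projection onto $V$, which acts as the identity only on vectors already in $V$.
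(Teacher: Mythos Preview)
Your proposal is correct and follows essentially the same approach as the paper: both proofs expand $\psi$, use that $BB^\top$ acts as the identity on vectors in $V$, and invoke the defining equation $(B^\top M B)u_V = B^\top M u$ to match the cross terms. The only cosmetic difference is that the paper organizes the computation through the pseudo-ideal points $v = -2Mu$ and $v_V = -2Qu_V$ (showing $v_V = B^\top v$) and the bilinear form of Eq.~\eqref{eqn:unquantized-prime}, whereas you expand the quadratic forms $\|\cdot\|_M^2$ and $\|\cdot\|_Q^2$ directly; the underlying algebra is identical.
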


Consequently, learning a subspace metric $\rho\big|_V$ turns into a problem of metric learning from preference comparisons in $\RR^r$. From here, we can simply use existing algorithms to recover the matrix representation of the subspace metric. By \cite{canal2022one}, it is possible to identify the subspace metric so long as users can each provide $m \geq \Omega(r)$ preference comparisons. For this easier problem of learning $\rho \big|_V$, when $r \ll d$, we can do with $o(d)$ responses per user.

In the remainder of this section, we give a simple characterization for when a Mahalanobis distance on $V$ can be learned from preference comparisons of items on $V$. The set of items needs to be sufficiently rich so that all degrees of freedom of $\Sym(V) \oplus V$ can be captured. We define:

\begin{definition}
    \label{def:quadratic-span}
    Let $V$ be a subspace of $\RR^d$ with canonical representation given by $B$. A subset $\Xcal_V \subset V$ \emph{quadratically spans} $V$ if $\Sym(V) \oplus V$ is linearly spanned by the set:
    \[\big\{ (x_V^{\phantom{\top}}x_V^\top - x'_V{x^{\prime \top}_V}) \oplus (x - x') : x, x' \in \Xcal_V\big\},\]
    where $x_V = B^\top x$ and $x'_V = B^\top x'$ denote the canonical representations of $x$ and $x'$ in $V$.
\end{definition}

If we have no restriction on how many queries we can ask a user, then it is straightforward to see that quadratic spanning is a sufficient condition for recovering the underlying metric. For simplicity, let $V = \RR^d$. If $\Xcal$ quadratically spans $\RR^d$, then we can detect all dimensions of $M \oplus v$ corresponding to the Mahalanobis matrix and a user's pseudo-ideal point. To do so, choose any design matrix $D : \Sym(\RR^d) \oplus \RR^d \to \RR^m$ whose rows $\{\Delta_i \oplus \delta_i : i \in [m]\}$ span $\Sym(\RR^d) \oplus \RR^d$.

When the number of queries is limited per user, the following result shows that the quadratic spanning condition is still sufficient for recovering $\rho\big|_V$, provided we can ask many users $m \geq \mathrm{dim}(V) + 1$ unquantized preference queries. 

\begin{restatable}{proposition}{quadsufficiency}
    \label{prop:quadsufficiency}
    Let $\Xcal$ quadratically span a subspace $V$ of dimension $r$. There exists a collection $D_1,\ldots, D_K$ of design matrices, each over $m$ pairs of items in $\Xcal$, such that given a (distinct) user's response to each design, $\rho\big|_V$ can be identified when $m \geq r+1$ and $K \geq r(r+1)/2$.
\end{restatable}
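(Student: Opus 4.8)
The plan is to show that even with only $m=r+1$ queries per user, each user can be coaxed into contributing exactly one linear equation on the subspace metric, once that user's ideal point (anyway unrecoverable with so few queries) has been algebraically eliminated, and that $K=r(r+1)/2$ such equations can be arranged to pin down $\rho\big|_V$. First I would pass to coordinates on $V$. Let $Q=\Pi_V(M)=B^\top M B$, the matrix representation of $\rho\big|_V$ in $\Sym^+(V)$; it is positive definite because $M$ is and $B$ has full column rank. By Lemma~\ref{lem:phantom}, the unquantized response of a user with ideal point $u$ to a pair $x,x'\in V$ equals $\psi_Q(x_V,x'_V;u_V)$ for the phantom ideal point $u_V\in V$; reparametrizing by the pseudo-ideal point $w:=-2Qu_V\in\RR^r$, the response is $\langle\Delta_i,Q\rangle+\langle\delta_i,w\rangle$ with $\Delta_i=x_V (x_V)^\top-x'_V (x'_V)^\top$ and $\delta_i=x_V-x'_V$ --- precisely the design setup of Definition~\ref{def:design}, now in $\Sym(\RR^r)\oplus\RR^r$. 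So it suffices to recover $Q$.

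\emph{One constraint per user.} For a design $D_k$ over $m=r+1$ pairs, since $r+1$ vectors $\delta_1,\dots,\delta_{r+1}$ in $\RR^r$ must be linearly dependent, I can pick a nonzero $c_k\in\RR^{r+1}$ with $\sum_i c_{k,i}\delta_i=0$; then
\[ \textstyle\sum_i c_{k,i}\,D_k(Q,w_k)_i \;=\; \big\langle A_k,Q\big\rangle,\qquad A_k:=\sum_i c_{k,i}\Delta_i\in\Sym(\RR^r), \]
which is a linear functional of $Q$ with the nuisance $w_k$ eliminated. Thus user $k$'s responses reveal the scalar $b_k:=\langle A_k,Q\rangle$. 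Since $\dim\Sym(\RR^r)=r(r+1)/2$, it then remains to choose the designs so that $A_1,\dots,A_K$ span $\Sym(\RR^r)$ for some $K=r(r+1)/2$, after which $Q$ is the unique solution of the linear system $\{\langle A_k,Q\rangle=b_k\}_{k\in[K]}$, and $\rho\big|_V$ is identified.

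\emph{Using quadratic spanning to make the $A_k$ span.} Let $S=\{(x_V (x_V)^\top-x'_V (x'_V)^\top)\oplus(x_V-x'_V):x,x'\in\Xcal\}\subset\Sym(V)\oplus V$; quadratic spanning means $\linearspan(S)=\Sym(V)\oplus V$, so in particular the $\delta$-parts alone span $V\cong\RR^r$. I would fix reference pairs $s_1,\dots,s_r\in S$ whose $\delta$-parts form a basis of $\RR^r$, and for each $s_0=\Delta_0\oplus\delta_0\in S$ use the design $\{s_0,s_1,\dots,s_r\}$; writing $\delta_0=\sum_i a_i\delta_i$, the dependency is $c_k=(1,-a_1,\dots,-a_r)$ and the induced constraint is $A(s_0):=\Delta_0-\sum_i a_i\Delta_i$. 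The key observation is that $A(s_0)\oplus 0=P(s_0)$, where $P$ is the projection of $\Sym(V)\oplus V$ onto $\Sym(V)\oplus\{0\}$ along $\linearspan(s_1,\dots,s_r)$ --- a genuine projection because these subspaces have complementary dimensions $r(r+1)/2$ and $r$ and meet only at $0$ (a combination of the $s_i$ with vanishing $\delta$-part vanishes, the $\delta_i$ being independent). Hence $\linearspan\{A(s_0):s_0\in S\}=P(\linearspan S)=P(\Sym(V)\oplus V)=\Sym(V)$, so I can select $K=r(r+1)/2$ pairs $s_0^{(1)},\dots,s_0^{(K)}$ with $A(s_0^{(k)})$ linearly independent (necessarily distinct from $s_1,\dots,s_r$, for which $A(s_j)=0$), and take $D_k=\{s_0^{(k)},s_1,\dots,s_r\}$, each answered by a distinct user. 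For $m>r+1$ I would pad each design with extra copies of $s_1$ and keep $c_k$ supported on the original $r+1$ coordinates so $A_k$ is unchanged; for $K>r(r+1)/2$ I would append redundant designs.

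\emph{Expected main obstacle.} The reduction and the one-constraint-per-user step are routine (Lemma~\ref{lem:phantom} plus a one-line observation and a rank count). The crux is the last step: converting the \emph{global} hypothesis that $\Xcal$ quadratically spans $V$ into the statement that per-user constraints, each assembled from only $r+1$ pairs drawn from the \emph{same} set $\Xcal$, collectively span all of $\Sym(V)$. The identity $A(s_0)\oplus 0=P(s_0)$ is what makes this clean --- it identifies the set of achievable constraints with $\ker(\pi_\delta|_{\linearspan S})=\Sym(V)\oplus\{0\}$ and shows the conclusion does not depend on which reference pairs $s_1,\dots,s_r$ are fixed. A minor point to verify is the bookkeeping that every constructed design is honestly a list of $m$ item pairs from $\Xcal$ (repeated pairs being permitted and harmless here), but this is immediate.
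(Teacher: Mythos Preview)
Your proposal is correct and follows essentially the same construction as the paper: fix $r$ reference pairs with linearly independent $\delta$-parts, give each user those $r$ pairs plus one additional pair, and use the resulting linear dependence among the $\delta_i$'s to eliminate the pseudo-ideal point and extract one linear constraint on $Q$ per user. Your verification that the constraints $A_k$ span $\Sym(V)$ via the projection $P$ onto $\Sym(V)\oplus\{0\}$ along $\linearspan(s_1,\dots,s_r)$ is a cleaner packaging of the same linear algebra the paper does by contradiction (assuming a dependence among the $A_k$'s and deriving a nontrivial dependence among the full $\Delta_i\oplus\delta_i$'s), but the underlying idea is identical.
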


To complement this sufficient condition, the next result shows that if $\Xcal$ does not quadratically span $V$, then the subspace metric $\rho\big|_V$ cannot be recovered from only preference comparisons of items in $\Xcal \cap V$.

\begin{restatable}{proposition}{quadnecessary}
\label{prop:quadnecessary}
    Let $(D_k)_{k \in \NN}$ be a set of design matrices over items in $\Xcal \subset V$. If $\Xcal$ does not quadratically span $V$, then infinitely many Mahalanobis distances on $V$ are consistent with any set of user responses to the design matrices.
\end{restatable}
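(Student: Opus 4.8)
The plan is to work entirely in the canonical representation of $V$: identify each $x \in \Xcal$ with $x_V = B^\top x \in \RR^r$, so that a Mahalanobis distance on $V$ is a matrix in $\Sym^+(\RR^r)$ and a user with pseudo-ideal point $v_k \in \RR^r$ contributes the unquantized values $\langle \Delta^{(k)}_i \oplus \delta^{(k)}_i,\, Q \oplus v_k\rangle$ over pairs drawn from $\Xcal$, exactly as in Eq.~\eqref{eqn:unquantized-prime} and Definition~\ref{def:design}. Set
\[
W \;=\; \linearspan\big\{\, \big(x_V^{\phantom\top} x_V^\top - x'_V (x'_V)^\top\big) \oplus (x_V - x'_V)\ :\ x, x' \in \Xcal \,\big\} \;\subseteq\; \Sym(\RR^r) \oplus \RR^r .
\]
By hypothesis $\Xcal$ does not quadratically span $V$, i.e.\ $W \subsetneq \Sym(\RR^r) \oplus \RR^r$, so $W^\perp \neq \{0\}$; and since every row of every design matrix $D_k$ is one of the generators of $W$, we have $D_k(A, w) = 0$ for all $(A, w) \in W^\perp$ and all $k$.

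The heart of the argument is to produce an element of $W^\perp$ whose \emph{matrix} component is nonzero — otherwise one would only learn that some pseudo-ideal point is unrecoverable, which is weaker than the claim. Suppose toward a contradiction that $W^\perp \subseteq \{0\} \oplus \RR^r$; then $W \supseteq (\{0\} \oplus \RR^r)^\perp = \Sym(\RR^r) \oplus \{0\}$, so in particular $I \oplus 0$ is a finite linear combination $\sum_j \lambda_j (G^{\Sym}_j \oplus G^{\RR^r}_j)$ of generators, with $\sum_j \lambda_j G^{\Sym}_j = I$ and $\sum_j \lambda_j G^{\RR^r}_j = 0$. Pick any $0 \neq w_0$ with $0 \oplus w_0 \in W^\perp$: orthogonality to every generator forces $\langle x_V - x'_V, w_0 \rangle = 0$ for all $x, x' \in \Xcal$, i.e.\ $\langle x_V, w_0 \rangle$ equals a fixed constant $c$ over $\Xcal$, whence $\big(x_V x_V^\top - x'_V (x'_V)^\top\big) w_0 = c\,(x_V - x'_V)$ for each pair, i.e.\ $G^{\Sym}_j w_0 = c\, G^{\RR^r}_j$. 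Applying $\sum_j \lambda_j(\cdot)$ to $w_0$ then gives $w_0 = I w_0 = c \sum_j \lambda_j G^{\RR^r}_j = 0$, a contradiction. Hence some $A \oplus w \in W^\perp$ has $A \neq 0$.

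It remains to assemble the family of consistent metrics. Let $Q \in \Sym^+(\RR^r)$ and $(v_k)_{k \in \NN}$ be the true subspace metric and pseudo-ideal points. Since $\Sym^+(\RR^r)$ is open, fix $\varepsilon > 0$ with $Q + tA \in \Sym^+(\RR^r)$ for all $|t| < \varepsilon$; for each such $t$ set $v_k(t) = v_k + t w$ — every vector of $\RR^r$ is a legitimate pseudo-ideal point, realized by $u_k(t) = -\tfrac12 (Q + tA)^{-1} v_k(t)$ — so that $D_k\big(Q + tA,\, v_k(t)\big) = D_k(Q, v_k) + t\, D_k(A, w) = D_k(Q, v_k)$ for every $k$. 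Thus each $Q + tA$ with $|t| < \varepsilon$ induces a Mahalanobis distance on $V$ consistent with the entire set of user responses (and, since the raw measurements coincide, with any quantization of them as well), and since $A \neq 0$ these are pairwise distinct — infinitely many, as claimed. I expect the only non-routine step to be the middle paragraph, ruling out the degenerate possibility that the unidentifiable direction lives purely among the $v_k$'s; the opening reduction and the closing perturbation are routine bookkeeping with the direct-sum inner product and the openness of $\Sym^+$.
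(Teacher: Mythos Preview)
Your proof is correct and follows the same overall strategy as the paper's: locate a nonzero element of the orthogonal complement of the span of generators, then perturb the true metric along its symmetric-matrix component while absorbing the vector component into the pseudo-ideal points. The construction of the consistent family $\{Q + tA : |t| < \varepsilon\}$ and the openness-of-$\Sym^+$ argument match the paper's $\{M + \lambda M_\perp\}$ construction essentially line for line, modulo your choice to work in the canonical coordinates on $\RR^r$ rather than lifting back to $\RR^d$.

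Your middle paragraph, however, does more than the paper. The paper simply picks any nonzero $Q_\perp \oplus v_\perp$ orthogonal to the generators and sets $M_\perp = B Q_\perp B^\top$, tacitly assuming $Q_\perp \neq 0$; but if the orthogonal complement happened to lie entirely in $\{0\} \oplus \RR^r$, the perturbation $M + \lambda M_\perp$ would be constant in $\lambda$ and the ``infinitely many distinct metrics'' conclusion would not follow. You correctly recognize this as the non-routine step and give a clean argument ruling it out: from $W \supseteq \Sym(\RR^r) \oplus \{0\}$ you extract a representation of $I \oplus 0$ by generators, and combining the identity $G^{\Sym}_j w_0 = c\, G^{\RR^r}_j$ (which drops out of the constant-inner-product observation) with $\sum_j \lambda_j G^{\RR^r}_j = 0$ forces $w_0 = 0$. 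This fills a genuine gap that the paper's proof leaves open, so your version is in fact the more complete of the two.
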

Proofs for the above results are deferred to Appendix~\ref{appendix:exact-low-rank-subspace}.

\subsection{Learning with subspace-clusters}
\label{sec:exact-subspace-clusters}

\begin{algorithm}[t]
\SetAlgoLined

\DontPrintSemicolon
\KwIn{Unquantized measurements over items that lie in a union of subspaces $V_\lambda, \lambda \in \Lambda$\;}

\tcp{Stage 1: learning subspace metrics}
\For{each subspace $\lambda \in \Lambda\vphantom{q\big|_{V_\lambda}}$}{

Recover $\hat{Q}_\lambda \in \Sym(\RR^{r_\lambda})$ with respect to $B_\lambda$ via reduction to Algorithm~\ref{alg:multiple-user} \citep{canal2022one}\;
}

\tcp{Stage 2: reconstruction}
Solve the linear equations over $A \in \Sym(\RR^d)$:
\begin{algomathdisplay} 
B_\lambda^\top A B_\lambda = \hat{Q}_\lambda, \quad \lambda \in \Lambda\;
\end{algomathdisplay}

\KwOut{$\hat{A}$, the solution to the above linear equations.}
\caption{Metric learning from subspace clusters}
\label{alg:exact-stitch}
\end{algorithm}

We have seen how to partially learn a Mahalanobis distance given many items within a subspace. We now consider how to fully recover the metric when many items lie in a \emph{union of subspaces} $(V_\lambda)_{\lambda \in \Lambda}$. In this case, a divide-and-conquer approach is intuitive: (i) recover each subspace metric, then (ii) reconstruct $\rho$ from the learned subspace metrics. 
Recall that each subspace metric $\rho \big|_{V}$ is related to the full metric $\rho$ by the linear map $\Pi_{V}$ from Definition~\ref{def:linear-projection}. Therefore, we can reconstruct $\rho$ from its subspace metrics by solving a system of linear equations. Algorithm~\ref{alg:exact-stitch} summarizes this approach.

In order to characterize when a Mahalanobis distance can be reconstructed from its subspace metrics, we introduce the notion of subspace-clusterability. 
A set of items $\Xcal$ is subspace-clusterable when many of its items lie on sufficiently many item-rich subspaces. Formally:
\begin{definition} \label{def:subspace-clusterable}
    A set $\Xcal \subset \RR^d$ is \emph{subspace-clusterable} over subspaces $V_\lambda \subset \RR^d$ indexed by $\lambda \in \Lambda$ whenever:
    \begin{enumerate}
        \item each subset $\Xcal \cap V_\lambda$ quadratically spans $V_\lambda$.
        \item $\cbr{xx^\top: x \in V_\lambda, \lambda \in \Lambda}$ linearly spans $\Sym(\RR^d)$.
    \end{enumerate}
\end{definition}
By Propositions~\ref{prop:quadsufficiency} and \ref{prop:quadnecessary}, the first condition is necessary and sufficient for recovering each subspace metric $\rho\big|_{V_\lambda}$.
Proposition~\ref{prop:reconstruction-equiv} shows that the second condition is necessary and sufficient for recovering $\rho$ from subspace metrics.

\begin{restatable}{proposition}{reconstructionequiv} \label{prop:reconstruction-equiv}
    Let $\rho$ be a Mahalanobis distance on $\RR^d$. Let $(V_\lambda)_{\lambda \in \Lambda}$ be a collection of subspaces with canonical representations given by the orthonormal bases $(B_\lambda)_{\lambda \in \Lambda}$. The following are equivalent:
    \begin{enumerate}

        \item ${\cbr{xx^\top: x \in V_\lambda, \lambda \in \Lambda}}$ spans $\Sym(\RR^d)$.

        \item Let $\Pi_{V_\lambda}$ be given by \Cref{eqn:proj-V}. The linear map $\Pi : \Sym(\RR^d) \to \displaystyle\bigoplus_{\lambda \in \Lambda} \Sym(V_\lambda)$ is injective, where: 
        \[\displaystyle \Pi(A) = \bigoplus_{\lambda \in \Lambda} \Pi_{V_\lambda}(A).\] 
    
        \item If $\hat{\rho}$ is a Mahalanobis distance such that $\hat{\rho}\big|_{V_\lambda} = \rho\big|_{V_\lambda}$ for all $\lambda \in \Lambda$, then $\hat{\rho} = \rho$.

    \end{enumerate}
\end{restatable}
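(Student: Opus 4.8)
The plan is to prove the equivalences by first establishing $(1)\iff(2)$ through a clean duality argument, and then bridging the matrix-level statement $(2)$ with the metric-level statement $(3)$ using the linear parametrization from \Cref{sec:linear-parameterization}. Throughout I would use the basic identity $\langle xx^\top, A\rangle = x^\top A x$ for $A \in \Sym(\RR^d)$.

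For $(1)\iff(2)$: since a subset of a finite-dimensional inner product space spans iff its orthogonal complement is trivial, condition $(1)$ is equivalent to saying that any $A \in \Sym(\RR^d)$ with $x^\top A x = 0$ for all $x \in V_\lambda$ and all $\lambda \in \Lambda$ must vanish. Writing $x = B_\lambda y$ with $y \in \RR^{r_\lambda}$, one has $x^\top A x = y^\top(B_\lambda^\top A B_\lambda)y = y^\top \Pi_{V_\lambda}(A)\, y$, and by the polarization identity over $\RR$ (e.g. test against $y = e_i$ and $y = e_i + e_j$), $x^\top A x = 0$ for all $x\in V_\lambda$ is equivalent to $\Pi_{V_\lambda}(A) = 0$. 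Hence $(1)$ says exactly that $\Pi_{V_\lambda}(A) = 0$ for every $\lambda$ forces $A = 0$, i.e.\ $\ker \Pi = \{0\}$, which is $(2)$. Note this step is unaffected by $\Lambda$ being infinite, since $\Sym(\RR^d)$ is finite-dimensional.

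For $(2)\Rightarrow(3)$: let $M, \widehat{M} \in \Sym^+(\RR^d)$ be the matrix representations of $\rho$ and $\widehat{\rho}$. By \Cref{prop:metric-to-matrix}, the matrix representation in $\Sym^+(V_\lambda)$ of $\rho\big|_{V_\lambda}$ is $\Pi_{V_\lambda}(M)$, and likewise $\Pi_{V_\lambda}(\widehat{M})$ for $\widehat{\rho}\big|_{V_\lambda}$; since subspace Mahalanobis distances on $V_\lambda$ correspond one-to-one with elements of $\Sym^+(V_\lambda)$, the hypothesis $\widehat{\rho}\big|_{V_\lambda} = \rho\big|_{V_\lambda}$ for all $\lambda$ is equivalent to $\Pi_{V_\lambda}(\widehat{M} - M) = 0$ for all $\lambda$, i.e.\ $\widehat{M} - M \in \ker\Pi$. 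By $(2)$ this gives $\widehat{M} = M$ and hence $\widehat{\rho} = \rho$. For $(3)\Rightarrow(2)$ I would argue by contrapositive: if $\ker\Pi \neq \{0\}$, pick $0 \neq A \in \ker\Pi$; since positive-definiteness is an open condition, $\widehat{M} := M + \varepsilon A \in \Sym^+(\RR^d)$ for small $\varepsilon > 0$ and $\widehat{M} \neq M$, yet $\Pi_{V_\lambda}(\widehat{M}) = \Pi_{V_\lambda}(M)$ for every $\lambda$, so the corresponding Mahalanobis distance $\widehat{\rho}$ agrees with $\rho$ on every $V_\lambda$ while $\widehat{\rho}\neq\rho$, contradicting $(3)$.

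Most of this is routine linear algebra; the step I expect to need the most care is the bridge between $(3)$ and $(2)$ — correctly invoking \Cref{prop:metric-to-matrix} together with the bijection between subspace Mahalanobis distances and positive-definite matrices on $\RR^{r_\lambda}$, so that ``the subspace metrics coincide'' genuinely translates to ``$\Pi_{V_\lambda}$ of the difference vanishes'' rather than just ``$\Pi_{V_\lambda}$ of the difference is indefinite/degenerate.'' The only other thing to keep honest is the polarization step, which uses that the scalar field is $\RR$ (characteristic not $2$).
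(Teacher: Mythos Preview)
Your proposal is correct and follows essentially the same approach as the paper: both hinge on the duality between the span of $\{xx^\top\}$ and $\ker\Pi$, and both bridge $(2)\leftrightarrow(3)$ via \Cref{prop:metric-to-matrix} together with a small perturbation in $\Sym^+(\RR^d)$. Your packaging of $(1)\iff(2)$ as a single orthogonal-complement argument (with polarization) is a bit tidier than the paper's two separate directions, and your $(3)\Rightarrow(2)$ perturbs the given $M$ rather than building two auxiliary matrices $c^{-1}A+I$, which keeps the fixed $\rho$ in the statement explicitly in play---but these are cosmetic differences, not a different route.
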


See Appendix~\ref{appendix:exact-subspace-clusters} for the proof. This proposition verifies the correctness of Algorithm~\ref{alg:exact-stitch}. Let $Q_\lambda \in \Sym^+(V)$ represent $\smash{\rho\big|_{V_\lambda}}$. Then, step 3 of the algorithm specifies that $\Pi_{V_\lambda}(A) = Q_\lambda$. If $\Pi$ is injective, then the only matrix $A \in \Sym(\RR^d)$ consistent with the system of linear equations is the one that represents $\rho$. 

\begin{remark}
\label{rmk:min_subspaces}
We can compute the number of subspaces required to identify $\rho$ using Proposition~\ref{prop:reconstruction-equiv}. For example, when $\dim(V_\lambda) = 1$ for each $\lambda \in \Lambda$, each subspace captures one degree of freedom of $\rho$, so $|\Lambda| \ge \frac{d(d+1)}{2}$ is necessary. See Figure~\ref{fig:geometric-intuition-a} in Appendix~\ref{appendix:exact} for geometric intuition.
\end{remark}

\section{Approximate recovery from binary responses}
\label{sec:approx_recovery}

Previously, we studied metric learning from unquantized preference comparisons of the form $(x, x', \psi)$. We now consider a more realistic setting where we obtain binary responses of the form $(x, x', y)$, where $y \in \{-1, +1\}$. Furthermore, we assume that responses are quantized and noisy, where noise can depend on the user and items, as in \citep{mason2017learning, xu2020simultaneous, canal2022one}. 

For our divide-and-conquer approach, due to the inexactness of the responses, we can no longer expect to exactly identify each subspace metric. However, we show that as long as each subspace metric can be recovered approximately, then they can be stitched together to approximately recover the full metric (Theorem~\ref{thm:recovery_guarantee}). And indeed, approximate recovery in each subspace is known to be possible. In Proposition~\ref{prop:recovery_hat_Q}, we present a version of Theorem 4.1 of \cite{canal2022one} adapted to subspaces; this gaurantee is provided under a probabilistic noise model that we describe shortly.

\paragraph{Divide-and-conquer algorithm} Algorithm~\ref{alg:two-stage-binary} generalizes our earlier algorithm for unquantized measurements. As before, say we have obtained measurements for a set of items subspace-clusterable over $(V_\lambda)_\lambda$. In the first stage, we recover the subspace metrics on each $V_{\lambda}$. Lemma~\ref{lem:phantom} reduces metric learning on subspaces to metric learning on $\RR^{r}$, where $r$ is the dimension of the subspace, so we can call existing methods for metric learning from binary responses across users (\cite{canal2022one} or Algorithm~\ref{alg:multiple-user-binary}). Thus, we obtain an estimator $\hat{Q}_\lambda$ for each subspace metric $Q_\lambda$.

In the second stage, we approximately reconstruct $M$ from the estimators $\hat{Q}_\lambda$. When each $\hat{Q}_\lambda$ was exact, we could just solve the linear system of equations $\Pi_{V_\lambda}(\hat{M}) = \hat{Q}_\lambda$. As this is no longer the case, we instead compute the ordinary least squares estimator $\smash{\hat{M}_{\ols}}$, which minimizes $\sum_{\lambda} \|\hat{Q}_\lambda - \Pi_{V_\lambda}(A)\|^2$ over $A \in \Sym(\RR^d)$ in Eq.~\eqref{eq:least_squares} of Algorithm~\ref{alg:two-stage-binary}. Finally, we ensure that the reconstructed matrix corresponds to a pseudo-metric by solving a linear program to project $\smash{\hat{M}_{\ols}}$ onto the cone of positive semi-definite matrices \citep{boyd2004convex}.

\begin{algorithm}[t]
\SetAlgoLined
\DontPrintSemicolon

\KwIn{Quantized measurements over items that lie in a union of subspaces $V_\lambda, \lambda \in \Lambda$\;}

\tcp{Stage 1: learning subspace metrics}
\For{each $\lambda \in \Lambda$}{
    Recover $\hat{Q}_{\lambda} \in \Sym(\RR^{r_\lambda})$ with respect to $B_\lambda$ via reduction to Algorithm \ref{alg:multiple-user-binary} \citep{canal2022one} \;\label{line:canal}
}

\tcp{Stage 2: reconstruction}
Use ordinary least squares to solve the linear regression problem over $A \in \Sym(\RR^d)$: \label{line:least-squares}
\begin{align}
    \hat{M}_{\ols} \gets \argmin_{A \in \Sym(\RR^d)} \ \sum_{\lambda \in \Lambda} \left\| \hat{Q}_\lambda -  B_{\lambda}^\top A B_\lambda \right\|_\mathrm{F}^2\; \label{eq:least_squares}
\end{align}

Project $\hat{M}_{\ols}$ onto the set of positive semidefinite $d \times d$ matrices by solving the convex optimization problem:
\begin{align}
    \hat{M} \gets \argmin_{A \succeq 0} \ \big\| A - \hat{M}_{\ols} \big\|_{\mathrm{F}}^2 \label{eq:proj_to_pd}\;
\end{align}

\vspace*{-\baselineskip}
\KwOut{$\hat{M}$.}

\caption{Metric learning from binary responses}
\label{alg:two-stage-binary}
\end{algorithm}

\subsection{Recovery guarantees}

\paragraph{Reconstruction guarantee} The following theorem gives a recovery guarantee on the full metric, given approximate recovery for each subspace metric, $\|\hat{Q}_\lambda - Q_\lambda\|_\mathrm{F} \le \varepsilon$ for some $\varepsilon > 0$. See Appendix~\ref{appendix:reconstruction-proof} for proof.

\begin{restatable}{theorem}{upperbound}
\label{thm:recovery_guarantee}
Let $\RR^d$ have a Mahalanobis distance with matrix representation $M \in \Sym^+(\RR^d)$.
Let $\Xcal \subset \RR^d$ be subspace-clusterable over subspaces $V_\lambda$ indexed by $\lambda \in \Lambda$, where $|\Lambda| = n$. Let $\hat{M}$ be the estimator of $M$ and let $\hat{Q}_\lambda$ be the estimator of the subspace metric $Q_\lambda$ for each $\lambda$ learned from Algorithm~\ref{alg:two-stage-binary}.
Suppose there exist $\gamma \le \varepsilon$ such that $\big \|\EE \big[ \hat{Q}_\lambda \big] - Q_\lambda \big \|_{\mathrm{F}} \le \gamma$ and $\big \|\hat{Q}_\lambda - Q_\lambda \big\|_{\mathrm{F}} \le \varepsilon$ for each $\lambda$. 
Fix $p \in (0,1]$. Then, there is a universal constant $c > 0$ such that with probability at least $1 - p$,
\begin{align*}
\big \|\hat{M} - M \big \|_{\mathrm{F}} \le c \cdot {\frac{1}{\sigma_{\min}(\Pi)} \rbr{ \gamma \sqrt{n} + \varepsilon d \sqrt{ {\log \frac{2d}{p}}} }}, 
\end{align*}
where $\sigma_{\min} > 0$ is the least singular value of $\Pi$.
\end{restatable}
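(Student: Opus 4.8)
The plan is to reduce everything to controlling the least squares iterate $\hat M_{\ols}$ from Eq.~\eqref{eq:least_squares}, split its error into a bias term and a mean-zero noise term, and bound each separately; the whole point of the argument is a single variance computation that replaces the naive factor $\sqrt n$ by $d$ in the noise term.

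First I would discard the projection step~\eqref{eq:proj_to_pd}: since $M \in \Sym^+(\RR^d)$ is positive semidefinite and Euclidean projection onto the closed convex PSD cone is nonexpansive in Frobenius norm, $\|\hat M - M\|_\mathrm{F} \le \|\hat M_{\ols} - M\|_\mathrm{F}$, so it suffices to bound the right-hand side. Next, by $\Xcal$ being subspace-clusterable (condition~2 of Definition~\ref{def:subspace-clusterable}) and Proposition~\ref{prop:reconstruction-equiv}, the linear map $\Pi = \bigoplus_{\lambda} \Pi_{V_\lambda}\colon \Sym(\RR^d) \to \bigoplus_\lambda \Sym(V_\lambda)$ is injective, so $\sigma_{\min}(\Pi) > 0$ and Eq.~\eqref{eq:least_squares} has the unique minimizer $\hat M_{\ols} = \Pi^+ \hat Q$, where $\hat Q := \bigoplus_\lambda \hat Q_\lambda$, $\Pi^+ = (\Pi^*\Pi)^{-1}\Pi^*$, and $\Pi^+\Pi$ is the identity on $\Sym(\RR^d)$. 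Writing $Q := \bigoplus_\lambda Q_\lambda = \Pi(M)$, this gives $\hat M_{\ols} - M = \Pi^+(\hat Q - Q)$. I then decompose $\hat Q - Q = \xi + b$ into the mean-zero fluctuation $\xi_\lambda := \hat Q_\lambda - \EE[\hat Q_\lambda]$ and the bias $b_\lambda := \EE[\hat Q_\lambda] - Q_\lambda$. The bias contribution is immediate: $\|\Pi^+ b\|_\mathrm{F} \le \sigma_{\min}(\Pi)^{-1}\sqrt{\sum_\lambda \|b_\lambda\|_\mathrm{F}^2} \le \gamma\sqrt n/\sigma_{\min}(\Pi)$, using $|\Lambda| = n$ and $\|b_\lambda\|_\mathrm{F} \le \gamma$; this is the first term of the claimed bound.

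The crux is the noise term $\Pi^+\xi = \sum_\lambda \Pi^+\iota_\lambda \xi_\lambda$, where $\iota_\lambda$ is the inclusion of the $\lambda$-th summand and the matrices $Y_\lambda := \Pi^+\iota_\lambda\xi_\lambda \in \Sym(\RR^d)$ are independent (the $\hat Q_\lambda$ come from disjoint user pools) and mean-zero. From the hypotheses, $\|\xi_\lambda\|_\mathrm{F} \le \|\hat Q_\lambda - Q_\lambda\|_\mathrm{F} + \|\EE[\hat Q_\lambda] - Q_\lambda\|_\mathrm{F} \le \varepsilon + \gamma \le 2\varepsilon$ almost surely, hence $\|Y_\lambda\|_\mathrm{F} \le 2\varepsilon/\sigma_{\min}(\Pi)$. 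The key estimate is on the total variance: since $\sum_\lambda \iota_\lambda\iota_\lambda^* = \mathrm{id}$ and $\Pi^+(\Pi^+)^* = (\Pi^*\Pi)^{-1}$,
\[
\sum_\lambda \|\Pi^+\iota_\lambda\|^2 \;\le\; \sum_\lambda \operatorname{tr}\!\big(\Pi^+\iota_\lambda\iota_\lambda^*(\Pi^+)^*\big) \;=\; \operatorname{tr}\!\big((\Pi^*\Pi)^{-1}\big) \;\le\; \frac{d(d+1)/2}{\sigma_{\min}(\Pi)^2},
\]
using $\|C\| \le \operatorname{tr}(C)$ for PSD $C$ and that $\Pi^*\Pi$ has $\dim \Sym(\RR^d) = d(d+1)/2$ eigenvalues, each $\ge \sigma_{\min}(\Pi)^2$. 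Therefore $\sum_\lambda \EE\|Y_\lambda\|_\mathrm{F}^2 \le 4\varepsilon^2 \sum_\lambda \|\Pi^+\iota_\lambda\|^2$ is of order $\varepsilon^2 d^2/\sigma_{\min}(\Pi)^2$ — whereas the trivial bound $\sum_\lambda\|\Pi^+\iota_\lambda\|^2 \le n/\sigma_{\min}(\Pi)^2$ would only yield $\varepsilon\sqrt n$. Feeding the per-term bound $2\varepsilon/\sigma_{\min}(\Pi)$ and this variance proxy into a Bernstein-type tail inequality for a sum of independent bounded mean-zero elements of the inner product space $\Sym(\RR^d)$ (the Hilbert-space Bernstein inequality, or matrix Bernstein applied to the symmetric dilations of $\vectorize(Y_\lambda)$) gives, with probability at least $1-p$,
\[
\|\Pi^+\xi\|_\mathrm{F} \;\le\; c\left( \frac{\varepsilon d}{\sigma_{\min}(\Pi)}\sqrt{\log\tfrac{2d}{p}} \;+\; \frac{\varepsilon}{\sigma_{\min}(\Pi)}\log\tfrac{2d}{p}\right),
\]
the first term dominating in the regime of interest (and otherwise absorbable). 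Combining this with the bias estimate and the reduction to $\hat M_{\ols}$ yields the theorem.

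The main obstacle is exactly the variance computation: recognizing that $\sum_\lambda \|\Pi^+\iota_\lambda\|^2$ should be controlled by $\operatorname{tr}\big((\Pi^*\Pi)^{-1}\big)$ — which is $O(d^2/\sigma_{\min}(\Pi)^2)$ — rather than by the crude $n\,\|\Pi^+\|^2$; this is what converts the $\sqrt n$ a naive analysis produces into the $d$ appearing in the statement. Secondary points requiring care are the independence of the $\hat Q_\lambda$ across subspaces (needed so the sum concentrates) and that the almost-sure bound $\|\hat Q_\lambda - Q_\lambda\|_\mathrm{F} \le \varepsilon$ may be used inside the concentration estimate; the remaining ingredients — nonexpansiveness of the PSD projection, the pseudoinverse identities, and the bias bound — are routine.
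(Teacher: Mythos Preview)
Your proposal is correct and follows essentially the same route as the paper: reduce to $\hat M_{\ols}$, split $\Pi^+(\hat Q - Q)$ into bias and mean-zero noise, and control the noise term via the key identity $\sum_\lambda \|\Pi^+\iota_\lambda\|_{\mathrm{HS}}^2 = \operatorname{tr}\big((\Pi^*\Pi)^{-1}\big) \le d(d+1)/(2\sigma_{\min}(\Pi)^2)$. The paper phrases this identically through its $2$-Schatten norm decomposition $\sum_\lambda \|\Pi^+_\lambda\|_2^2 = \|\Pi^+\|_2^2$ together with $\|\Pi^+\|_2 \le \sqrt{\operatorname{rank}(\Pi^+)}/\sigma_{\min}(\Pi) \le d/\sigma_{\min}(\Pi)$; your trace computation is the same statement.

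Two minor deviations are worth flagging. First, the paper handles the PSD projection via the triangle inequality $\|\hat M - M\|_\mathrm{F} \le 2\|\hat M_{\ols} - M\|_\mathrm{F}$, whereas your nonexpansiveness argument is cleaner and saves the factor~$2$. Second, the paper applies a Hoeffding-style matrix concentration (Jin, 2019) with per-term bounds $\kappa_\lambda = \varepsilon\,\|\Pi^+_\lambda\|_2$, which directly yields the single term $c\,\varepsilon d\sqrt{\log(2d/p)}/\sigma_{\min}(\Pi)$. Your Bernstein route produces the additional term $\varepsilon\log(2d/p)/\sigma_{\min}(\Pi)$, and this is \emph{not} absorbable into the first uniformly over $p\in(0,1]$ with a universal constant (it dominates once $\log(2d/p) > d^2$). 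To match the theorem exactly, swap Bernstein for Hoeffding using your own almost-sure bounds and trace computation for $\sum_\lambda \kappa_\lambda^2$; nothing else changes.
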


\begin{remark}
\label{rmk:reconstruction-bound}
This recovery guarantee depends on three parameters: 
(1) $\sigma_{\min}(\Pi)$ captures how well-spread the set $\{xx^\top: x \in V_{\lambda}, \lambda \in \Lambda \}$ is across $\Sym(\RR^d)$.
(2) $\varepsilon$ bounds the recovery error for each subspace metric; it decreases as the number of pairwise comparisons per user increases (Remark~\ref{rmk:recovery-subspace}). 
(3) $\gamma$ bounds the bias of the estimator $\hat{Q}_\lambda$. It can be the dominating term in the recovery bound, for example when $\sigma_\mathrm{min}(\Pi) \gg d$. While this bias term $\gamma \leq \varepsilon$ can be made arbitrarily small with enough comparisons per user, for data-starved regimes, bias reduction can also be applied in practice (e.g.\@ \cite{firth1993bias}).
\end{remark}

\paragraph{Recovery guarantee for subspace metrics}
For completeness, we adapt the setting and results of \citep{canal2022one} to provide a recovery guarantee for learning each subspace metric. We assume the same probabilistic model:

\begin{assumption}[Probabilistic model] \label{ass:probabilistic}
    Let $M \in \Sym^+(\RR^d)$ be the matrix representation of the Mahalanobis distance, let $v_1,\ldots, v_K \in \RR^d$ be the pseudo-ideal points for a collection of users, and let $\Xcal \subset \RR^d$ be a set of items. We assume:
    \begin{align*}
    \|M\|_\mathrm{F} \leq \zeta_M,\qquad \|v_k\| \leq \zeta_v,\qquad \sup_{x \in \Xcal} \|x\| \leq 1, 
    \end{align*}
    for $\zeta_M, \zeta_v > 0$. When asked to compare two items $x$ and $x'$, the $k$th user provides a binary response $Y$ with: 
    \begin{align}
    \Pr[Y = y] = f\big(y \cdot \psi_M(x,x'; u_k)\big), \label{eq:noise_model}
    \end{align}
    where $f : \RR \to [0,1]$ is a strictly increasing \emph{link function} such that $f(z) = 1 - f(-z)$, and where $u_k$ is the corresponding ideal point. On the domain $|z| \leq 2(\zeta_M + \zeta_v)$, let $f$ have lower bounded derivative $f'(z) \geq c_f$ and let the map $z \mapsto - \log f(z)$ have Lipschitz constant $L$.
\end{assumption}
\begin{remark}
The simple noise model on binary responses from Eq.~\eqref{eq:noise_model} reflects human psychology \citep{coombs1964theory,revelle2009introduction}. When presented with two items to compare, our response is less noisy when a clear preference ranking exists. Conversely, when we are ambivalent between the two items, our response tends to be more random. 
\end{remark}
Algorithm~\ref{alg:multiple-user-binary} estimates $(M, v_1,\ldots, v_K)$ by using the users' measurements to construct an optimization program over the parameters; when the loss function supplied to the algorithm is $\ell(z) = - \log f(z)$, the procedure is equivalent to maximum likehlihood estimation. As noted above, it suffices to consider learning Mahalanobis distances on $\RR^r$. The following proposition proves correctness of Algorithm~\ref{alg:multiple-user-binary}. 

\begin{restatable}[Theorem 4.1, \cite{canal2022one}]{proposition}{subspacerecovery}
\label{prop:recovery_hat_Q}
Suppose that $\RR^r$ has a Mahalanobis distance with representation $Q \in \Sym^+(\RR^r)$ where $\|Q\|_\mathrm{F} \leq \zeta_M$. Let each user $k \in [K]$ have pseudo-ideal point $v_k\in \RR^r$ where $\|v_k\|_2 \leq \zeta_v$. Let $\Pcal_m$ be a distribution over designs of size $m$ over $\RR^r$ (Definition~\ref{def:design}). For each user, let $D_k \sim \Pcal_m$ be an i.i.d.\@ random design, and let $\Dcal_k = \{(x_{i_0}, x_{i_1}, y_{i;k})\}_{i \in [m]}$ be the user's responses under Assumption~\ref{ass:probabilistic}. Fix $p \in (0,1]$. Given loss function $\ell(z) = - \log f(z)$, Algorithm~\ref{alg:multiple-user-binary} returns $\hat{Q} \in \Sym^+(\RR^r)$, where with probability at least $1 - p$,
\[\|\hat{Q} - Q\|_\mathrm{F}^2 \leq \frac{16L}{c_f^2 \cdot \sigma_{\mathrm{min}}^2(\Pcal_m)}\sqrt{\frac{(\zeta_M^2 + K\zeta_v^2) \log \frac{4}{p}}{mK}}.\]
\end{restatable}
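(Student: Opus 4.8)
The statement is Theorem~4.1 of \cite{canal2022one} with the ambient dimension $d$ replaced by the subspace dimension $r$, so the plan is to reproduce their empirical-risk-minimization argument; I outline the steps. First I would collect the true parameters into the joint variable $\theta^\star = (Q, v_1,\ldots,v_K)$, noting that by Assumption~\ref{ass:probabilistic} it lies in the constraint set $\Theta = \{(A,w_1,\ldots,w_K) : \norm{A}_\mathrm{F}\le\zeta_M,\ \norm{w_k}\le\zeta_v\} \subset \Sym(\RR^r)\oplus(\RR^r)^K$. Writing $z_{i;k} = \Delta_{i;k}\oplus\delta_{i;k}$ for user $k$'s design features (notation of Definition~\ref{def:design}) and $\theta_k = A\oplus w_k$ for the $k$th coordinate block of a parameter, the output of Algorithm~\ref{alg:multiple-user-binary} is the minimizer $\hat\theta$ over $\Theta$ of the empirical risk $\hat L(\theta) = \frac{1}{mK}\sum_{k\in[K]}\sum_{i\in[m]}\ell\rbr{y_{i;k}\langle z_{i;k},\theta_k\rangle}$; since $\ell = -\log f$ this is the MLE. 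Let $L(\theta) = \EE[\hat L(\theta)]$ be the population risk; because $\ell$ is a proper loss and $f$ is strictly increasing, $L$ is minimized at $\theta^\star$ (any minimizer $\theta$ obeys $\langle z,\theta_k\rangle = \langle z,\theta^\star_k\rangle$ almost surely under $\Pcal_m$, which by $\sigma^2_{\mathrm{min}}(\Pcal_m)>0$ forces $\theta=\theta^\star$).

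Next I would run the standard two-sided excess-risk argument. \emph{Upper bound:} from $L(\hat\theta)-L(\theta^\star)\le 2\sup_{\theta\in\Theta}\abs{L(\theta)-\hat L(\theta)}$, apply symmetrization, a contraction step using that $z\mapsto-\log f(z)$ is $L$-Lipschitz on $\abs{z}\le 2(\zeta_M+\zeta_v)$ (which, via Cauchy--Schwarz, the constraints of $\Theta$, and $\norm{x}\le 1$, covers every realized argument, since $\norm{\Delta_{i;k}}_\mathrm{F}\le 2$ and $\norm{\delta_{i;k}}\le 2$), and a bounded-differences concentration step; the linear function class $\{\theta\mapsto(\langle z_{i;k},\theta_k\rangle)_{i,k}:\theta\in\Theta\}$ has Rademacher complexity of order $\sqrt{(\zeta_M^2+K\zeta_v^2)/(mK)}$ (squared radius of $\Theta$ over the total sample size $mK$), giving, with probability at least $1-p$,
\[
L(\hat\theta) - L(\theta^\star) \ \le\ c_1 L\sqrt{\frac{(\zeta_M^2+K\zeta_v^2)\log\frac{4}{p}}{mK}}.
\]
\emph{Lower bound (restricted strong convexity):} for a fixed design feature $z$ and block $k$, the contribution to $L(\theta)-L(\theta^\star)$ is the KL divergence between the Bernoullis with parameters $f(\langle z,\theta^\star_k\rangle)$ and $f(\langle z,\theta_k\rangle)$, which by Pinsker and $f'\ge c_f$ on the relevant window is at least $2c_f^2\langle z,\theta_k-\theta^\star_k\rangle^2$; averaging over $k$ and taking the $\Pcal_m$-expectation with $\tfrac1m\EE[D^\ast D]\succeq\sigma^2_{\mathrm{min}}(\Pcal_m)I$ yields
\[
L(\theta) - L(\theta^\star) \ \ge\ 2c_f^2\,\sigma^2_{\mathrm{min}}(\Pcal_m)\cdot\frac1K\sum_{k\in[K]}\norm{\theta_k-\theta^\star_k}^2 \ \ge\ 2c_f^2\,\sigma^2_{\mathrm{min}}(\Pcal_m)\,\norm{\hat Q-Q}_\mathrm{F}^2,
\]
since every block difference $\theta_k-\theta^\star_k$ contains the shared component $\hat Q-Q$. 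Chaining the two bounds and absorbing the product of absolute constants into $16$ gives the claim, and I would finish by noting that the PSD-cone projection performed inside Algorithm~\ref{alg:multiple-user-binary} is non-expansive and hence does not increase the distance to $Q\in\Sym^+(\RR^r)$.

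\paragraph{Main obstacle.} The conceptual content is light, since this is essentially a restatement, but the step demanding the most care is reconciling the \emph{growing} constraint set with the \emph{growing} sample size: the nuisance pseudo-ideal points $v_1,\ldots,v_K$ inflate the radius of $\Theta$ linearly in $K$, yet the final rate must still improve with $K$. This goes through because the Rademacher complexity scales as $\sqrt{\text{radius}^2/\text{samples}}=\sqrt{(\zeta_M^2+K\zeta_v^2)/(mK)}$, and because the strong-convexity lower bound is separable across the $K$ blocks, so after averaging it transfers cleanly to the common block $\hat Q-Q$. The secondary point requiring attention is that the link regularity in Assumption~\ref{ass:probabilistic} (the derivative lower bound $f'\ge c_f$ and the Lipschitz constant $L$ of $-\log f$) holds only on $\abs{z}\le 2(\zeta_M+\zeta_v)$; checking that every argument passed to $f$ or $\ell$ throughout the proof stays in this window is precisely why the norm constraints $\norm{A}_\mathrm{F}\le\zeta_M$, $\norm{w_k}\le\zeta_v$, and $\norm{x}\le 1$ are imposed.
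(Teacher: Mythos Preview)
Your proposal is correct and follows essentially the same route as the paper: an excess-risk sandwich, with the lower bound coming from KL $\to$ Pinsker $\to$ the derivative lower bound $f'\ge c_f$ $\to$ the design condition $\sigma_{\min}^2(\Pcal_m)$, and the upper bound coming from bounded differences, symmetrization, Lipschitz contraction, and the Rademacher complexity of the linear class scaling as $\sqrt{(\zeta_M^2+K\zeta_v^2)/(mK)}$. The only cosmetic discrepancy is your closing remark about a ``PSD-cone projection performed inside Algorithm~\ref{alg:multiple-user-binary}'': there is no separate projection there---the constraint $A\succeq 0$ is part of the feasible set over which the convex program is solved---so that sentence can simply be dropped.
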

The proof of Proposition~\ref{prop:recovery_hat_Q} is deferred to Appendix~\ref{appendix:subspace-recovery-proof}.

\begin{remark} \label{rmk:recovery-subspace}
We can simplify the bound if we assume that $M$ has bounded entries, say $\|M\|_\infty \leq 1$. Let's also assume that user ideal points are contained in the unit ball, so that $\|u_k\|_2 \le 1$ for each user. Then, we can set $\zeta_M \le r$ and $\zeta_v \le 2\sqrt{r}$ since $v_k = -2Mu_k$. Remark~\ref{rmk:min-singular-Pm} shows that given access to a subspace-clusterable set of items, we can construct a sequence of random designs $(\Pcal_m)_m$ over those items such that $\sigma_\mathrm{min}^2(\Pcal_m) = \Omega(1)$. Suppressing the confidence parameter $p$, we obtain the recovery guarantee:
\[\|\hat{Q} - Q\|_\mathrm{F}^2 = \Ocal\rbr{\sqrt{\frac{r^2 + Kr}{mK}}}.\]
\end{remark}

\begin{figure*}[p]
    \begin{subfigure}{0.32\textwidth}
        \centering
        \includegraphics[height=1.0\linewidth]{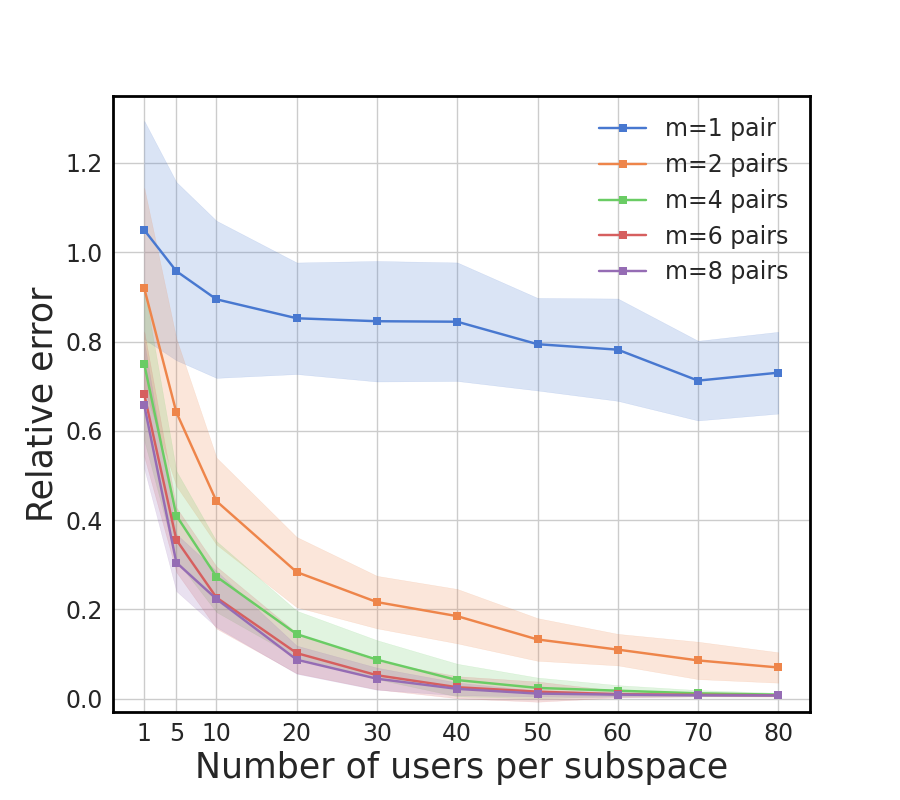}
        \caption{\label{figure:beta1-exp1-logistic}}
    \end{subfigure}
    \hskip 4pt
    \begin{subfigure}{0.32\textwidth}
        \centering
        \includegraphics[height=1.0\linewidth]{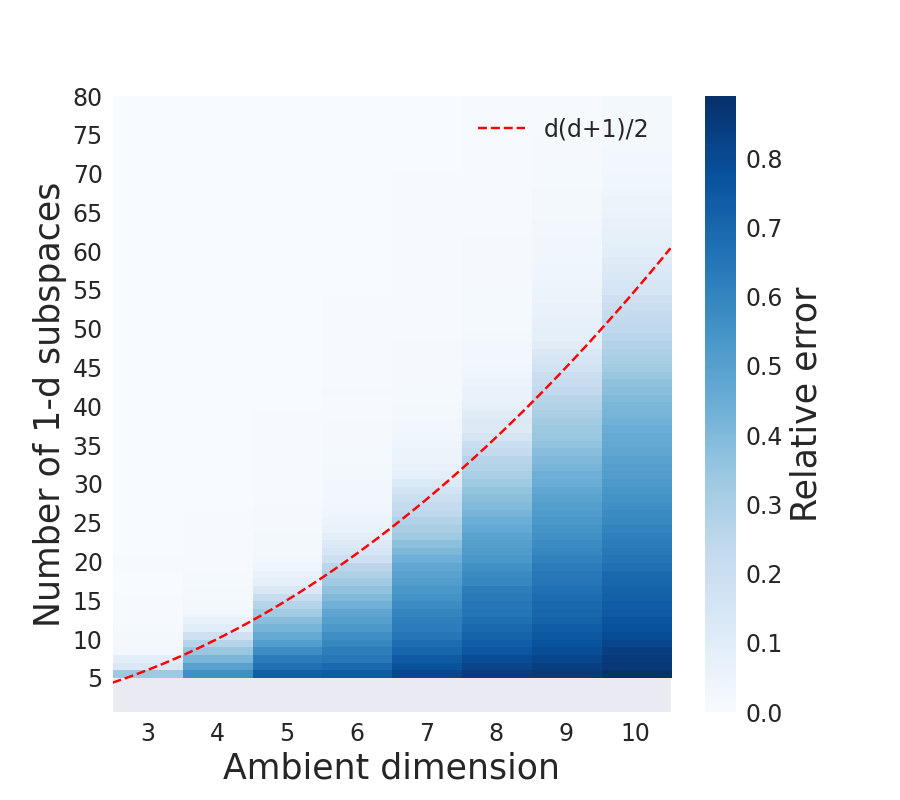}
        \caption{\label{figure:beta1-exp2-logistic}}
    \end{subfigure}
    \hskip 4pt
    \begin{subfigure}{0.32\textwidth}
        \centering
        \includegraphics[height=1.0\linewidth]{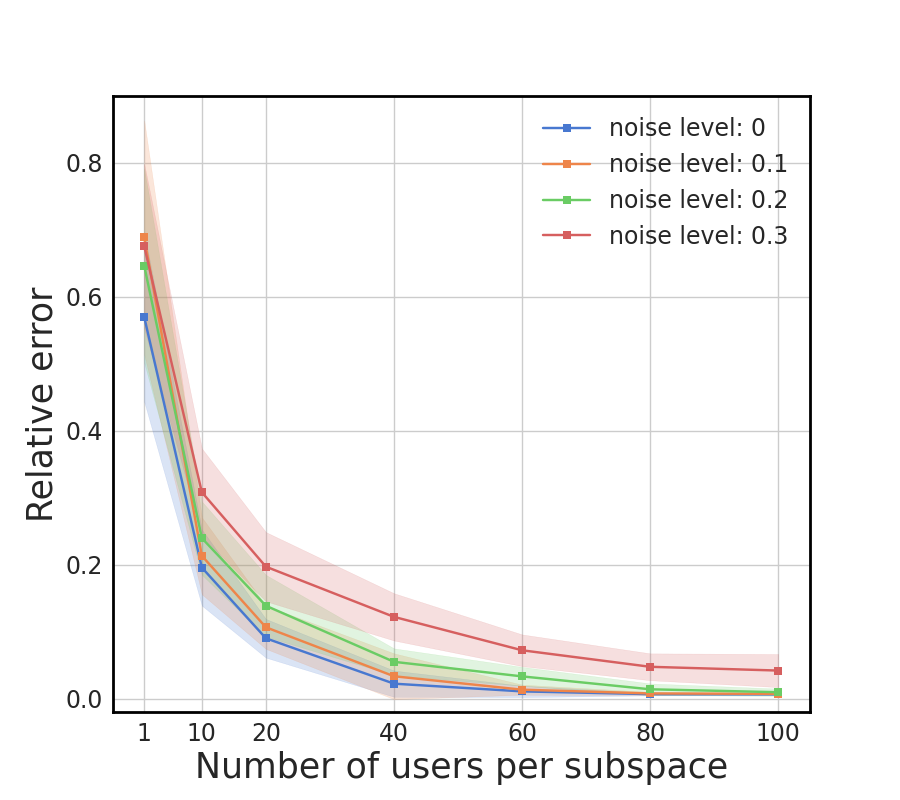}
        \caption{\label{figure:beta1-exp3-logistic}}
    \end{subfigure}
    \caption{(a) shows the average relative errors for varying numbers of users per subspace and preference comparisons per user, where items lie in a union of $80$ $1$-dimensional subspaces of $\RR^{10}$. (b) shows the average relative errors given increasing numbers of $1$-dimensional subspaces to reconstruct $\hat{M}$; for each subspace, $60$ users each provides $4$ preference comparisons. The dotted red curve illustrates the dimension-counting argument in Remark~\ref{rmk:min_subspaces}. (c) shows the average relative errors for varying subspace noise levels, where items lie approximately in a union of $80$ $1$-dimensional subspaces of $\RR^{10}$; each user provides $8$ preference comparisons. The error bars in (a) and (c) represent one standard deviation from the mean.}
    \label{figure:beta1-logistic}
\end{figure*}

\begin{figure*}[p]
\begin{subfigure}{\textwidth}
    \includegraphics[height=0.32\linewidth]{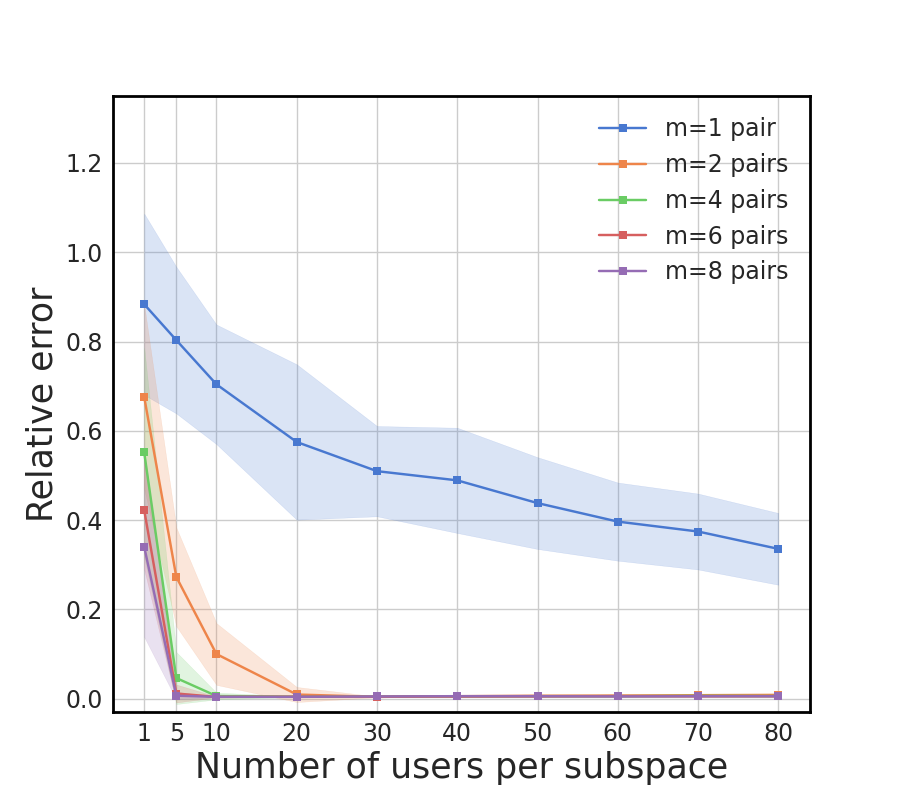}
    \hskip -15pt
    \includegraphics[height=0.32\linewidth]{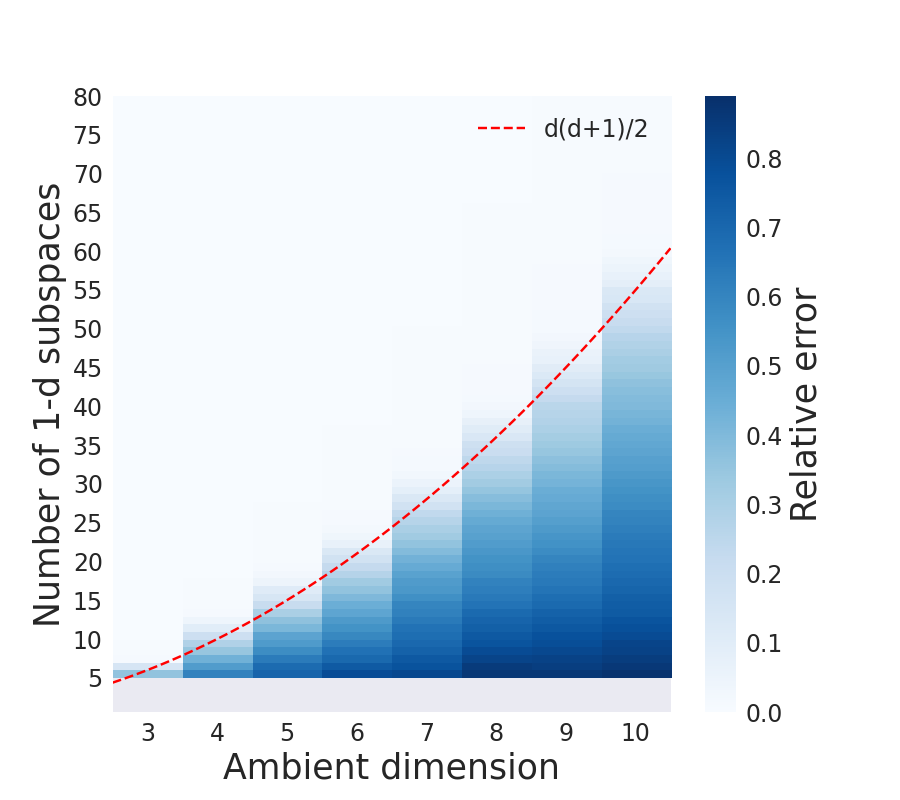}
    \hskip -15pt
    \includegraphics[height=0.32\linewidth]{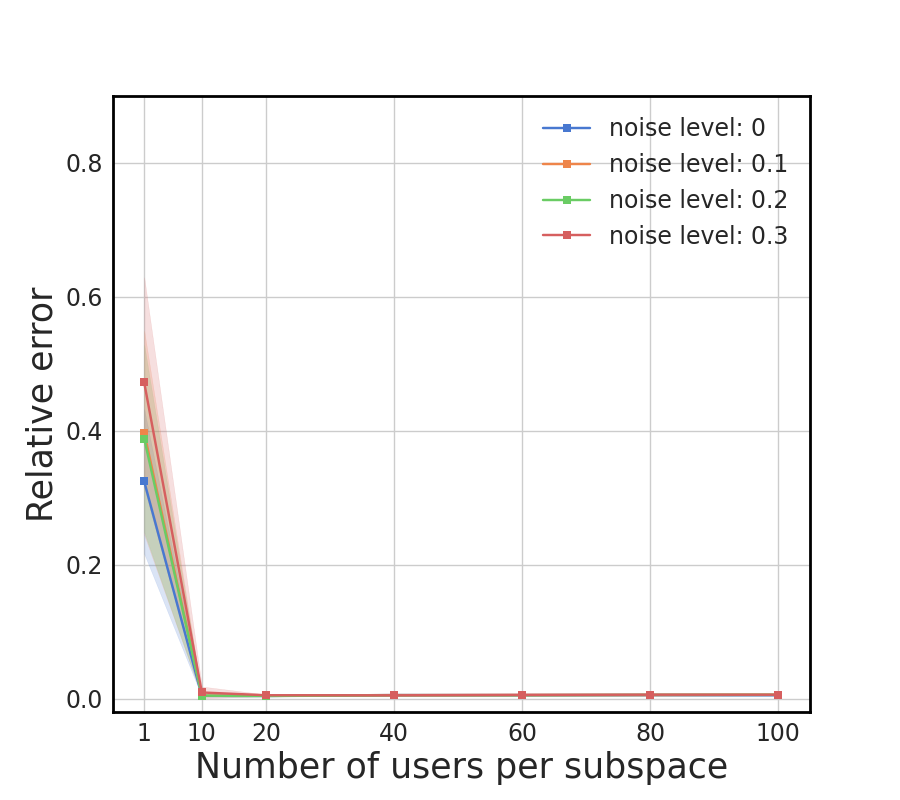}
    \caption{Medium noise $(\beta = 4)$}
    \label{beta4}
\end{subfigure}
\begin{subfigure}{\textwidth}
    \includegraphics[height=0.32\linewidth]{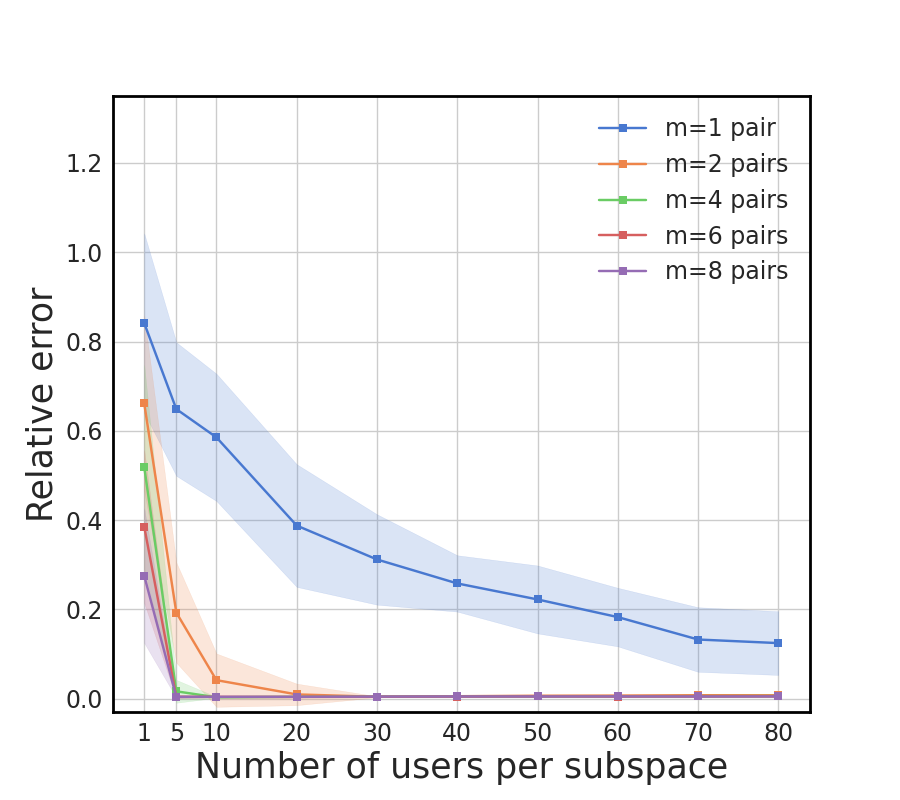}
    \hskip -15pt
    \includegraphics[height=0.32\linewidth]{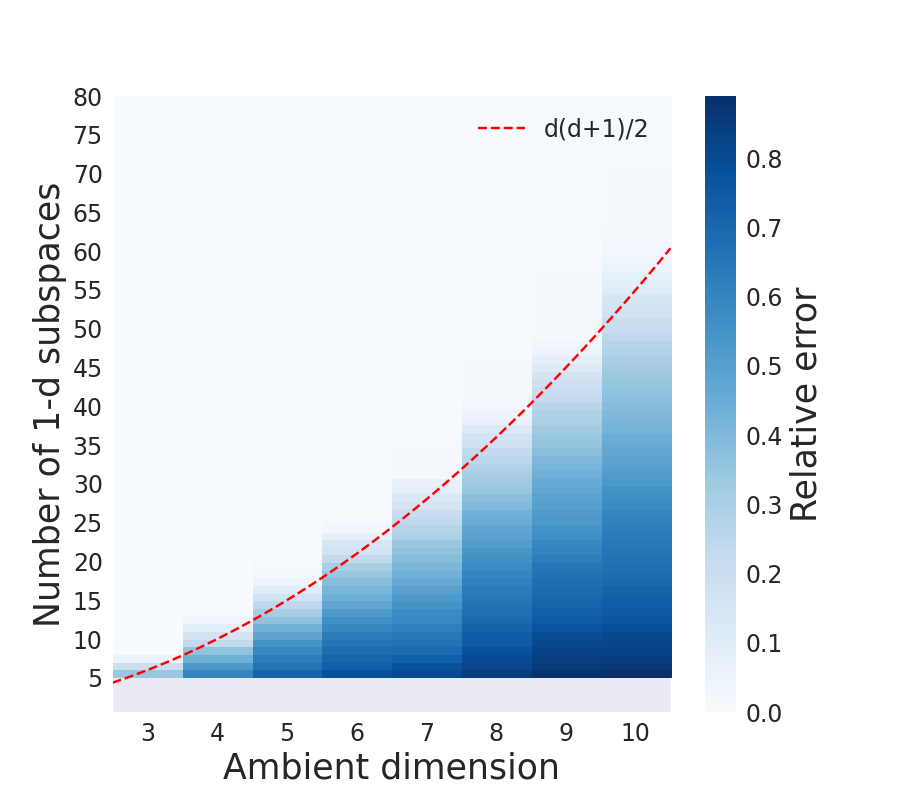}
    \hskip -15pt
    \includegraphics[height=0.32\linewidth]{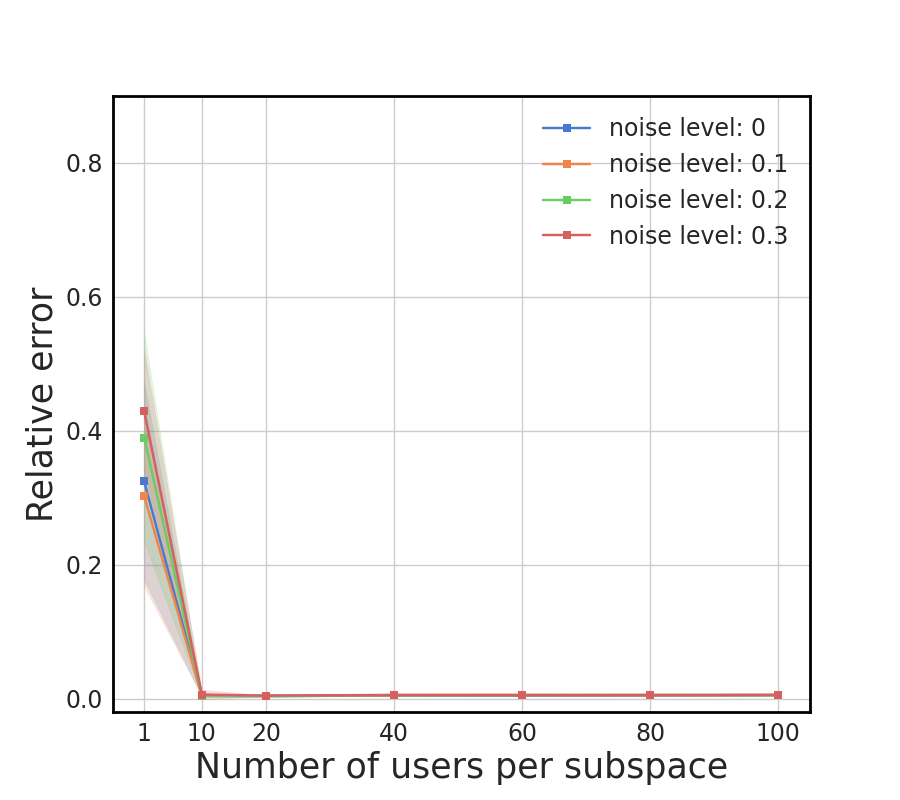}
    \caption{Noiseless $(\beta = \infty)$}
    \label{noiseless}
\end{subfigure}
\caption{shows the results obtained from the three experiments with a misspecified response model. The learner is agnostic to the response noise level used to generate the data ($\beta = 4$ and $\beta = \infty$) and assumes $\beta = 1$ when recovering subspace metrics before stitching them together.}
\label{figure:logistic-other-response-noises}
\end{figure*}

\section{Empirical Validation}
\label{sec:experiments}

In this section, we empirically validate our findings using synthetic data\footnote{Our code is available at \url{https://github.com/zhiwang123/metric-learning-lazy-crowds}.}.
We aim to address the following questions:
\begin{enumerate}
    \item Given limited noisy, quantized preference comparisons on subspace-clusterable items, can our proposed divide-and-conquer algorithm recover an unknown metric $M$?

    \item Does the performance of our algorithm improve if we have access to more subspace-clusters, users or preference comparisons per user?    

    \item When items in $\Xcal$ lie {\em approximately} in a union of subspaces, can we still recover $M$?
\end{enumerate}

\paragraph{Experimental setup} For each run, we generate a random ground-truth metric $M \in \Sym^+(\RR^d)$ from the standard Wishart distribution $W(I_d, d)$, a collection of uniform-at-random $r$-dimensional subspaces \citep{stewart1980efficient}, and a set of user ideal points drawn i.i.d.\@ from the Gaussian $\Ncal(0, \frac{1}{d} I_d)$. Within each subspace, items are drawn i.i.d.\@ from $\Ncal(0, \frac{1}{r} BB^\top)$, where $B \in \RR^{d \times r}$ is an orthonormal basis of that subspace.
Given a user and a pair of items, a binary response is sampled according to the probabilistic model in Assumption~\ref{ass:probabilistic},
\[\Pr[Y = y] = f\big(y \cdot \psi\big),\]
where $f$ is a link function. We select the logistic sigmoid link function, $f(z; \beta) = 1/\rbr{1 + \exp(-\beta z)}$ in our experiments. By varying $\beta$, we can generate binary responses that interpolate between uncorrelated random noise ($\beta = 0$) and noiseless quantized measurments ($\beta = \infty$). Further details can be found in Appendix~\ref{sec:exp_details}.

We use our divide-and-conquer approach to learn the metric (Algorithm~\ref{alg:two-stage-binary}). To approximate the subspace metrics, this method calls Algorithm~\ref{alg:multiple-user-binary} (Stage 1, line~\ref{line:canal}). It does so by performing maximum likelihood estimation on the correctly-specified probabilistic model. When stitching the subspaces together, we observed that Huber regression \citep{huber1964robust,scikit-learn} generally leads to better performance over least squares regression within Algorithm~\ref{alg:two-stage-binary} (Stage 2, line~\ref{line:least-squares}).  In the following, we report results obtained using this robust variant of linear regression. We evaluate the learned metric $\hat{M}$ by its relative error, $\|\hat{M} - M\|_{\mathrm{F}} / \|M\|_{\mathrm{F}}$.  

We ran three experiments each for $30$ runs, where we set the subspace dimension to $r = 1$ and we set $\beta = 1$ in the logistic sigmoid link function.

\paragraph{Experiment 1: Relative error vs number of comparisons}
In the first experiment, we set the ambient dimension to $d = 10$ and generated data that lie in a union of $80$ subspaces (by Remark~\ref{rmk:min_subspaces}, at least $\dim(\Sym(\RR^{10})) = 55$ subspaces are needed for recovery).
We ran Algorithm~\ref{alg:two-stage-binary} for different combinations of $K$ and $m$, where $K$ is the number of users per subspace and $m$ is the number of preference comparisons per user.
Figure~\ref{figure:beta1-exp1-logistic} compares the average relative errors for varying $K$ and $m$. 
This experiment shows that with more preference comparisons, recovery within each subspace improves and we achieve better recovery of the full metric; this supports Theorem~\ref{thm:recovery_guarantee} and Proposition~\ref{prop:recovery_hat_Q}. This experiment also suggests that given $1$-dimensional subspaces, even asking for only two measurements per user is sufficient to achieve good empirical performance for metric recovery.

\begin{samepage}
\paragraph{Experiment 2: Relative error vs number of subspaces}
In the second experiment, we set $K = 60$ and $m = 4$. For ambient dimensions $d = 3, 4, \ldots, 10$, we consider the relative error for reconstructing $\hat{M}$ using an increasing number of subspaces, $n = 5,6,\ldots,80$. Figure~\ref{figure:beta1-exp2-logistic} shows the average relative errors. For each $d$, average relative error decreases as $n$ increases. Furthermore, even in this non-idealized setting where users provide noisy, binary responses, we can obtain non-trivial relative error when the number of subspaces $n$ exceeds the information-theoretic bound ${d(d+1)}/{2}$. This corroborates the dimension-counting argument in Remark~\ref{rmk:min_subspaces} beyond unquantized measurements. 
\end{samepage}

\paragraph{Experiment 3: Recovery when items approximately lie in subspaces}
In the third experiment, we empirically study how our approach works when the subspace clusterable assumption only approximately holds. 
For a subspace $V$, we sample items near $V$ from $\smash{\Ncal(0, \frac{1}{r} BB^\top + \frac{\sigma^2}{d - r} B_\perp^{\vphantom{\top}} B_\perp^\top)}$, where $\sigma > 0$ is a given noise level, $B \in \RR^{d \times r}$ and $B_\perp \in \RR^{d \times (d - r)}$ are orthonormal bases of $V$ and its orthogonal complement $V^\perp$, respectively. The way user preference responses are generated remains the same as before.
For each subspace $V$, we preprocess the items by running singular value decomposition on the nearby items to recover an $r$-dimensional subspace $\hat{V}$. We project these items to $\hat{V}$, before running Algorithm~\ref{alg:two-stage-binary} with these approximate representations. 
We set $d = 10$ and $m = 8$. 
For each subspace noise level $\sigma$, we ran our approach on items that lie approximately in $80$ subspaces for varying $K$; Figure~\ref{figure:beta1-exp3-logistic} shows the average relative errors. When the noise level $\sigma$ is low, we can still recover the metric well. 
This may break down as $\sigma$ increases; indeed, when $\sigma =1$, there is no subspace structure at all. 

\paragraph{Learning with a misspecified response model} Recall that the subspace metrics are learned via maximum likelihood estimation (Algorithm~\ref{alg:multiple-user-binary}, line~\ref{line:canal}). In this section, we investigate how sensitive this approach is to a misspecified response model. To do so, we repeated the above three experiments with a fixed model that assumes $\beta = 1$. However, we generated the data from mis-matched response noise levels: (a) $\beta = 4$, corresponding to the ``medium'' setting in \citep{canal2022one}, and (b) $\beta = \infty$, which is the noiseless setting where $y = -1$ if the corresponding unquantized measurement is less than $0$ and $y = +1$ otherwise. Figure~\ref{figure:logistic-other-response-noises} shows that the learner performs well even without the correct knowledge of $\beta$. In Appendix~\ref{appendix:experiments}, we show that the approach still achieves reasonable performance when the negative log loss (to compute the maximum likelihood) is replaced with the hinge loss in Algorithm~\ref{alg:multiple-user-binary} (Figure~\ref{figure:hinge}). This shows that the learner may not need to know that probabilistic model under which user binary responses are generated.

See also Appendix~\ref{sec:additional_exp_results} for additional experimental results with subspace dimension $r = 2$.

\section{Conclusion and future work}
We studied crowd-based metric learning from very few preference comparisons per user. In general, we showed nothing can be learned. However, when the items exhibit low-rank subspace-clusterable structure, we proposed a divide-and-conquer approach and provided recovery guarantees. Interestingly, this work suggests that when training of foundation models, there is reason to favor learning general-purpose representations with low-rank structures, as this may reduce the cost of downstream fine-tuning and alignment.  

Our experiments show that even when the items do not exactly lie on the subspaces, but instead only exhibit approximate subspace structure, our method can still recover the metric. We leave establishing theoretical gurantees for this setting for future work. Our results has implications for alignment of representations from foundation models to human preferences and we defer building an algorithmic framework that finds subspace clusters before learning metrics, and evaluating it with real-world item embeddings and human preference feedback for future work.

\section{Acknowledgements}
ZW thanks Kamalika Chaudhuri for helpful discussions and the National Science Foundation under IIS 1915734 for support. This work was also partially supported by NSF grants NCS-FO 2219903 and NSF CAREER Award CCF 2238876. Additionally, this work was partially supported by the NSF awards: SCALE MoDL-2134209, CCF-2112665 (TILOS).
It was also supported in part by the DARPA AIE program, the U.S. Department of Energy, Office of Science, the Facebook Research Award, as well as CDC-RFA-FT-23-0069 from the CDC-s Center for Forecasting and Outbreak Analytics.

\putbib
\end{bibunit}

\newpage
\emptythanks
\setcounter{footnote}{1}
\onecolumn
\title{Metric Learning from Limited Pairwise Preference Comparisons\\(Supplementary Material)}
\maketitle
\setcounter{footnote}{0}
\appendix

\begin{bibunit}

\subsection*{Outline of the supplementary material}

\startcontents[sections]
\printcontents[sections]{l}{1}{\setcounter{tocdepth}{2}}
\newpage

\section{Additional algorithms from existing work}
\label{appendix:algorithms}
Algorithm~\ref{alg:one-user} and Algorithm~\ref{alg:multiple-user} describe the procedures for learning an unknown Mahalanobis distance using unquantized measurements from a single user and a large pool of users, respectively. See Section 2 of \citep{canal2022one}.

Algorithm~\ref{alg:multiple-user-binary} describes the convex optimization problem introduced in Section 3 of \citep{canal2022one} for simultaneous metric and preference learning using quantized measurements from multiple users. Here, $\ell: \RR \rightarrow \RR_{0+}$ can be any convex loss function that is $L$-Lipschitz-continuous. In particular, to achieve the recovery guarantee in Proposition~\ref{prop:recovery_hat_Q}, we assume the probabilistic model in Assumption~\ref{ass:probabilistic} with link function $f$ and use the loss function $\ell(z) = -\log f(z)$.

\begin{algorithm}[h]
\SetAlgoLined
\DontPrintSemicolon
\KwIn{A set $\Dcal = \big\{(x_{i_0}, x_{i_1}, \psi_i)\big\}_{i=1}^m$ of unquantized measurements from a single user.}
Solve the system of linear equations over symmetric matrices $A \in \RR^{d \times d}$ and vectors $w \in \RR^d$:
\begin{algomathdisplay} 
\big\langle x_{i_0}^{\vphantom{\top}}x_{i_0}^\top - x_{i_1}^{\vphantom{\top}}x_{i_1}^\top,\, A\big\rangle + \big\langle x_{i_0}^{\vphantom{\top}} - x_{i_1}^{\vphantom{\top}}, w\big\rangle = \psi_i.
\end{algomathdisplay}

\KwOut{$\hat{A}$, the solution to the above linear equations.}
\caption{Metric learning using unquantized measurements from a single user \citep{canal2022one}}
\label{alg:one-user}
\end{algorithm}

\begin{algorithm}
\SetAlgoLined
\DontPrintSemicolon
\KwIn{A family of $\Dcal_k = \big\{(x_{i_0;k}, x_{i_1;k}, \psi_{i;k})\big\}_{i=1}^{m_k}$ of unquantized measurements from users $k \in [K]$.}
Solve the system of linear equations over symmetric matrices $A \in \RR^{d \times d}$ and vectors $w_1, w_2, \ldots, w_K \in \RR^d$:
\begin{algomathdisplay} 
\big\langle x_{i_0;k}x_{i_0;k}^\top - x_{i_1;k}x_{i_1;k}^\top,\, A\big\rangle + \big\langle x_{i_0;k} - x_{i_1;k}, w_k\big\rangle = \psi_{i;k}.
\end{algomathdisplay}

\KwOut{$\hat{A}$, the solution to the above linear equations.}

\caption{Metric learning using unquantized measurements from multiple users \citep{canal2022one}}
\label{alg:multiple-user}
\end{algorithm}

\begin{algorithm}
\SetAlgoLined
\DontPrintSemicolon
\KwIn{A family of $\Dcal_k = \big\{(x_{i_0;k}, x_{i_1;k}, y_{i;k})\big\}_{i=1}^{m_k}$ of quantized measurements from users $k \in [K]$; hyperparameters $\zeta_M, \zeta_v > 0$.}
Solve the convex optimization problem over symmetric matrices $A \in \RR^{d \times d}$ and vectors $w_1, w_2, \ldots w_K \in \RR^d$:
\begin{align}
\hat{M}, \cbr{\hat{v}_k}_k \gets \min_{A, \cbr{w_k}_{k}} \quad  & \sum_{k} \sum_{\Dcal_k} \ \ell \rbr{y_{i;k} \rbr{\left \langle x_{i_0}x_{i_0}^\top - x_{i_1;k}x_{i_1;k}^\top, A \right \rangle + \left \langle x_{i_0;k} - x_{i_1;k}, w_k \right \rangle}} \label{eq:binary-cvx-opt} \\
\text{s.t. } \quad & A \succeq 0,\ \|A\|_\mathrm{F} \le \zeta_M, \ \|w_k\|_2 \le \zeta_v \ \forall k \nonumber
\end{align}

\KwOut{$\hat{M}$.}
\caption{Metric learning using quantized measurements from multiple users \citep{canal2022one}}
\label{alg:multiple-user-binary}
\end{algorithm}

\section{Direct sums of inner product spaces} \label{sec:notation}
\label{appendix:notation-background}

In the paper, we have liberally made use of direct sums of inner product spaces, for example, $\Sym(\RR^d) \oplus \RR^d$, which we treat as an inner product space. It allows us ready access to well-established machinery including inner products, norms, singular values, and pseudoinverses. The direct sum of inner product spaces is defined:

\begin{definition} 
    \label{def:norm-direct-sum}
    Let $\big(V, \langle \cdot, \cdot\rangle_V\big)$ and $\big(W, \langle \cdot, \cdot\rangle_W\big)$ be two inner product spaces. Their \emph{direct sum} is the vector space $V \oplus W$ equipped with the inner product:
    \[\langle v_1 \oplus w_1 , v_2 \oplus w_2\rangle_{V \oplus W} = \langle v_1, v_2 \rangle_V + \langle w_1, w_2\rangle_W.\]
    In particular, this induces the norm on $V \oplus W$ satisfying $\|v \oplus w\|_{V \oplus W}^2 = \|v\|_V^2 + \|w\|_W^2$.
\end{definition}

\paragraph{Moore-Penrose pseudoinverse} The pseudoinverse can be defined for any map between inner product spaces:

\begin{definition} 
    \label{def:pseudoinverse}
    Let $A : V \to W$ be a linear map between inner product spaces $V$ and $W$. Let $K = \ker(A)$ and let $K^\perp$ be its orthogonal complement. Let $A_{K^\perp} : K^\perp \to \mathrm{Im}(A)$ be the restriction of $A$ to $K^\perp$ and let $\Pi_{\mathrm{Im}(A)} : W \to \mathrm{Im}(A)$ be the orthogonal projection onto $\mathrm{Im}(A)$. The \emph{Moore-Penrose pseudoinverse} of $A$ is the map $A^+ : W \to V$ given by:
    \[A^+ = A_{K^\perp}^{-1} \circ \Pi_{\mathrm{Im}(A)}.\]
    Note that $A_{K^\perp}^{-1}$ exists by the first isomorphism theorem of algebra.
\end{definition}

\paragraph{Universal property} The following property of direct sum allows us to decompose a linear map $A : V_1 \oplus V_2 \to V$, which we use in the proof of Theorem~\ref{thm:recovery_guarantee}, when decomposing $\Pi^+ : \bigoplus_\lambda \Sym(V_\lambda) \to \Sym(\RR^d)$. 

\begin{proposition}[Universal property of the direct sum, \cite{mac1999algebra}] 
    \label{prop:universal-property}
    Let $A : V_1 \oplus V_2 \to V$ be a linear map. Then, there exists $A_i : V_i \to V$ for $i = 1,2$ such that for all $v_1 \oplus v_2 \in V_1 \oplus V_2$,
    \[A(v_1 \oplus v_2) = A_1(v_1) + A_2(v_2).\]
\end{proposition}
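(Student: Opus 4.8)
The plan is to construct the two maps $A_1$ and $A_2$ explicitly by precomposing $A$ with the canonical inclusions of the summands. First I would define $\iota_1 : V_1 \to V_1 \oplus V_2$ by $\iota_1(v_1) = v_1 \oplus 0$ and $\iota_2 : V_2 \to V_1 \oplus V_2$ by $\iota_2(v_2) = 0 \oplus v_2$. These are linear because, in the direct sum as defined in Definition~\ref{def:norm-direct-sum}, addition and scalar multiplication act coordinatewise, so $\iota_i(\alpha v + \beta v') = \alpha \iota_i(v) + \beta \iota_i(v')$. I would then set $A_i := A \circ \iota_i : V_i \to V$, which is linear as a composition of linear maps.

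Next I would verify the claimed identity. The key algebraic fact is that every element of the direct sum decomposes as $v_1 \oplus v_2 = (v_1 \oplus 0) + (0 \oplus v_2) = \iota_1(v_1) + \iota_2(v_2)$. Applying $A$ and using its linearity then gives
\[A(v_1 \oplus v_2) = A\big(\iota_1(v_1)\big) + A\big(\iota_2(v_2)\big) = A_1(v_1) + A_2(v_2),\]
which is exactly the desired decomposition.

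There is essentially no obstacle here: this is a standard structural fact, and the only points requiring care are confirming that the inclusions $\iota_i$ really are linear and that the coordinatewise decomposition is valid in $V_1 \oplus V_2$; note that the inner product plays no role in the argument, only the underlying vector-space structure. If one wishes, one can also observe that the $A_i$ are in fact unique — evaluating the identity at $v_2 = 0$ forces $A_1 = A \circ \iota_1$, and symmetrically for $A_2$ — although the statement as given only asserts existence.
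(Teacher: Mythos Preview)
Your proof is correct and is exactly the standard argument: define $A_i = A \circ \iota_i$ via the canonical inclusions and use linearity on the decomposition $v_1 \oplus v_2 = \iota_1(v_1) + \iota_2(v_2)$. The paper does not supply its own proof of this proposition---it simply cites \cite{mac1999algebra}---so there is nothing to compare against, and your write-up (including the optional uniqueness remark) would serve perfectly well as a self-contained justification.
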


\paragraph{Schatten norm} The Frobenius norm over matrices can be generalized to linear maps between inner product spaces:

\begin{definition} 
    \label{def:schatten}
    Let $A : V \to W$ be a linear map between finite-dimensional inner product spaces of rank $r$. Let $\sigma_1 \geq \dotsm \geq \sigma_r$ be its nonzero singular values. The \emph{2-Schatten norm} $\|A\|_2$ is given by:
    \[\|A\|_2^2 = \sum_{i=1}^r \sigma_i^2.\]
    In particular, this implies $\|A\|_2 \leq \sigma_\mathrm{max}(A) \cdot \sqrt{\mathrm{rank}(A)}$.
\end{definition}

\begin{proposition} 
    \label{prop:schatten-decomposition}
    Let $A : V_1 \oplus V_2 \to V$ be a linear map between finite-dimensional inner product spaces. Let $A_i : V_i \to V$ for $i= 1,2$ be given as in Proposition~\ref{prop:universal-property}. Then:
    \[\|A\|_2^2 = \|A_1\|_2^2 + \|A_2\|_2^2,\]
    where $\|\cdot\|_2$ denotes the 2-Schatten norm.
\end{proposition}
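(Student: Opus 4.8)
The plan is to express the 2-Schatten norm as a Hilbert--Schmidt norm and then compute it in an adapted orthonormal basis of the direct sum. First I would record the standard identity that for any linear map $A : U \to V$ between finite-dimensional inner product spaces and any orthonormal basis $\{u_j\}$ of $U$,
\[
\|A\|_2^2 \;=\; \sum_i \sigma_i(A)^2 \;=\; \operatorname{tr}(A^*A) \;=\; \sum_j \langle A^*A\, u_j, u_j\rangle \;=\; \sum_j \|A u_j\|_V^2 ,
\]
where the first equality is Definition~\ref{def:schatten}, the second holds because the nonzero eigenvalues of the positive-semidefinite operator $A^*A$ are exactly the $\sigma_i(A)^2$, and the remaining equalities are the coordinate expression of the trace. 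The point is that this quantity is independent of the chosen orthonormal basis of the domain.

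Next I would exhibit a convenient orthonormal basis of $V_1 \oplus V_2$. Take an orthonormal basis $\{e_j\}_j$ of $V_1$ and an orthonormal basis $\{f_k\}_k$ of $V_2$, and consider the vectors $e_j \oplus 0$ and $0 \oplus f_k$. By the definition of the direct-sum inner product (Definition~\ref{def:norm-direct-sum}), $\langle e_j \oplus 0,\, e_{j'} \oplus 0\rangle = \langle e_j, e_{j'}\rangle_{V_1}$, $\langle 0 \oplus f_k,\, 0 \oplus f_{k'}\rangle = \langle f_k, f_{k'}\rangle_{V_2}$, and $\langle e_j \oplus 0,\, 0 \oplus f_k\rangle = 0$, so this collection is an orthonormal basis of $V_1 \oplus V_2$.

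Then I would evaluate $\|A\|_2^2$ in this basis using the identity above, splitting the sum over the two families. Using the decomposition $A(v_1 \oplus v_2) = A_1(v_1) + A_2(v_2)$ from Proposition~\ref{prop:universal-property}, we get $A(e_j \oplus 0) = A_1(e_j)$ and $A(0 \oplus f_k) = A_2(f_k)$, hence
\[
\|A\|_2^2 \;=\; \sum_j \|A(e_j\oplus 0)\|_V^2 + \sum_k \|A(0\oplus f_k)\|_V^2 \;=\; \sum_j \|A_1 e_j\|_V^2 + \sum_k \|A_2 f_k\|_V^2 \;=\; \|A_1\|_2^2 + \|A_2\|_2^2,
\]
where the last step applies the basis identity once more to $A_1$ (with basis $\{e_j\}$ of $V_1$) and to $A_2$ (with basis $\{f_k\}$ of $V_2$). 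This completes the argument. There is no real obstacle here; the only thing to be careful about is confirming that the embedded bases of $V_1$ and $V_2$ are jointly orthonormal in $V_1 \oplus V_2$, which is immediate from Definition~\ref{def:norm-direct-sum}, and invoking basis-independence of the Hilbert--Schmidt norm rather than attempting to track singular values directly through the sum.
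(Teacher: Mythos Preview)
Your argument is correct. The paper actually states this proposition without proof (it appears as a background fact in the appendix on direct sums, with no accompanying proof environment), so there is nothing to compare against; your Hilbert--Schmidt computation in an orthonormal basis adapted to $V_1 \oplus V_2$ is exactly the standard way to see this and fills the gap cleanly.
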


\section{Proofs and additional results for Section~\ref{sec:neg_result}}
\label{appendix:impossibility}

For the proof of \Cref{thm:negative-result}, we will make use of the notion of a \emph{comparison graph} over a set of items. Given preference comparisons from a user, the induced comparison graph is simply the directed graph over items where two items are connected by an edge if the user has compared them:

\begin{definition}
     A \emph{comparison graph} $G = (V,E)$ is a graph whose vertices $V = \{x_1,\ldots, x_N\}$ is a set of items and whose edges $E = \{(x_{i_0}, x_{i_1})\}_{i=1}^m$ is a set of item pairs. Its \emph{edge-vertex incidence matrix} $S \in \{-1, 0, +1\}^{m \times N}$ is defined by:
    \[S_{ij} = \begin{cases}
        \; 1 & j = i_0 \\
        -1 & j = i_1\\
        \;0 & \textrm{o.w.}
    \end{cases}\]
\end{definition}

\negativeresult*

\begin{proof}[Proof of \Cref{thm:negative-result}] 
    Fix $M' \in \Sym^+(\RR^d)$. It suffices to prove the result for a single user, since the covariates $v_k$'s impose no constraints on each other. Fix a pseudo-ideal point $v \in \RR^d$. Let $D$ be a design matrix induced by the collection of pairs $\{(x_{i_0}, x_{i_1})\}_{i=1}^m$ from a set of items $\Xcal = \{x_1,\ldots, x_N\}$. We show that when $\Xcal$ has generic pairwise relations, then there exists $v' \in \RR^d$ such $D(M, v) = D(M', v')$. By expanding and rearranging this equation, we obtain a linear system of equations $Av' = b$, where $A \in \RR^{m\times d}$ and $b \in \RR^m$, and where the $i$th set of equations is given by:
    \begin{align*} 
    \underbrace{(x_{i_0} - x_{i_1})^\top }_{\textrm{$i$th row of $A$}}v' &= \underbrace{\big\langle x_{i_0}^{\vphantom{\top}}x_{i_0}^\top - x_{i_1}^{\vphantom{\top}}x_{i_1}^\top, M - M'\big\rangle + \big\langle x_{i_0} - x_{i_1}, v \big\rangle}_{\textrm{$i$th entry of $b$}}.
    \end{align*}
    The Rouch\'e-Capelli theorem states that the system $A\hat{u} = b$ has a solution if the rank of the augmented matrix $[A | b]$ is equal to the rank of the design matrix $A$. If this is the case, then, there is a solution $v'$ for any choice of $M' \in \Sym^+(\RR^d)$.
    
    To finish the proof, we show that the ranks of $A$ and $[A|b]$ are equal. To this end, let $S$ be the edge-vertex incidence matrix induced by $\{(x_{i_0}, x_{i_1})\}_{i=1}^m$. Define the matrix $X \in \RR^{N \times d}$ and vector $b' \in \RR^{N}$ so that the $j$th row of each is:
    \[X_j = x_j^\top \qquad \textrm{and}\qquad b_j' = \big\langle x_{j}^{\vphantom{\top}}x_j^\top , M - M'\big\rangle + \big\langle x_j - v\big\rangle,\]
    so that $A = SX$ and $b = Sb'$. The items have generic pairwise relations, and so $\mathrm{rank}(S) = \mathrm{rank}(SX)$ by Lemma~\ref{lem:equal-ranks}. The augmented matrix $[A|b]$ has the decomposition $S[X| b']$, so its rank is upper bounded by $\rank(S)$. And because the $\mathrm{rank}([A|b])$ is at least $\mathrm{rank}(A)$, we obtain equality, as claimed.
\end{proof}

\begin{lemma} \label{lem:equal-ranks}
    Let $V = \{x_1,\ldots, x_N\}$ be a set of items in $\RR^d$, and let $X \in \RR^{N \times d}$ be its matrix representation, so that the $j$th row is $X_j = x_j^\top$. Let $G = (V, E)$ be a comparison graph with $|E| \leq d$. Let $S$ be its edge-vertex incidence matrix. If the items have generic pairwise relations, then:
    \[\mathrm{rank}(SX) = \mathrm{rank}(S).\]
\end{lemma}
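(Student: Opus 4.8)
The plan is to sandwich $\rank(SX)$ against $\rank(S)$. One inequality is automatic: since $SX$ factors through $S$, we always have $\rank(SX) \le \rank(S)$. For the reverse inequality I would exhibit $\rank(S)$ linearly independent rows of $SX$, exploiting the fact that the row of $SX$ indexed by the edge $i = (i_0, i_1)$ is precisely $(x_{i_0} - x_{i_1})^\top$.

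First I would recall the standard linear algebra of graph incidence matrices. If $G$ has $c$ connected components (isolated vertices included), then $\rank(S) = N - c$, and the rows of $S$ indexed by the edges of any spanning forest $F \subseteq E$ of $G$ form a basis of the row space of $S$: the row space consists of exactly the vectors whose coordinates sum to zero on each component, so it has dimension $N-c$; a spanning forest has exactly $N - c$ edges; and a forest's incidence matrix has full row rank since a forest has no cycles. In particular $|F| = N - c = \rank(S)$.

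Next I would check that $F$ is an admissible witness for \Cref{def:pairwise-generic}: the subgraph $(V, F)$ is acyclic by construction, and $|F| = N - c \le |E| \le d$. Hence generic pairwise relations gives that $\{x_{i_0} - x_{i_1} : i \in F\}$ is linearly independent in $\RR^d$. But this set is precisely the collection of rows of $SX$ indexed by $F$, so $SX$ has at least $|F| = \rank(S)$ linearly independent rows, that is, $\rank(SX) \ge \rank(S)$. Combined with the trivial upper bound this yields $\rank(SX) = \rank(S)$.

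There is no real obstacle here; the only point requiring a moment's care is that the witness edge set must be verified to be acyclic and of size at most $d$ before invoking \Cref{def:pairwise-generic}, which is exactly why a \emph{spanning forest} — rather than an arbitrary maximal linearly independent set of rows of $S$ — is the right object to pick.
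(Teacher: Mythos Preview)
Your proof is correct and follows essentially the same route as the paper: both arguments select a maximal acyclic subgraph of $G$ (a spanning forest), invoke generic pairwise relations on its edge set to obtain $|F|$ independent rows of $SX$, and then identify $|F|$ with $\rank(S)$. The only difference is presentational: you cite the standard fact that the signed incidence matrix of a graph with $c$ components has rank $N-c$ and that a spanning forest's rows form a basis, whereas the paper reproves $\rank(S)=\rank(S')$ by an explicit path argument showing each removed edge's row is a signed sum along the unique path in the forest.
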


\begin{proof}
    Let $G' = (V, E')$ be a maximal acyclic subgraph $G' \subset G$, say with $m'$ edges, and let $S' \in \RR^{m'\times N}$ be its corresponding edge-vertex incidence matrix. On the one hand, we have:
    \[\mathrm{rank}(S') \geq \mathrm{rank}(S'X).\]
    On the other, because $\Xcal$ has pairwise generic relations and $m' \leq d$, we have:
    \[\mathrm{rank}(S'X) = \mathrm{dim}\big(\mathrm{span}\big(\{x - x' : (x,x') \in E'\}\big)\big) = m' \geq \mathrm{rank}(S').\]
    The first equality is obtained by the definition of rank applied to $S'X$. The second equality follows from pairwise genericity. Thus, we have $\mathrm{rank}(S') = \mathrm{rank}(S'X) \leq \mathrm{rank}(SX)$. Furthermore, we claim that:
    \[\mathrm{rank}(S) = \mathrm{rank}(S').\]
    It would follow that $\mathrm{rank}(S) \leq \mathrm{rank}(SX) \leq \mathrm{rank}(S)$, which implies the result.
    
    We prove the claim by showing that for any $e \in E \setminus E'$, the row $S_e$ is a linear combination of rows $S_{e'}$ where $e' \in E'$. Let $e = (x,x')$. By the maximality of $G'$, a cycle containing $e$ is created by including $e$ into $G'$. Thus, there is an undirected path $P$ from $x$ to $x'$ in $G'$, where $P = (x_0,\ldots, x_k)$ satisfies:
    \begin{itemize}
        \item $x_0 = x$ and $x_k = x'$,
        \item either $(x_{i-1}, x_i)$ or its reversal $(x_i, x_{i-1})$ is contained in $E'$. 
    \end{itemize}
    For each $i$, let $e_i \in E'$ be one of these edges $(x_{i-1}, x_i)$ or $(x_i, x_{i-1})$ and let $r_i \in \{-1, +1\}$ indicate whether $e_i$ was the reversal of $(x_{i-1}, x_i)$. It follows that indeed $S_e$ is a linear combination of the rows of $S'$,
    \begin{align*}
        S_e = \sum_{i=1}^k r_i S_{e_i}.
    \end{align*}
\end{proof}

\subsection{Generic pairwise relations}
In the next proposition, we show that our notion of \emph{generic pairwise relations} is a notion of points being in general position \citep{matousek2013lectures}; almost all finite subsets of $\RR^d$ have pairwise generic relations. Recall:

\pairwisegeneric*

\begin{proposition} \label{prop:generic-as}
    Fix $N \in \NN$. We say that $X \in \RR^{N \times d}$ has generic pairwise relations if its rows have generic pairwise relations. The following set has Lesbegue measure zero:
    \[\big\{X \in \RR^{N \times d} : X \textrm{ is not pairwise generic}\big\}.\]
\end{proposition}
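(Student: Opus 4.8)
\textbf{Proof proposal for Proposition~\ref{prop:generic-as}.}

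The plan is to exhibit the bad set as a \emph{finite} union of proper algebraic subvarieties of $\RR^{N \times d}$, each of which is a zero set of a nonzero polynomial in the $Nd$ coordinates of $X$, and then invoke the standard fact that the zero set of a nonzero polynomial has Lebesgue measure zero. First I would unwind the definition: $X$ fails to be pairwise generic precisely when there is \emph{some} acyclic graph $G = (\Xcal, E)$ on the $N$ rows with $|E| = k \le d$ edges such that the vectors $\{x_{i_0} - x_{i_1} : (i_0,i_1) \in E\}$ are linearly dependent. Since there are only finitely many graphs on $N$ labelled vertices, it suffices to fix one acyclic edge set $E = \{(i_0^{(1)}, i_1^{(1)}), \ldots, (i_0^{(k)}, i_1^{(k)})\}$ with $k \le d$ and show that
\[
Z_E := \big\{X \in \RR^{N \times d} : x_{i_0^{(1)}} - x_{i_1^{(1)}},\ \ldots,\ x_{i_0^{(k)}} - x_{i_1^{(k)}} \textrm{ are linearly dependent}\big\}
\]
has measure zero; the bad set is then contained in $\bigcup_E Z_E$ over the finitely many such $E$.

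The key step is to produce, for each fixed acyclic $E$, a single nonzero polynomial vanishing on $Z_E$. Form the $k \times d$ matrix $D_E(X)$ whose $j$th row is $x_{i_0^{(j)}} - x_{i_1^{(j)}}$; its entries are linear (hence polynomial) in the coordinates of $X$. Linear dependence of the rows is equivalent to $\mathrm{rank}(D_E(X)) < k$, i.e.\ to the simultaneous vanishing of all $k \times k$ minors of $D_E(X)$. Each such minor is a polynomial $p_{E,J}(X)$ (indexed by the choice $J$ of $k$ columns out of $d$), so $Z_E \subseteq \bigcap_J \{p_{E,J} = 0\}$. It therefore suffices to check that \emph{at least one} of these minors is not the identically-zero polynomial — then $Z_E$ is contained in its zero set, which has measure zero, and we are done after the finite union.

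The main obstacle, and the only substantive point, is verifying that some $k \times k$ minor of $D_E(X)$ is a nonzero polynomial; this is where acyclicity of $E$ is used. I would argue by exhibiting one point $X^\star$ at which $D_E(X^\star)$ has full row rank $k$. Since $E$ is acyclic with $k \le d$ edges, the edge-vertex incidence matrix $S$ of $(\Xcal, E)$ has rank $k$ (its $k$ rows are linearly independent: an acyclic graph's incidence rows are independent — equivalently, a forest on $N$ vertices with $k$ edges has incidence matrix of rank $k$). Now choose $X^\star \in \RR^{N \times d}$ by assigning the $N$ rows to be the first $N$ standard basis vectors of some $\RR^{\max(N,d)}$ truncated/padded to length $d$, or more cleanly: note $D_E(X) = S X$, and pick $X^\star$ whose columns span (a subspace containing) the row space needed so that $S X^\star$ still has rank $k$ — concretely, take $X^\star$ to have rows equal to distinct coordinate vectors $e_1,\dots,e_N \in \RR^d$ when $N \le d$, so $S X^\star$ consists of $k$ rows of the form $e_{i_0} - e_{i_1}$, which are independent exactly because $E$ is a forest; when $N > d$ one instead observes $\mathrm{rank}(SX^\star) = \mathrm{rank}(S) = k$ can still be achieved since $S$ has only $k \le d$ rows, so its row space is $k$-dimensional and a generic $X^\star$ (equivalently, a suitable choice of $d$ coordinates) realizes this rank. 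Hence at $X^\star$ some $k\times k$ minor is nonzero, so that minor is a nonzero polynomial, and $Z_E$ lies in its measure-zero zero set. Taking the finite union over all acyclic $E$ with $|E|\le d$ finishes the proof. I would also remark that this is essentially the observation already used in Lemma~\ref{lem:equal-ranks} that for acyclic $E$ the incidence rows are independent, now combined with the elementary measure-theoretic fact about polynomial zero sets.
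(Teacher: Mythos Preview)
Your proposal is correct and follows essentially the same route as the paper: write the bad set as a finite union over acyclic edge sets $E$ (equivalently, incidence matrices $S$), express each piece as the zero set of a polynomial encoding rank-deficiency of $SX$, and invoke that nonzero polynomials have measure-zero zero sets. The only cosmetic differences are that the paper uses the single polynomial $\det(SXX^\top S^\top)$ rather than a $k\times k$ minor, and that you verify non-vanishing more carefully by exhibiting a witness $X^\star$ (using acyclicity to ensure $\mathrm{rank}(S)=k$), whereas the paper simply asserts the polynomial is nonzero.
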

\begin{proof} 
    Let $\Scal$ be the finite collection of all edge-vertex incidence matrices $S$ for acyclic comparison graphs with at most $d$ edges on $N$ items. Notice that if $X \in \RR^{N \times d}$ is not pairwise generic, then there exists some $S \in \Scal$ such that $SX \in \RR^{m \times d}$ is not full rank. It follows that: 
    \[\big\{X \textrm{ not pairwise generic}\big\} = \bigcup_{S \in \Scal} \big\{\det(SXX^\top S^\top) = 0\big\}.\]
    The zero set $\{\det(SXX^\top S^\top) = 0 \}$ of a non-zero polynomial has Lebesgue measure zero, by Sard's theorem. The finite union of measure zero sets also has measure zero.
\end{proof}

The concept of \emph{general linear position} is a standard notion of general position. We present the definition in a way to highlight its relationship to pairwise genericity. Recall that a star graph is a tree with a root vertex connected to all other vertices.

\begin{definition} \label{def:general-linear-position}
    Let $\Xcal$ be a subset of $\RR^d$. We say that $\Xcal$ is in \emph{general linear position} if for any star graph $G = (V, E)$ with at most $d$ edges on $V \subset \Xcal$, the set $\{x - x' : (x, x') \in E\}$ is linearly independent.
\end{definition}

Because star graphs are acyclic graphs, the following is immediate:

\begin{restatable}{proposition}{pairwisetolinear}
    If $\Xcal$ has generic pairwise relations, then $\Xcal$ is in general linear position.
\end{restatable}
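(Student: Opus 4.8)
The plan is to observe that this implication is essentially immediate once we unpack the two definitions, so the proof will be a single short paragraph. First I would recall that every star graph is a tree, and every tree is acyclic. Consequently, any star graph $G = (V,E)$ with $V \subset \Xcal$ and $|E| \le d$ is in particular an acyclic graph on $\Xcal$ with at most $d$ edges.

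Next, I would invoke Definition~\ref{def:pairwise-generic}: since $\Xcal$ has generic pairwise relations, the set $\{x - x' : (x,x') \in E\}$ is linearly independent for \emph{every} acyclic graph $G = (\Xcal, E)$ with at most $d$ edges. Applying this to the star graphs among such acyclic graphs yields exactly the condition required by Definition~\ref{def:general-linear-position}, namely that $\{x - x' : (x,x') \in E\}$ is linearly independent for every star graph with at most $d$ edges. Hence $\Xcal$ is in general linear position.

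I do not expect any real obstacle here: the content is entirely in the remark, already made in the text, that ``star graphs are acyclic graphs,'' so the restatable proposition is just the formal record of a containment between two families of graphs. The only thing to be careful about is matching the vertex-set conventions — Definition~\ref{def:pairwise-generic} quantifies over acyclic graphs $G = (\Xcal, E)$ with all of $\Xcal$ as vertex set, while Definition~\ref{def:general-linear-position} allows $V \subset \Xcal$ — but this is harmless, since a star graph on a subset $V \subset \Xcal$ can be viewed as an acyclic graph on the full vertex set $\Xcal$ (adding isolated vertices changes neither acyclicity nor the edge set nor the associated difference vectors). With that observation noted, the proof is complete in two sentences.
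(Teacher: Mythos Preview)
Your proposal is correct and matches the paper's approach exactly: the paper simply states ``Because star graphs are acyclic graphs, the following is immediate'' and gives no further proof. Your additional remark about reconciling the vertex-set conventions (adding isolated vertices) is a careful touch that the paper omits but does not need to address differently.
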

On the other hand, the converse is not necessarily true. As we can see from the following example, having pairwise generic relations is a strictly stronger condition than being in general linear position; see also Figure~\ref{fig:generic-pairwise-relations} for an illustration.

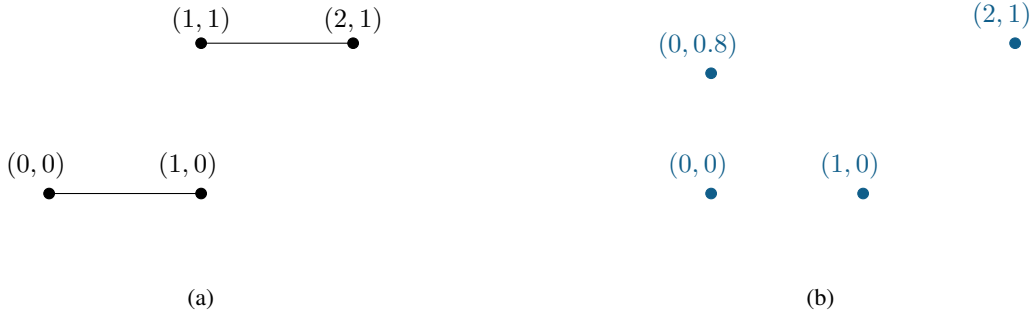
\begin{figure*}[t]
  \centering
  \begin{subfigure}{0.45\textwidth}
    \centering
    \begin{tikzpicture}[scale=2]
  \useasboundingbox (-0.5,-0.5) rectangle (2.5,1.5);
  \draw [fill=black] (0,0) circle (1pt);
  \draw [fill=black] (0,0) circle (0.8pt) node[above,yshift=2pt,xshift=-5pt] {$(0,0)$};

  \draw [fill=black] (1,0) circle (1pt);
  \draw [fill=black] (1,0) circle (0.8pt) node[above,yshift=2pt,xshift=-5pt] {$(1,0)$};

  \draw [fill=black] (1,1) circle (1pt);
  \draw [fill=black] (1,1) circle (0.8pt) node[above] {$(1,1)$};

  \draw [fill=black] (2,1) circle (1pt);
  \draw [fill=black] (2,1) circle (0.8pt) node[above] {$(2,1)$};

  \draw (0.02,0) -- (0.98, 0);
  \draw (1.02,1) -- (1.98,1);
\end{tikzpicture}
    \caption{} 
  \end{subfigure}  
  \hskip 10pt
  \begin{subfigure}{0.45\textwidth}
    \centering
    \begin{tikzpicture}[scale=2]
  \useasboundingbox (-0.5,-0.5) rectangle (2.5,1.5);

  \draw [fill=color1, color1] (0.3,0) circle (1pt) node[above,yshift=2pt,xshift=-5pt,color1] {$(0,0)$};

  \draw [fill=color1, color1] (1.3,0) circle (1pt) node[above,yshift=2pt,xshift=-5pt,color1] {$(1,0)$};

  \draw [fill=color1, color1] (0.3,0.8) circle (1pt) node[above,yshift=2pt,xshift=-5pt,color1] {$(0,0.8)$};

  \draw [fill=color1, color1] (2.3,1) circle (1pt) node[above,yshift=2pt,xshift=-5pt,color1] {$(2,1)$};

\end{tikzpicture}
    \caption{}
  \end{subfigure}
  \caption{(a) Illustration of Example~\ref{example:not-generic-pairwise}. The set of four points is in general linear position, but does not have generic pairwise relations. (b) A set of four points that has generic pairwise relations; it must also be in general linear position.}
  \label{fig:generic-pairwise-relations}
\end{figure*}

\begin{example}
    \label{example:not-generic-pairwise}
    Consider the following points in $\RR^2$:
    \[x_1 = \begin{bmatrix}
        0 \\ 0
    \end{bmatrix} \qquad x_2 = \begin{bmatrix}
        1 \\ 0
    \end{bmatrix} \qquad x_3 = \begin{bmatrix}
        1 \\ 1
    \end{bmatrix} \qquad x_4 = \begin{bmatrix}
        2 \\ 1
    \end{bmatrix}.\]
    This collection of points is in general linear position, since no three points are collinear. However, these points do not have generic pairwise relations. We have:
    \[x_2 - x_1 = x_4 - x_3.\]
\end{example}

\section{Proofs and additional results for Section~\ref{sec:unquantized}}
\label{appendix:exact}

\subsection{An additional result for Section~\ref{sec:linear-parameterization}}

\begin{restatable}{proposition}{matrixrep} \label{prop:metric-to-matrix}
    There is a one-to-one correspondence between $\Sym^+(V)$ and Mahalanobis distances on $V$. In particular, $\rho_V : V \times V \to \RR$ is a Mahalanobis distance if and only if there exists some $Q \in \Sym^+(V)$ such that:
    \[\rho_V(x,x') = \sqrt{(x -x') B Q B^\top (x-x')}.\]
    Moreover, $Q$ is unique. We say that $Q$ is the \emph{matrix representation} of the Mahalanobis distance $\rho_V$. If $\rho_V$ is the subspace metric on $V$ of a Mahalanobis distance $\rho$ on $\RR^d$ with representation $M \in \Sym^+(\RR^d)$, then:
    \[Q = \Pi_V(M).\]
\end{restatable}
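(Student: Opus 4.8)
The plan is to establish the three claims in turn: (i) existence of a representing matrix $Q \in \Sym^+(V)$; (ii) uniqueness of $Q$; and (iii) the identification $Q = \Pi_V(M)$ when $\rho_V$ is a subspace metric of a Mahalanobis distance on $\RR^d$. The unifying idea is to pass from the subspace $V$ to $\RR^r$ via the canonical representation $B^\top(\cdot)$, where a Mahalanobis distance is nothing but a positive-definite quadratic form, and then transport the standard facts about positive-definite matrices back through $B$.

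\emph{Existence and uniqueness.} First I would recall that a metric $\rho_V$ on $V$ is by definition a Mahalanobis distance on $V$ exactly when, under the isometry $x \mapsto B^\top x$ identifying $V$ with $\RR^r$, it becomes a Mahalanobis distance on $\RR^r$ in the usual sense; hence there is a positive-definite $Q \in \Sym^+(\RR^r)$ with $\rho_V(x,x')^2 = (B^\top x - B^\top x')^\top Q (B^\top x - B^\top x') = (x-x')^\top B Q B^\top (x-x')$, which is the claimed formula. For uniqueness, suppose $Q_1, Q_2 \in \Sym^+(\RR^r)$ both represent $\rho_V$. Then $(B^\top x - B^\top x')^\top (Q_1 - Q_2)(B^\top x - B^\top x') = 0$ for all $x, x' \in V$; since $B^\top$ maps $V$ onto all of $\RR^r$, the differences $B^\top x - B^\top x'$ range over all of $\RR^r$, so the quadratic form of $Q_1 - Q_2$ vanishes identically on $\RR^r$. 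As $Q_1 - Q_2$ is symmetric, this forces $Q_1 = Q_2$ (a symmetric matrix is determined by its quadratic form, e.g.\ via the polarization identity). This also gives the one-to-one correspondence: distinct $Q$'s give distinct distances, and every Mahalanobis distance on $V$ arises this way.

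\emph{The identification $Q = \Pi_V(M)$.} Now suppose $\rho_V = \rho\big|_V$ where $\rho$ has matrix representation $M \in \Sym^+(\RR^d)$. For $x, x' \in V$ we have $\rho_V(x,x')^2 = \rho(x,x')^2 = (x-x')^\top M (x-x')$. Writing $x - x' = B z$ with $z = B^\top x - B^\top x' \in \RR^r$ (valid since $x - x' \in V$ and $B B^\top$ acts as the identity on $V$), this equals $z^\top (B^\top M B) z$. On the other hand, by the representing formula just established, $\rho_V(x,x')^2 = z^\top Q z$. Since $z$ ranges over all of $\RR^r$ as $x, x'$ range over $V$, the uniqueness argument above yields $Q = B^\top M B = \Pi_V(M)$, where the last equality is Definition~\ref{def:linear-projection}. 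Finally I should note $\Pi_V(M) = B^\top M B \in \Sym^+(\RR^r)$: it is symmetric, and positive-definite because $B$ has full column rank, so $z^\top B^\top M B z = (Bz)^\top M (Bz) > 0$ whenever $z \ne 0$. This confirms $Q$ indeed lies in $\Sym^+(V)$, closing the loop.

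\emph{Main obstacle.} None of the steps is deep; the one point requiring a little care is being precise about the claim that $B^\top$ maps $V$ \emph{onto} $\RR^r$ and that $BB^\top$ restricts to the identity on $V$ — these are exactly the facts that make the canonical representation a genuine change of coordinates, and they rest on $B$ being an \emph{orthonormal} basis of $V$ (so $B^\top B = I_r$ and $BB^\top = P_V$, the orthogonal projector onto $V$). Once this bookkeeping is in place, existence, uniqueness, and the identity all follow from the elementary fact that a symmetric matrix is determined by its associated quadratic form.
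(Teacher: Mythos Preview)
Your uniqueness argument and your computation of $Q = \Pi_V(M)$ are fine and match the paper's. The gap is in how you handle the \emph{if and only if}. In this paper, a Mahalanobis distance on $V$ is \emph{defined} to be the restriction to $V$ of some Mahalanobis distance on $\RR^d$ (see the definition preceding \S\ref{sec:linear-parameterization}). You instead take as ``the definition'' that it becomes a Mahalanobis distance on $\RR^r$ under $B^\top(\cdot)$; but that equivalence is precisely the content of the proposition, so invoking it at the outset is circular.

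Concretely, what is missing is the $(\Longleftarrow)$ direction: given an arbitrary $Q \in \Sym^+(V)$, you must show that $(x,x') \mapsto \sqrt{(x-x')^\top BQB^\top (x-x')}$ is the restriction of some Mahalanobis distance on $\RR^d$. This requires producing a positive-definite extension $M \in \Sym^+(\RR^d)$ with $B^\top M B = Q$. The paper does this explicitly by setting $M = B_\perp^{\phantom{\top}} B_\perp^\top + BQB^\top$, where $B_\perp$ is an orthonormal basis of $V^\perp$, and checking that this $M$ is positive-definite and satisfies $\Pi_V(M) = Q$. Your write-up never addresses this extension step, so as written it does not establish the bijection with $\Sym^+(V)$ --- you have only shown that the map from Mahalanobis distances on $V$ into $\Sym^+(V)$ is well-defined and injective, not that it is onto. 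Once you add this short construction, your argument becomes complete and essentially identical to the paper's.
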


\begin{proof}
    $(\Longrightarrow)$. Suppose that $\rho_V$ is a Mahalanobis distance on $V$. We show that it has a representation in $\Sym^+(V)$. By definition, there exists a Mahalanobis distance $\rho$ on $\RR^d$ such that:
    \[\rho_V = \rho\big|_V.\]
    Let $M$ be the matrix representation of $\rho$ and let $Q = \Pi_V(M) \in \Sym^+(V)$. Then:
    \begin{align*}
        \rho_V(x,x') &= \sqrt{(x - x')^\top M (x - x')}
        \\&= \sqrt{(x - x')^\top B B^\top M BB^\top (x - x')}
        \\&= \sqrt{(x - x')^\top B Q B^\top (x - x')},
    \end{align*}
    where the first equality expands the equality $\rho_V(x,x') = \rho(x,x')$, the second uses the fact that $BB^\top x = x$ for all $x \in V$ since $B \in \RR^{d \times r}$ is an orthonormal basis, and the third equality is uses the definition of $\Pi_V$.

    To prove uniqueness, suppose that $Q, Q' \in \Sym^+(V)$ represent $\rho_V$. We claim that $Q = Q'$. To show this, it suffices to prove that for all $z \in \RR^r$,
    \[\big\langle Q - Q' , zz^\top\big\rangle = 0.\]
    This is because the collection $\{zz^\top : z \in \RR^r\}$ spans all $(r\times r)$-symmetric matrices. To this end, fix $z \in \RR^r$. We take $x, x' \in V \subset$ by setting $x = Bz$ and $x' = 0$. We have:
    \begin{align*}
        \rho_V(x,x') = \sqrt{(x - x')^\top B Q B^\top (x - x')} = \sqrt{z^\top Q z}.
    \end{align*}
    The same equation holds for $Q'$ since both represent $\rho_V$. Squaring both equations and taking their difference shows that $\langle Q - Q', zz^\top\rangle = 0$, as desired. Thus, $Q = Q'$ and the matrix representation of $\rho_V$ is unique.

    $(\Longleftarrow)$. Let $Q \in \Sym^+(V)$. We can extend the orthonormal basis $B$ of $V$ to an orthonormal basis of $\RR^d$. In particular, let $B_\perp \in \RR^{d \times (d - r)}$ be an orthonormal basis of the orthogonal complement of $V$. Set:
    \[M = B_\perp^{\phantom{\top}} B_\perp^\top + BQB^\top,\]
    so that $M \in \Sym^+(\RR^d)$ is positive-definite. Let $\rho$ be the Mahalanobis distance on $\RR^d$ represented by $M$. Then, the Mahalanobis distance $\rho\big|_V$ on $V$ has representation:
    \[\Pi_V(M) = B^\top M B = B^\top B_\perp^{\phantom{\top}} B_\perp^\top B + B^\top B Q B^\top B = Q,\]
    which shows that each $Q \in \Sym^+(\RR^d)$ corresponds to a Mahalanobis distance on $V$.
\end{proof}

\subsection{Proofs for Section~\ref{sec:exact-low-rank-subspace}}
\label{appendix:exact-low-rank-subspace}

\phantomreduction*

\begin{proof}
    Let $v = - 2M u$ and $v_V = -2Q u_V$ be the pseudo-ideal user points for $u$ and $u_V$, respectively. The following shows that $v_V$ is given by the canonical representation of the orthogonal projection of $v$ to $V$,
    \[v_V = - 2 \underbrace{B^\top M B \vphantom{)}}_{Q} \underbrace{(B^\top MB)^{-1} B^\top M u}_{u_V} = -2 B^\top M u = B^\top v.\]
    We now expand the definitions of $\psi_M$ and $\psi_Q$,
    \begin{align*}
        \psi_M(x,x';u) &\overset{(i)}{=} \left\langle xx^\top - x'{x'}^\top, M\right\rangle + \left\langle x - {x'}^{\vphantom{\top}}, v\right\rangle
        \\&\overset{(ii)}{=} \left\langle BB^\top (x x^\top - x'{x'}^\top) BB^\top , M\right\rangle + \left\langle \vphantom{{x'}^\top}  BB^\top (x - x'), v\right\rangle
        \\&\overset{(iii)}{=} \left\langle B^\top xx^\top B - B^\top x' {x'}^\top B,  B^\top  M B \right\rangle + \left\langle B^\top x - B^\top x', B^\top v\right\rangle
        \\&\overset{(iv)}{=} \left\langle x_V^{\vphantom{\prime}}{x_V^{\vphantom{\prime}}}^\top - x_V^{\prime}{x_V^{\prime}}^\top, Q \right\rangle + \left\langle x_V^{\vphantom{\prime}} - x_V^{\prime}, v_V\right\rangle \\&\overset{(v)}{=} \psi_Q(x_V^{\vphantom{\prime}}, x_V'; u_V^{\vphantom{\prime}}),
    \end{align*}
    where (i) and (v) follow by definition, (ii) uses the fact that as $B\in \RR^{d \times r}$ is an orthonormal basis, $BB^\top v = v$ for all $v \in V$, (iii) applies the following property for the trace inner product $\langle BA, C \rangle = \mathrm{tr}(C^\top BA) = \langle A, B^\top C\rangle$, and (iv) rewrites the equation in terms of the canonical representations.
\end{proof}

\quadsufficiency*

\begin{proof}
    By Lemma~\ref{lem:phantom}, it suffices to prove the result for $V = \RR^d$. We show that if $\Xcal$ quadratically spans $\RR^d$, then we can construct an $(m,K)$-experimental design where $m = d+1$ and $K = d(d+1)/2$ such that there is a unique matrix consistent with all user responses. Let $D = d + \frac{d(d+1)}{2}$ be the dimension of $\Sym(\RR^d) \oplus \RR^d$.

    Since $\Xcal$ quadratically spans $V$, there exists a collection of pairs $\{(x_{i_0}, x_{i_1})\}_{i=1}^{D}$ such that:
    \begin{equation}  \label{eqn:spanning-pairs}
        \mathrm{span}\big(\big\{\Delta_i \oplus \delta_i : i \in [D]\big\}\big) \, = \, \Sym(\RR^d)\oplus \RR^d,
    \end{equation}
    where we let $\Delta_i = x_{i_0}x_{i_0}^\top - x_{i_1}x_{i_1}^\top$ and $\delta_i = x_{i_0} - x_{i_1}$. In particular, the collection $\{\Delta_i \oplus \delta_i\}_{i \in [D]}$ is linearly independent. 
    Without loss of generality, we may select these so that the first $d$ pairwise differences $\delta_i$ are also linearly independent:
    \[\mathrm{span}\big(\{\delta_i : i \in [d]\}\big) = \RR^d.\]
    We will ask all users to compare the first $d$ pairs and one additional pair, unique to the user. In particular, set the $k$th collection of preference comparison queries by:
    \[\phantom{\textrm{where } \mathcal{I}_k = [d] \cup \{d + k\}}\Dcal_k = \big\{(x_{i_0}, x_{i_1}) : i \in \mathcal{I}_k\big\},\qquad \textrm{where } \mathcal{I}_k = [d] \cup \{d + k\}.\]

    First, we show that the responses from a single user must reveal at least one dimension of $\Sym(\RR^d)$. To see this, let's fix a user $k \in [K]$. From \Cref{eqn:spanning-pairs}, we can define the vector $(\alpha_{i,k} : i \in \Ical_k)$ so that:
    \[\alpha_{d+k;k} = 1 \qquad\textrm{and}\qquad \sum_{i \in \Ical_k} \alpha_{i;k} \delta_i = 0.\]
    Therefore, from the preference measurements, we deduce that at least one degree of freedom of $M$ is revealed:
    \begin{equation} \label{eqn:M-equation}
        \sum_{i \in \Ical_k} \alpha_{i;k} \psi_{i;k} =  \sum_{i \in \Ical_k} \alpha_{i;k} \big\langle \Delta_i, M\big\rangle + \underbrace{\sum_{i \in \Ical_k} \alpha_{i;k} \big\langle \delta_i, v_k \big\rangle}_{\langle 0, v_k\rangle}
        = \left\langle \sum_{i \in \Ical_k} \alpha_{i;k}  \Delta_i, M\right\rangle.
    \end{equation}
    
    We now claim that each user reveals a different degree of freedom of $M$. In particular, it suffices to show that the following collection of matrices spans $\Sym(\RR^d)$,
    \[\left\{\sum_{i \in \Ical_k} \alpha_{i;k}  \Delta_i : k \in [K]\right\}.\]
    Suppose otherwise. Since $K = \frac{d(d+1)}{2}$, this means that this collection of matrices are linearly dependent, and that there exists a non-zero vector $(\mu_k : k \in [K])$ such that $0 \in \Sym(\RR^d)$ is the linear combination:
    \[\sum_{k \in [K]} \mu_k \sum_{i \in \Ical_k} \alpha_{i;k} \Delta_i = 0.\]
    Because we chose $\alpha_{d + k;k} = 1$ for each user $k\in [K]$, this implies that zero in $\Sym(\RR^d) \oplus \RR^d$ is also a non-trivial linear combination of the collection $\Delta_i \oplus \delta_i $, where:
    \[\sum_{i=1}^d \left(\sum_{k \in [K]} \mu_k \alpha_{i;k}\right) \Delta_i \oplus \delta_i + \sum_{i = d+1}^D \mu_{i - d}\cdot \Delta_i \oplus \delta_i = 0.\]
    But then this collection is not full rank and cannot span $\Sym(\RR^d) \oplus \RR^d$, as assumed in \Cref{eqn:spanning-pairs}. It follows that $M$ is the unique solution to the system of linear equations corresponding to \Cref{eqn:M-equation}.
\end{proof}

\quadnecessary*

\begin{proof}
    Because $\Xcal_V$ does not quadratically span $V$, there exists an element $Q_\perp \oplus v_\perp \in  \Sym(V) \oplus V$ such that:
    \[\left\langle \big(x_V^{\vphantom{\prime}} {x_V^{\vphantom{\prime}\top}} - x_V' x_V^{\prime\top} \big) \oplus (x - x') , Q_\perp \oplus v_\perp\right\rangle = 0,\]
    for all $x, x' \in \Xcal_V$, where $x_V^{\vphantom{\prime}} = B^\top x$ and $x_V^{\prime} = B^\top x'$. Let $M_\perp = BQB^\top$, so that:
    \begin{equation} \label{eqn:kernel}
    \phantom{\textrm{for all }x,x' \in \Xcal_V.}\left\langle  \big(x^{\vphantom{\prime}} x^\top - x' x^{\prime\top} \big) \oplus (x - x') , M_\perp \oplus v_\perp \right\rangle = 0,\qquad \textrm{for all }x,x' \in \Xcal_V.
    \end{equation}
    We claim that if $M \in \Sym^+(\RR^d)$ is consistent with the $k$th user's responses $\Dcal_k = \{(x_{i_0; k}, x_{i_1;k}, \psi_{i;k})\}_{i=1}^m$, then the matrix $M + \lambda M_\perp$ is also consistent, provided that $M + \lambda M_\perp$ remains in $\Sym^+(\RR^d)$. In particular, if $M$ is consistent, there exists an ideal point $u_k$ so that for all $i \in [m]$:
    \begin{align*} 
    \psi_{i;k} = \psi_M(x_{i_0}, x_{i_1}; u_k) &\overset{(i)}{=} \left\langle \big(x_{i_0}^{\vphantom{\top}} x_{i_0}^\top - x_{i_1}^{\vphantom{\top}} x_{i_1}^\top\big) \oplus (x_{i_0} - x_{i_1}) , M \oplus v_k \right\rangle 
    \\&\overset{(ii)}{=} \left\langle  \big(x_{i_0}^{\vphantom{\top}} x_{i_0}^\top - x_{i_1}^{\vphantom{\top}} x_{i_1}^\top\big) \oplus (x_{i_0} - x_{i_1}) , M\oplus v_k + \lambda M_\perp \oplus v_\perp \right\rangle 
    \\&\overset{(iii)}{=} \psi_{M + \lambda M_\perp}(x_{i_0}, x_{i_1}; \lambda \tilde{u}_k),
    \end{align*}
    where (i) expands the definition of $\psi_M$ while setting the pseudo-ideal point to $v_k = - 2M u_k$, (ii) applies \Cref{eqn:kernel}, and (iii) applies the definition of $\psi_{M+M_\perp}$ while setting $\tilde{u}_k = - \frac{1}{2} M^{-1} (v_k + v_\perp)$.

    Thus, if $M$ is the matrix representation of the underlying Mahalanobis distance, the following matrices are also consistent:
    \[\left\{M + \lambda M_\perp : 0 \leq \lambda < \frac{\sigma_\mathrm{min}(M)}{\sigma_\mathrm{max}(M_\perp)}\right\},\]
    where $\sigma_{\mathrm{max}}(M_\perp)$ is the maximum singular value of $M_\perp$ while $\sigma_\mathrm{min}(M)$ is the minimum singular value of $M$; this implies that $M + \lambda M_\perp$ is positive-definite. Infinitely such $\lambda$'s exist because (a) $\sigma_{\mathrm{max}}(M_\perp) < \infty$ is finite and (b) $\sigma_\mathrm{min}(M) > 0$ is bounded away from zero because $M$ is positive-definite.
\end{proof}

\subsection{Proof of Proposition~\ref{prop:reconstruction-equiv} from Section~\ref{sec:exact-subspace-clusters}}
\label{appendix:exact-subspace-clusters}

\begin{figure*}[t!] 
    \centering
    \begin{subfigure}[t]{0.45\textwidth}
        \centering
        \includegraphics[height=0.4\linewidth]{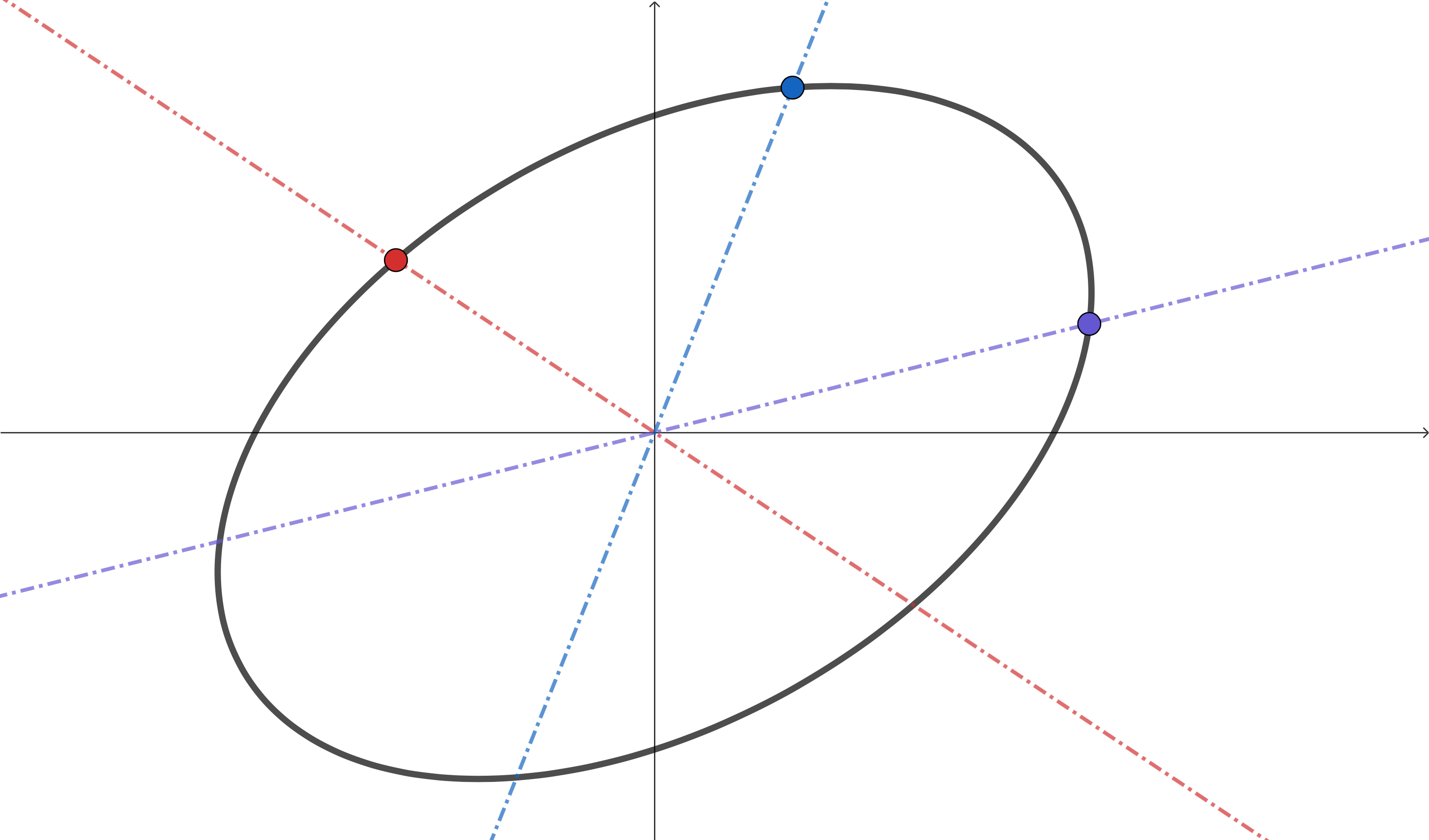}
        \caption{\label{fig:geometric-intuition-2}}
    \end{subfigure}
    \begin{subfigure}[t]{0.45\textwidth}
        \centering
        \includegraphics[height=0.4\linewidth]{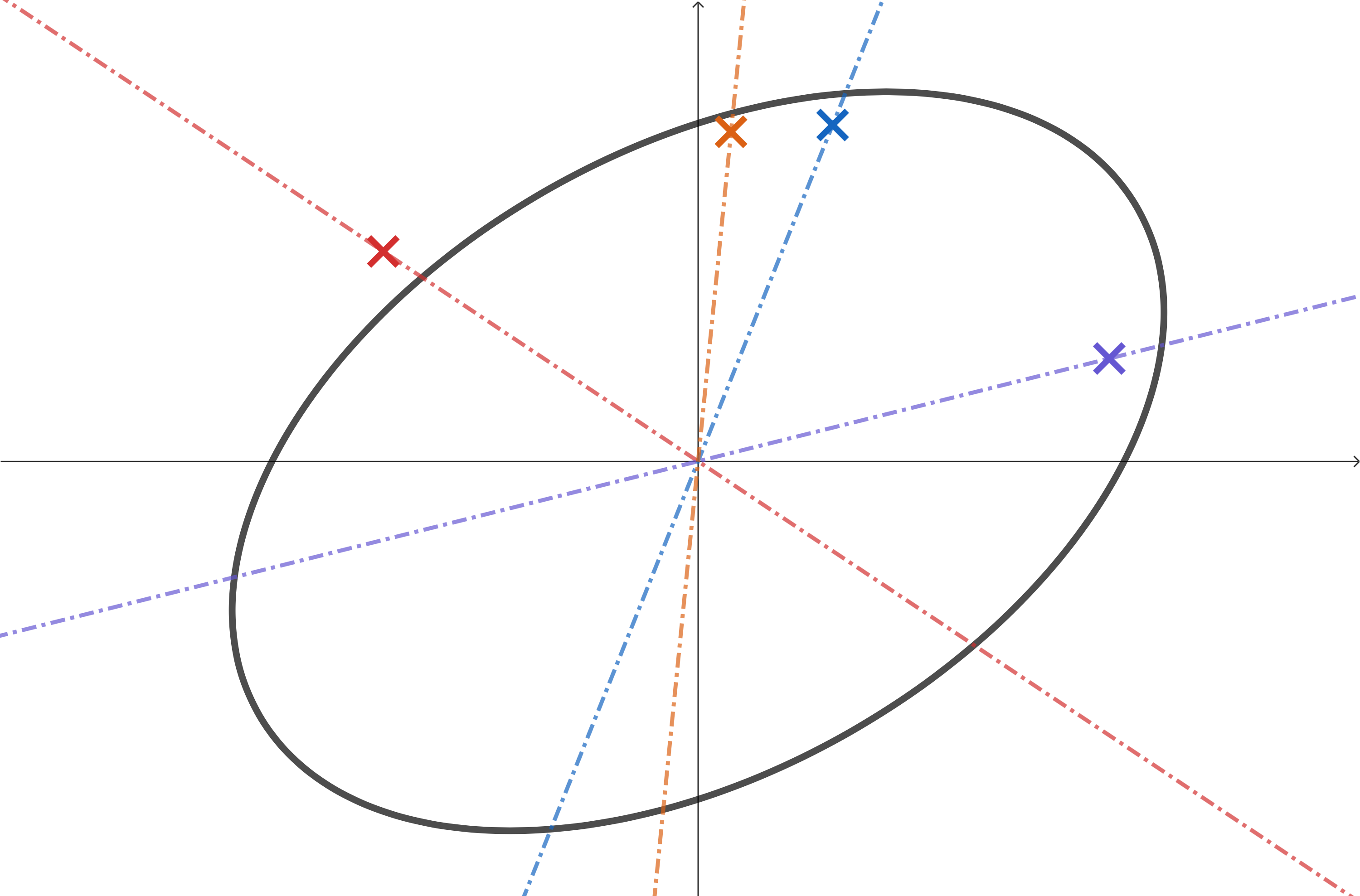}
        \caption{\label{fig:geometric-intuition-3}}
    \end{subfigure}
    
    \caption{(a) Illustrates the number of subspaces needed to reconstruct a high-dimensional ellipsoid from its intersections with low-dimensional subspaces. In $\RR^2$, we need $3$ points on distinct $1$-dimensional subspaces to possibly recover an ellipse centered at the origin.
    (b) When we cannot exactly identify where the high-dimensional ellipsoid intersects with each subspace, we may still fit an ellipsoid from approximate estimations using least squares \citep{gander1994least}.}
    \label{fig:geometric-intuition-a}
\end{figure*}

\reconstructionequiv*

\begin{proof}
$(1 \Longrightarrow 2).$ Suppose $\linearspan \cbr{xx^\top: x \in V_{\lambda}, \lambda \in \Lambda} = \Sym(\RR^d)$. To show that $\Pi$ is injective, it suffices to show that its kernel is trivial. Let $M \in \ker(\Pi)$. We claim that for any $\lambda \in \Lambda$ and $x \in V_\lambda$ , we have:
\begin{equation} \label{eqn:rank1zero}
    \big\langle xx^\top, M\big\rangle = 0.
\end{equation}
Assume this for now. Then, $M \in \Sym(\RR^d) = \linearspan \cbr{xx^\top: x \in V_{\lambda}, \lambda \in \Lambda}$, so that $\langle M, M \rangle = 0$. This implies that $M = 0$, so the kernel is trivial. We now show Eq.~\eqref{eqn:rank1zero}. Using the definition of $\Pi_{V_\lambda}$, when $M \in \ker(\Pi)$, we have:
\begin{equation}
\Pi_{V_\lambda}(M) = B_{\lambda}^\top M B_\lambda = 0.\label{eq:kernel_assum}
\end{equation}
Say that $\dim(V_\lambda) = r_\lambda$ and $x \in V_\lambda$. As $B_\lambda \in \RR^{d\times r_\lambda}$ is a basis of $V_\lambda$, there exists $z \in \RR^{r_\lambda}$ such that $x = B_\lambda z$. By Eq.~\eqref{eq:kernel_assum},
\[
\big\langle xx^\top, M \big\rangle = z^\top B_\lambda^\top M B_\lambda z = 0.
\]

$(2 \Longrightarrow 1).$ We prove the contrapositive. Suppose that $S = \linearspan \cbr{xx^\top: x \in V_\lambda, \lambda \in \Lambda}$ does not span $\Sym(\RR^d)$. 
Then, there exists some nonzero $A \in S^\perp$ in its orthogonal complement. To show that $\Pi$ is not injective, we show that $A \in \ker(\Pi)$. That is, for all $\lambda \in \Lambda$, that $B_\lambda^\top A B_\lambda = 0$. We do this by proving that all eigenvalues of $B_\lambda^\top A B_\lambda$ are zero. 

Let $v \in \RR^{r_\lambda}$ be any unit eigenvector of $B_\lambda^\top A B_\lambda$ and $\alpha$ be the corresponding eigenvalue, so that:
\[
\alpha = v^\top B_\lambda^\top A B_\lambda v = \big\langle xx^\top, A\big\rangle,
\]
where $x = B_\lambda v$ is an element of $V_\lambda$. 
But because $A \in S^\perp$, this implies that the eigenvalue is zero, $\alpha = 0$. 

$(2 \Longrightarrow 3).$ 
Let $M$ and $\hat{M}$ be the matrix representations of $\rho$ and $\hat{\rho}$, respectively. By assumption, their subspace metrics coincide over $(V_\lambda)_\lambda$, so Proposition~\ref{prop:metric-to-matrix} implies:
\[\Pi_{V_\lambda}(M) = \Pi_{V_\lambda}(\hat{M}).\]
And as $\Pi$ is injective, we must have $M = \hat{M}$, so that $\rho = \hat{\rho}$.

$(3 \Longrightarrow 2).$ We prove that $\Pi$ is injective by showing that its kernel is trivial. Let $A \in \ker(\Pi)$. Then, let $c,\hat{c} > \|A\|_\mathrm{op}$ and define $M = c^{-1} A + I$ and $\smash{\hat{M} = \hat{c}^{-1} A + I}$, which are positive-definite by construction. Let $\rho$ and $\hat{\rho}$ be their corresponding Mahalanobis distances. Their subspace metrics on all $V_\lambda$'s coincide, since $A \in \ker(\Pi)$,
\[\Pi(M) = \Pi(c^{-1}A + I) = \Pi(I) = \Pi(\hat{c}^{-1}A + I) = \Pi(\hat{M}).\]
And so, by assumption $\rho = \hat{\rho}$. But as the matrix representation of a Mahalanobis distance is unique (Proposition~\ref{prop:metric-to-matrix}), this implies that $M = \hat{M}$, proving that $A = 0$.
\end{proof}

\section{Proofs and additional results for Section~\ref{sec:approx_recovery}}
\label{appendix:approx_recovery}

\subsection{Proofs and additional remarks for Theorem~\ref{thm:recovery_guarantee}}
\label{appendix:reconstruction-proof}

\upperbound*

\begin{proof}[Proof of \Cref{thm:recovery_guarantee}]

Let $c = 2c_0$ where $c_0$ is a universal constant to be defined later. Recall from Eq.~\eqref{eq:proj_to_pd} that $\hat{M}$ minimizes $\|A - \hat{M}_{\mathrm{LS}}\|_\mathrm{F}$ over all $A \in \Sym^+(\RR^d)$. Since $M$ is also contained in $\Sym^+(\RR^d)$, we have:
\[
\big \|\hat{M} - \hat{M}_{\ols} \big \|_{\mathrm{F}} \le \big \|M - \hat{M}_{\ols}\big \|_{\mathrm{F}}.
\]
By the triangle inequality,
\[
\big\|\hat{M} - M \big\|_{\mathrm{F}} \le \big\|\hat{M} - \hat{M}_{\ols} \big\|_{\mathrm{F}} + \big\|\hat{M}_{\ols} - M \big\|_{\mathrm{F}} \le 2 \big\|\hat{M}_{\ols} - M \big\|_{\mathrm{F}}.
\]
Therefore, it suffices to show that, with probability $1 - \delta$,
\begin{equation}
\big \|\hat{M}_{\ols} - M \big \|_{\mathrm{F}} 
\le 
c_0 \cdot \frac{1}{\sigma_{\min}(\Pi)} \rbr{ \gamma\sqrt{m} + \varepsilon d  \sqrt{{\log \frac{2d}{\delta}}}}. \label{eq:ols_diff_bound}
\end{equation}
Before proving Eq.~\eqref{eq:ols_diff_bound}, we introduce some notation.

\paragraph{Notation and facts}  For each subspace $V_\lambda$, we denote the recovery error by:
\begin{align}
E_\lambda 
& = \hat{Q}_\lambda - Q_\lambda  = \Big(\underbrace{\EE[\hat{Q}_\lambda] - Q_\lambda}_{H_\lambda \text{ (bias)}}\Big) + \Big(\underbrace{\hat{Q}_\lambda - \EE[\hat{Q}_\lambda]}_{\xi_\lambda\text{ (noise)}}\Big), \nonumber
\end{align}
which we decompose into a bias term $H_\lambda := \EE[\hat{Q}_\lambda] - Q_\lambda$ and a noise term $\xi_\lambda := \hat{Q}_\lambda - \EE[\hat{Q}_\lambda]$. By assumption, 
\[\|H_\lambda\|_{\mathrm{F}} \le \gamma \qquad \textrm{and}\qquad \EE [\xi_\lambda] = 0, \quad \|\xi_\lambda\|_{\mathrm{F}} \le \|E_\lambda\|_{\mathrm{F}} \le \varepsilon. \]
Let $H := \bigoplus_{\lambda \in \Lambda} H_\lambda$, $\xi := \bigoplus_{\lambda \in \Lambda} \xi_\lambda$, and $E := H + \xi$. Thus, $E = \bigoplus_{\lambda \in \Lambda} \rbr{\hat{Q}_\lambda - {Q}_\lambda}$, by the above bias/noise decomposition.
In addition, since $\|H_\lambda\|_{\mathrm{F}} \le \gamma$, we have $\|H\| = \sqrt{\sum_{\lambda \in \Lambda} \|H_\lambda\|^2_{\mathrm{F}}} \le \sqrt{m} \gamma$.
\\

We now prove Eq.~\eqref{eq:ols_diff_bound}. Recall from Eq.~\eqref{eq:least_squares} that $\hat{M}_{\ols}$ is the least-squares solution, so that:
\begin{align}
\hat{M}_{\ols} - M 
& = \Pi^+ (E) \nonumber \\
& = \Pi^+ (H + \xi), \label{eq:ols_pseudo_inverse}
\end{align} 
where $\Pi^+: \bigoplus_{\lambda \in \Lambda} \Sym(V_\lambda) \rightarrow \Sym(\RR^d)$ denotes the Moore-Penrose inverse of $\Pi$ (see Definition~\ref{def:pseudoinverse}). It then follows from Eq.~\eqref{eq:ols_pseudo_inverse} and the triangle inequality that
\begin{align}
\big \| \hat{M}_{\ols} - M \big \|_{\mathrm{F}} \le \big\| \Pi^+ (H) \big\|_{\mathrm{F}}  + \big\| \Pi^+(\xi) \big\|_{\mathrm{F}}. \label{eq:bound_norm_decomposition}
\end{align} 

By Proposition~\ref{prop:reconstruction-equiv}, the map $\Pi$ is injective since $\Xcal$ is subspace-clusterable. Thus, $\sigma_{\min}(\Pi) > 0$, and:
\begin{align}
\big\| \Pi^+ (H) \big\|_{\mathrm{F}} 
\le \frac{1}{\sigma_{\min}(\Pi)} \|H\|_{\mathrm{F}} 
\le \frac{1}{\sigma_{\min}(\Pi)}  \gamma \sqrt{m}.\label{eq:bound_norm_bias}
\end{align}
It then follows from Eq.~\eqref{eq:bound_norm_decomposition} and Eq.~\eqref{eq:bound_norm_bias} that, to prove Eq.~\eqref{eq:ols_diff_bound}, it suffices to show that, with probability at least $1 - \delta$,
\begin{align}
\big\| \Pi^+(\xi) \big\|_{\mathrm{F}}
\le 
c_0 \cdot \frac{1}{\sigma_{\min}(\Pi)} \rbr{ \varepsilon d  \sqrt{{\log \frac{2d}{\delta}}}}. \label{eq:bound_norm_noise}
\end{align}

By the universal property of the direct sum (see Proposition~\ref{prop:universal-property}), there exist $\Pi^+_\lambda: \Sym(V_\lambda) \rightarrow \Sym(\RR^d)$ for each $\lambda \in \Lambda$, such that 
\[
\Pi^+(\xi) = \sum_{\lambda \in \Lambda} \Pi^+_\lambda (\xi_\lambda).
\] 
Observe that 
\begin{enumerate}
    \item Each $\xi_\lambda$ is from subspace $V_\lambda$; and thus, $\xi_\lambda$'s and $\Pi^+_\lambda (\xi_\lambda)$'s across subspaces are independent,

    \item $\EE \sbr{\Pi^+_\lambda (\xi_\lambda)} = \Pi^+_\lambda \rbr{\EE \sbr{\xi_\lambda}} = 0$; and,

    \item $\big \|\Pi^+_\lambda (\xi_\lambda) \big \|_{\mathrm{F}} \le \big  \|\Pi^+_\lambda \big  \|_\mathrm{op} \cdot \big \|\xi_\lambda \big \|_{\mathrm{F}} \le \big  \|\Pi^+_\lambda\big \|_2 \cdot \varepsilon$,
\end{enumerate}
where $\|\cdot\|_2$ denotes the 2-Schatten norm (see \Cref{def:schatten}). 

Corollary~\ref{col:matrix-hoeffding} gives a Hoeffding-style concentration inequality for independent sub-Gaussian random matrices. Applied here, it states that there exists a universal constant $c_0$ such that, with probability $1 - \delta$,
\begin{align}
\Big \|\Pi^+(\xi) \Big\|_{\mathrm{F}} 
& = \Big \|\sum_{\lambda \in \Lambda} \Pi^+_\lambda (\xi_\lambda) \Big\|_{\mathrm{F}} \nonumber \\
& \overset{(i)}{\le} c_0 \cdot \sqrt{\sum_{\lambda \in \Lambda} \big \|\Pi^+_\lambda\big \|^2_{2} \cdot \varepsilon^2 \log \frac{2d}{\delta} } \nonumber \\
& \overset{(ii)}{=} c_0 \cdot \big \|\Pi^+\big \|_{2} \cdot \varepsilon \sqrt{ \log \frac{2d}{\delta} } \nonumber \\
& \overset{(iii)}{\le} c_0 \cdot \frac{1}{\sigma_{\min}(\Pi)} \cdot \varepsilon d \sqrt{ \log \frac{2d}{\delta} },
\end{align}
where (i) applies the third observation from above, (ii) applies Proposition~\ref{prop:schatten-decomposition} about 2-Schatten norms, and (iii) uses the following facts:
\begin{itemize}
    \item $\|\Pi^+\|_{2} \le \sigma_\mathrm{max}(\Pi^+) \cdot \sqrt{\rank{(\Pi^+)}}$, (see Definition~\ref{def:schatten}),
    \item $\sigma_\mathrm{max}(\Pi^+) =\displaystyle \frac{1}{\sigma_\mathrm{min}(\Pi)}$,
    \item $\rank(\Pi^+) \le \frac{d(d+1)}{2} \le d^2$.
\end{itemize}
\end{proof}

\subsection{Proofs and additional remarks for Proposition~\ref{prop:recovery_hat_Q}}
\label{appendix:subspace-recovery-proof}

\subspacerecovery*

\begin{proof}
    The objective over which the parameters $(A, w_1,\ldots, w_K)$ is optimized in Eq.~\eqref {eq:binary-cvx-opt} of Algorithm~\ref{alg:multiple-user-binary} can be written as:
    \[\hat{R}(A, w_1,\ldots, w_K) = \sum_{k \in [K]}\sum_{i \in [m]} -\log f\big(y_{i;k} \cdot D_{i;k}(A, w_k)\big).\]
    Let $(\hat{Q}, \hat{v}_1, \ldots, \hat{v}_K)$ be the solution recovered in this step of Algorithm~\ref{alg:multiple-user-binary}. 
    The excess risk of these parameters is defined to be how much worse in expectation the parameters are at explaining observed data compared to the true parameters $(Q, v_1,\ldots, v_K)$ that generated the data. The excess risk leads to a bound on $\|\hat{Q} - Q\|_\mathrm{F}^2$,
    \begin{align} 
        \E\big[\hat{R}(\hat{Q}, \hat{v}_1,\ldots, \hat{v}_K)\big] -&  \E\big[\hat{R}(Q, v_1,\ldots, v_K)\big]  \notag
        \\&\overset{(a)}{=} \sum_{k \in [K]} \E_{D_k \sim \Pcal_m}\left[\sum_{i \in [m]} \mathrm{KL}\left(f\big(D_{i;k}(Q, v_k)\big)\,\middle\|\,f\big(D_{i;k}(\hat{Q}, \hat{v}_k)\big)\right)\right] \notag
        \\&\overset{(b)}{\geq} 2c_f^2\sum_{k \in [K]} \E_{D_k \sim \Pcal_m}\left\|D_k\big(\hat{Q} - Q, \hat{v}_k - v_k\big)\right\|^2 \notag
        \\&\overset{(c)}{\geq} 2c_f^2 \sum_{k \in [K]} m \cdot \sigma_{\mathrm{min}}^2(\Pcal_m) \cdot \left(\|\hat{Q} - Q\|_\mathrm{F}^2 + \|\hat{v}_k - v_k\|^2\right) \notag
        \\&\overset{\phantom{(c)}}{\geq} 2 m K c_f^2 \cdot \sigma_{\mathrm{min}}^2(\Pcal_m) \cdot \|\hat{Q} - Q\|_\mathrm{F}^2,  \label{eqn:error-bound-Q}
    \end{align}
    where each inequality is justified below. We just need to show that the excess risk of $\hat{Q}$ returned by the algorithm has small excess risk. Lemma~\ref{lem:generalization} approaches this via a standard generalization argument, showing that with probability at least $1 - \delta$,    
    \begin{align}
        \E\big[\hat{R}(\hat{Q}, \hat{v}_1,\ldots, \hat{v}_K)\big] - \E\big[\hat{R}(Q, v_1,\ldots, v_K)\big] \mkern-200mu & \notag
        \\&\leq \underbrace{\vphantom{\big|}\hat{R}(\hat{Q}, \hat{v}_1,\ldots, \hat{v}_K) - \hat{R}(Q, v_1, \ldots, v_K)}_{\leq 0}\phantom{} + 32L \sqrt{mK (\zeta_M^2 + K\zeta_v^2)  \log \frac{4}{\delta}}, \label{eqn:excess-risk-Q}
    \end{align}
    where the indicated difference is less than zero because $(\hat{Q}, \hat{v}_1,\ldots, \hat{v}_k)$ is the minimizer of $\hat{R}$. The result is obtained by combining Eqs.~\eqref{eqn:error-bound-Q} and \eqref{eqn:excess-risk-Q}. To finish the prove, we justify the above inequalities:
    \begin{enumerate}
        \item[(a)] Recall that $\Pr[Y_{i;k} = +1] = f(D_{i;k}(Q, v_k))$. Because $f(z) = 1 - f(-z)$, we also have  that:
        \[\Pr[Y_{i;k} = -1] = 1 -  f(D_{i;k}(Q, v_k)) = f(- D_{i;k}(Q, v_k)).\]
        Therefore, $\Pr[Y_{i;k} = y] = f(y \cdot D_{i;k}(Q, v_k))$. It follows that the excess risk is equal to:
        \begin{align*} 
            \E\big[\hat{R}(\hat{Q}, \hat{v}_1,\ldots, \hat{v}_K)\big] -&  \E\big[\hat{R}(Q, v_1,\ldots, v_K)\big]  
            \\&= \sum_{k \in [K]}\E_{D_k,Y}\left[\sum_{i \in [m]} - \log\left(\frac{f(Y_{i;k} \cdot D_{i;k}(Q, v_k))}{f(Y_{i;k} \cdot D_{i;k}(\hat{Q}, \hat{v}_k))}\right)\right]
            \\&= \sum_{k \in [K]}\E_{D_k}\left[ \sum_{i \in [m]} \sum_{y \in \{-1, +1\}} - f(y\cdot D_{i;k}(Q, v_k)) \log\left(\frac{f(y\cdot D_{i;k}(Q, v_k))}{f(y \cdot D_{i;k}(\hat{Q}, \hat{v}_k))}\right)\right],
        \end{align*}
        where we obtain the equality (a) by applying the definition $\mathrm{KL}(p\|q)$,
        \[\mathrm{KL}(p\|q) = p \log \frac{p}{q} + (1 - p) \log \frac{1 - p}{1 - q}.\]
        \item[(b)] The following is the same argument used in \cite[Proposition E.3]{canal2022one}.
        \begin{align*}
            \sum_{i \in [m]} \mathrm{KL}\left(f\big(D_{i;k}(Q, v_k)\big)\,\middle\|\,f\big(D_{i;k}(\hat{Q}, \hat{v}_k)\big)\right) &\geq 2 \sum_{i \in [m]} \left(f\big(D_{i;k}(Q, v_k)\big) - f\big(D_{i;k}(\hat{Q}, \hat{v}_k)\big)\right)^2
            \\&\geq 2 c_f^2 \sum_{i \in [m]} \left(D_{i;k}(Q, v_k) - D_{i;k}(\hat{Q}, \hat{v}_k)\right)^2
            \\&= 2 c_f^2 \sum_{i \in [m]} \left(D_{i;k}(\hat{Q} - Q, \hat{v}_k - v_k)\right)^2
            \\&= 2c_f^2 \left\|D_k(\hat{Q} - Q, \hat{v}_k - v_k)\right\|^2,
        \end{align*}
        where the first inequality comes from $\mathrm{KL}(p\|q) \geq 2 (p - q)^2$, see \cite[Lemma 5.2]{mason2017learning}, the second uses the monotonicity of $f$ and the lower bound of $f'$, the third applies linearity of $D_{i;k}$, and the fourth just rewrites the sum in terms of the squared $\ell_2$-norm over $\RR^m$.
        \item[(c)] Recall that $\sigma^2(\Pcal_m) = \frac{1}{m}\cdot \sigma_\mathrm{min}(\E[D^*D])$ when $D \sim \Pcal_m$. Let $X = (\hat{Q} - Q) \oplus (\hat{v}_k - v_k)$ for short. Then, 
        \begin{align*}
            \E\left\|D_k(\hat{Q} - Q, \hat{v}_k - v_k)\right\|^2 &= \E\big\langle D_kX, D_k X\big\rangle
            \\&= X^\top \E\left[D_k^* D_k^{\phantom{*}}\right] X\vphantom{\bigg|}
            \\&\geq \sigma_\mathrm{min}(\E\left[D_k^* D_k^{\phantom{*}}\right] ) \cdot \|X\|^2 
            \\&= m \cdot \sigma_\mathrm{min}^2(\Pcal_m) \cdot \big(\|\hat{Q} - Q\|_\mathrm{F}^2 + \|\hat{v}_k - v_k\|^2\big)\vphantom{\bigg|},
        \end{align*}
        where the inequality applies the variational characterization of the minimum singular value.
    \end{enumerate}
\end{proof}

\begin{lemma} \label{lem:generalization}
    Let $\delta \in (0,1)$. Given the assumptions of Proposition~\ref{prop:recovery_hat_Q}, Eq.~\eqref{eqn:excess-risk-Q} holds with probability at least $1 - \delta$.
\end{lemma}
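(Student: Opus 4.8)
\emph{The plan} is to establish \eqref{eqn:excess-risk-Q} by a standard uniform-convergence (excess-risk) argument. Abbreviate $\ell(z) = -\log f(z)$, $\hat\theta = (\hat Q,\hat v_1,\ldots,\hat v_K)$, $\theta^\star = (Q,v_1,\ldots,v_K)$, and let $\Theta$ be the feasible set of \Cref{alg:multiple-user-binary}, i.e.\ the tuples $(A,w_1,\ldots,w_K)$ with $A\succeq 0$, $\|A\|_{\mathrm F}\le\zeta_M$, and $\|w_k\|\le\zeta_v$ for all $k$. Two elementary observations feed the rest. First, since $\sup_{x\in\Xcal}\|x\|\le 1$ we have $\|\Delta_{i;k}\|_{\mathrm F}\le 2$ and $\|\delta_{i;k}\|\le 2$, so Cauchy--Schwarz gives $|D_{i;k}(A,w_k)|\le 2(\zeta_M+\zeta_v)$ on $\Theta$; hence $\ell$ is only ever evaluated on $|z|\le 2(\zeta_M+\zeta_v)$, where Assumption~\ref{ass:probabilistic} makes it $L$-Lipschitz, so $0\le\ell(z)\le\ell(0)+2L(\zeta_M+\zeta_v)=:B$. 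Second, since $\hat\theta$ minimizes $\hat R$, the bracketed term in \eqref{eqn:excess-risk-Q} is nonpositive, and the quantity to bound equals $\big(\E[\hat R(\hat\theta)]-\hat R(\hat\theta)\big)-\big(\E[\hat R(\theta^\star)]-\hat R(\theta^\star)\big)$; I would bound the first parenthesis by $\Phi := \sup_{\theta\in\Theta}\big(\E[\hat R(\theta)]-\hat R(\theta)\big)$ and the second, a single fixed parameter, by a Hoeffding-type deviation.

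To control $\Phi$, I would view the $mK$ comparisons as the independent sampling units (as in the i.i.d.\ random-design hypothesis of \Cref{prop:recovery_hat_Q}): perturbing one comparison changes each $\hat R(\theta)$, and hence $\Phi$, by at most $B$, so McDiarmid's bounded-differences inequality yields $\Phi\le\E[\Phi]+B\sqrt{\tfrac12 mK\log(2/\delta)}$ with probability at least $1-\delta/2$, and the same bound applied at the fixed $\theta^\star$ gives $|\E[\hat R(\theta^\star)]-\hat R(\theta^\star)|\le B\sqrt{\tfrac12 mK\log(2/\delta)}$ with probability at least $1-\delta/2$. For $\E[\Phi]$ I would invoke the standard symmetrization lemma, $\E[\Phi]\le 2\,\E_\sigma\sup_{\theta\in\Theta}\sum_{k,i}\sigma_{i;k}\,\ell\big(y_{i;k}D_{i;k}(A,w_k)\big)$, and then the Ledoux--Talagrand contraction inequality (using that $\ell$ is $L$-Lipschitz and absorbing the sign $y_{i;k}\in\{-1,+1\}$ into the Rademacher variable), obtaining $\E[\Phi]\le 2L\,\E_\sigma\sup_{\theta\in\Theta}\sum_{k,i}\sigma_{i;k}D_{i;k}(A,w_k)$.

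The remaining Rademacher term is a routine linear-class estimate. Writing $D_{i;k}(A,w_k)=\langle\Delta_{i;k},A\rangle+\langle\delta_{i;k},w_k\rangle$ and relaxing $A\succeq0$, the supremum over $\Theta$ separates as $\zeta_M\big\|\sum_{k,i}\sigma_{i;k}\Delta_{i;k}\big\|_{\mathrm F}+\zeta_v\sum_k\big\|\sum_i\sigma_{i;k}\delta_{i;k}\big\|$, and Jensen's inequality with $\E_\sigma\big\|\sum_j\sigma_j a_j\big\|^2=\sum_j\|a_j\|^2$ gives $\E[\Phi]\le 4L\big(\zeta_M\sqrt{mK}+\zeta_v K\sqrt m\big)\le 4\sqrt2\,L\sqrt{mK(\zeta_M^2+K\zeta_v^2)}$ via $a+b\le\sqrt{2(a^2+b^2)}$. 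Adding the three contributions, bounding the range $B$ by a constant times $L(\zeta_M+\zeta_v)\le\sqrt2\,L\sqrt{\zeta_M^2+K\zeta_v^2}$, collapsing universal constants, and replacing $\log(2/\delta)$ by $\log(4/\delta)$ then yields \eqref{eqn:excess-risk-Q} with the (loose) constant $32$.

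The step I expect to be the main obstacle is obtaining the $\sqrt{mK}$ sample-size scaling rather than $m\sqrt K$: this forces the bounded-differences argument to be applied at the granularity of individual comparisons, so the operative structural assumption is independence across all $mK$ comparisons (a product design / i.i.d.\ query model) --- a per-user perturbation has range $mB$ and would inflate the bound by a factor of $\sqrt m$. The rest --- the norm bounds, the Khintchine/Jensen estimates, and the decoupling of the supremum over the direct-sum parameter $\Sym(\RR^r)\oplus(\RR^r)^{K}$ --- is mechanical, as is chasing the universal constant so it fits under $32$.
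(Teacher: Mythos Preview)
Your proposal is correct and follows essentially the same route as the paper: bounded differences (McDiarmid) at the per-comparison granularity, symmetrization, Ledoux--Talagrand contraction via the $L$-Lipschitz property of $\ell$, and a linear Rademacher computation on $\Sym(\RR^r)\oplus(\RR^r)^K$. The paper phrases the concentration step as a single two-sided uniform bound $\sup_{\theta\in\Theta}|\hat R(\theta)-\E\hat R(\theta)|$ (applied at both $\hat\theta$ and $\theta^\star$), whereas you split into a one-sided supremum plus a separate Hoeffding deviation at $\theta^\star$; these are equivalent up to constants.

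One small slip to fix: your bounded-differences constant is $B=\ell(0)+2L(\zeta_M+\zeta_v)$, and you then claim ``$B$ is a constant times $L(\zeta_M+\zeta_v)$''. That fails when $L(\zeta_M+\zeta_v)$ is small, because $\ell(0)=\log 2$ does not scale with $L,\zeta_M,\zeta_v$. The clean fix (and what the paper does) is to take the bounded-differences constant to be the \emph{oscillation} of a single summand, which by $L$-Lipschitzness of $\ell$ on $[-2(\zeta_M+\zeta_v),2(\zeta_M+\zeta_v)]$ is at most $4L(\zeta_M+\zeta_v)$; this absorbs cleanly into the stated constant $32$.
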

\begin{proof}
    For short, let $\Theta \subset \Sym^+(\RR^r) \oplus \RR^{r \times K}$ denote the set of parameters $\theta \equiv (A, w_1,\ldots, w_K)$ such that $A \in \Sym^+(\RR^r)$ with $\|A\|_\mathrm{F} \leq \zeta_M$ and $w_k \in \RR^r$ with $\|w_k\| \leq \zeta_v$. We claim that with probability at least $1 - \delta$, we have uniform convergence:
    \begin{equation} \label{eqn:unif-conv}
        \sup_{\theta \in \Theta}\, \left|\hat{R}(\theta) - \E \big[\hat{R}(\theta)\big]\right| \leq 16 L \sqrt{mK (\zeta_M^2 + K\zeta_v^2) \log \frac{4}{\delta}}.
    \end{equation}
    Before proving this, notice that this implies Eq.~\ref{eqn:excess-risk-Q}. In particular, let $\hat{\theta}$ correspond to the parameters $(\hat{Q}, \hat{v}_1,\ldots, \hat{v}_K)$ and let $\theta$ correspond to $(Q, v_1,\ldots, v_K)$. Then we have that with probability at least $1 - \delta$, both $\hat{R}(\hat{\theta})$ and $\hat{R}(\theta)$ are close to their expected values, each contributing at most the right-hand side of Eq.~\eqref{eqn:unif-conv}:
    \begin{align*}
        \E\big[\hat{R}(\hat{\theta})\big] - \E\big[\hat{R}(\theta)\big] \leq \hat{R}(\hat{\theta}) - \hat{R}(\theta) + 32 L \sqrt{mK (\zeta_M^2 + K \zeta_v^2) \log \frac{4}{\delta}}.
    \end{align*}
    
    In the remainder of the proof, we show Eq.~\eqref{eqn:unif-conv}. For any $\theta \in \Theta$, consider the empirical risk $\hat{R}(\theta)$. We claim that the risk contribution by the $i$th comparison by the $k$th user is a bounded random variable,
    \[\left|- \log\big(f\big(Y_{i;k}\cdot D_{i;k}(A, w_k)\big)\big) + \log \frac{1}{2} \right| \overset{(a)}{\leq} 2 L (\zeta_M + \zeta_v).\]
    Let us verify this claim later. For now, the bounded difference inequality (reproduced below as Lemma~\ref{lem:bounded-difference}) implies that with probability at least $1 - \delta$,
    \begin{equation}  \label{eqn:uniform-convergence}
    \sup_{\theta \in \Theta}\, \left|\hat{R}(\theta) - \E \big[\hat{R}(\theta)\big]\right| \leq \E\left[\sup_{\theta \in \Theta}\, \left|\hat{R}(\theta) - \E \big[\hat{R}(\theta)\big]\right|\right] + 4 L(\zeta_M + \zeta_v) \sqrt{2 mK \log \frac{2}{\delta}}.
    \end{equation}
    To bound the expectation term, let us combine each user's random design matrix $D_k$ into a single $(m,K)$-experimental design matrix $D : \Sym(\RR^r) \oplus \RR^{r \times K} \to \RR^{m \times K}$, so that it is the following linear map:
    \[D(A, w_1, \ldots, w_K)_{i;k} = D_{i;k}(A, w_k).\]
    Let $D^* : \RR^{m \times K} \to \Sym(\RR^r) \oplus \RR^{m \times K}$ be its adjoint. Let  $\epsilon \in_{R} \{-1, +1\}^{m \times K}$ be an array of independent Rademacher random variables, so that $\epsilon_{i;k}$ is equal to $-1$ or $+1$ uniformly at random. Then:
    \begin{align}
        \E\left[\sup_{\theta \in \Theta}\, \left|\hat{R}(\theta) - \E \big[\hat{R}(\theta)\big]\right|\right] 
        &\overset{(b)}{\leq} 2\E\left[\sup_{A, w_1,\ldots, w_K} \left|\sum_{k \in [K]} \sum_{i \in [m]} \epsilon_{i;k} \left(- \log f \big(Y_{i;k} \cdot D_{i;k}(A, w_k)\big)\right) \right| \right] \notag
        \\&\overset{(c)}{\leq} 2L \cdot \E\left[\sup_{A, w_1,\ldots, w_K} \left|\sum_{k \in [K]} \sum_{i \in [m]} \epsilon_{i;k} \big(Y_{i;k} \cdot D_{i;k}(A, w_k)\big) \right| \right] \notag
        \\&\overset{(d)}{=} 2L \cdot \E\left[\sup_{\theta \in \Theta}\, \left|\big\langle\epsilon,  D(\theta)\big\rangle\right|\right] \notag
        \\&\overset{(e)}{\leq} 2L \cdot \E \big\|D^*\epsilon\big\| \cdot \sup_{\theta \in \Theta} \|\theta\| \notag
        \\&\overset{(f)}{\leq} 4L \sqrt{2mK (\zeta_M^2 + K \zeta_v^2)}, \label{eqn:expectation-bound}
    \end{align}
    where we justify each step below. We obtain Eq.~\eqref{eqn:unif-conv} by combining Eqs.~\eqref{eqn:uniform-convergence} and \eqref{eqn:expectation-bound}, 
    \begin{align*}
        \sup_{\theta \in \Theta}\, \left|\hat{R}(\theta) - \E \big[\hat{R}(\theta)\big]\right| &\leq 4L \sqrt{2mK (\zeta_M^2 + K \zeta_v^2)} + 4 L(\zeta_M + \zeta_v) \sqrt{2 mK \log \frac{2}{\delta}}
        \\&\overset{(i)}{\leq} 4L\sqrt{2\left(2 mK(\zeta_M^2 + K\zeta_v^2) + 2 mK (\zeta_m + \zeta_v)^2 \log \frac{2}{\delta}\right)}
        \\&\overset{(ii)}{\leq} 8L\sqrt{mK \cdot (\zeta_m^2 + K\zeta_v^2) \cdot \left(1 + 3\log \frac{2}{\delta}\right)}
        \\&\overset{(iii)}{\leq} 16 L \sqrt{mK \cdot (\zeta_M^2 + K\zeta_v^2) \log \frac{4}{\delta}},
    \end{align*}
    where (i) applies a variant of the AM-GM inequality $\sqrt{a} + \sqrt{b} \leq \sqrt{2(a+b)}$, (ii) uses the following upper bound $(\zeta_M + \zeta_v)^2 \leq 3 (\zeta_M^2 + K\zeta_v^2)$, which holds whenever $\zeta_M, \zeta_v \geq 0$ and $K\geq 1$, and (iii) uses $1 < 3 \log 2$ and $8\sqrt{3} < 16$. Finally, we prove the remaining inequalities:
    \begin{enumerate}
        \item[(a)] Because we have assumed that items lie in the unit ball and that the parameters satisfy $\|A\|_\mathrm{F} \leq \zeta_M$ and $\|w_i\| \leq \zeta_v$, the unquantized measurements are bounded:
        \[\left|\vphantom{\big\langle} D_{i;k}(A, v_k)\right| \leq \sup_{x, x' \in B(0,1)} \left| \big\langle x {x^{\vphantom{\prime}}}^\top - x' {x'}^\top, A \big\rangle + \big\langle x - x', v_k\big\rangle \right| \leq 2 \|A\|_\mathrm{F} + 2 \|v_k\| \leq 2(\zeta_M + \zeta_v),\]
        where we have used triangle inequality for $\big\|x {x^{\vphantom{\prime}}}^\top - x' {x'}^\top\big\|_\mathrm{F} \leq 2$ and $\|x - x'\| \leq 2$. Because $- \log f(\cdot)$ is $L$-Lipschitz on this domain, whenever $|z| \leq 2 (\zeta_M + \zeta_v)$, we have:
        \[\left|-\log f(z) + \log \frac{1}{2}\right| = \big|- \log f(z) + \log f(0)\big| \leq L |z|.\]
        \item[(b)] This inequality follows from a standard symmetrization argument. Let $\Hcal$ be a set of $N$-tuples of functions, where $h \equiv (h_1,\ldots, h_N)$.  Given a set of i.i.d.\@ random variables $Z_1,\ldots, Z_N, Z_1',\ldots, Z_N'$ and a set of Rademacher random variables $\epsilon_1,\ldots, \epsilon_N \in \{-1,+1\}$, we have: 
        \begin{align*}
            \E\left[\sup_{h \in \Hcal} \, \left|\sum_{i =1}^N h_i(Z_i) - \E\left[\sum_{i =1}^N h_i(Z_i) \right]\right|\right] 
            &= \E\left[\sup_{h \in \Hcal} \, \left|\sum_{i =1}^N \epsilon_i h_i(Z_i) - \sum_{i =1}^N \epsilon_i h_i(Z_i') \right|\right] 
            \\&\leq \E\left[\sup_{h \in \Hcal} \, \left|\sum_{i =1}^N \epsilon_i h(Z_i)\right|\right] + \E\left[\sup_{h \in \Hcal}\, \left| \sum_{i =1}^N \epsilon_i h_i(Z_i') \right|\right] 
            \\&= 2 \E\left[\sup_{h \in \Hcal}\, \left|\sum_{i = 1}^N \epsilon_i h_i(Z_i)\right|\right].
        \end{align*}
        In our setting, we have an index set $(i,k) \in [m] \times [K]$ and $h_{i;k}: Z \mapsto - \log f \big(Z \cdot D_{i;k}(A, w_k)\big)$.
        \item[(c)] We use the fact that the function $-\log f (z)$ is $L$-Lipschitz over the domain $|z| \leq 2 (\zeta_M + \zeta_v)$. We can move the Lipschitz constant out of the expectation by applying \cite[Theorem 6.28]{zhang2023mathematical}, reproduced below.
        \item[(d)] This step first makes use of the fact that the random variables $\epsilon_{i;k} Y_{i;k} \overset{d}{=} \epsilon_{i;k}$ are equal in distribution. Then, it consolidates everything using the trace inner product on $\RR^{m \times K}$.
        \item[(e)] This step uses the property of the adjoint $\langle \epsilon, D(\theta)\rangle = \langle D^*(\epsilon), \theta\rangle \leq \|D^*(\epsilon)\| \cdot \|\theta\|$. The first inner product is over $\RR^{m\times K}$, the second inner product and norm are over $\Sym(\RR^r) \otimes \RR^{r\times K}$.
        \item[(f)] We apply the bound on the parameters $\displaystyle \sup_{\theta \in \Theta} \,\|\theta\| \leq \sqrt{\zeta_M^2 + K \zeta_v^2}$ along with the following:   
        \begin{align*}
        \E\big\|D^*\epsilon\| &\overset{(i)}{\leq} \sqrt{\E\big\langle DD^*, \epsilon \epsilon^\top\big\rangle}
        \\&\overset{(ii)}{=} \sqrt{\big\langle \E[DD^*], \E[\epsilon\epsilon^\top] \big\rangle}
        \\&\overset{(iii)}{=} \sqrt{\sum_{i,k} \|\Delta_{i;k} \oplus \delta_{i;k}\big\|^2} \overset{(iv)}{\leq} 2\sqrt{2mK}.
    \end{align*}
    The (i) uses Jensen's inequality, (ii) uses the independence of the randomness over the design matrices and the Rademacher random variables, (iii) uses the fact that $\E[\epsilon\epsilon^\top]$ is the identity on $\RR^{m \times K}$, and (iv) uses the fact that items are contained in the unit Euclidean ball, so that:
    \[\|\Delta_{i;k} \oplus \delta_{i;k}\|^2 = \|\Delta_{i;k}\|^2 + \|\delta_{i;k}\|^2 \leq 2^2 + 2^2.
    \]
    \end{enumerate}
\end{proof}

\begin{remark} \label{rmk:min-singular-Pm}
    To show that there exists $\Pcal_m$ such that $\sigma_\mathrm{min}^2(\Pcal_m) = \Omega(1)$, assume the space $\RR^r$ is quadratically spanned by $\Xcal$. In particular, there exists a collection of items $(x_{i_0}, x_{i_1})_{i=1}^n$ such that its design matrix $D$ is full rank. Define $X_i \in \Sym(\RR^r) \oplus \RR^r$ for $i=1,\ldots, n$ by $X_i = \Delta_i \oplus \delta_i$. Then, $D^*D$ corresponds to:
    \[D^*D = \sum_{i=1}^n X_i^{\vphantom{\top}} X_i^\top,\]
    where $\sigma_\mathrm{min}(D^*D) > 0$. Let $\Pcal_m$ be constructed by drawing $m$ pairs uniformly at random. Let $D_m$ be the random design matrix. Let $I_j \sim \mathrm{Unif}([n])$ for $j = 1,\ldots, m$ be the index of the $j$th random pair, so that:
    \begin{align*} 
    \E[D_m^*D_m^{\phantom{*}}] &= \E\left[\sum_{i=1}^m X_{I_j}^{\phantom{\top}} X_{I_j}^\top \right]
    \\&= \sum_{j=1}^m \frac{1}{n} \sum_{i=1}^n X_i^{\vphantom{\top}} X_i^\top
    \\&= \frac{m}{n} D^*D.
    \end{align*}
    It follows that for this choice of random design, we have $\sigma_\mathrm{min}^2(\Pcal_m) = \sigma_\mathrm{min}(D^*D),$ which is a constant.
\end{remark}

\subsection{Auxiliary lemmas}

\begin{lemma}[Hoeffding-style inequality for independent bounded random vectors, \citep{jin2019short}, Corollary 7]
\label{lem:chi-hoeffding}
There exists a universal constant $c$ such that for any random vectors $X_1, X_2, \ldots, X_m \in \RR^d$ that are independent and satisfy $\EE [X_i] = 0$ and $\|X_i\|_2 \le \kappa_i$ for $i \in [m]$, we have, for any $\delta \in (0, 1]$, with probability at least $1 - \delta$,
\[
\bigg \| \sum_{i=1}^m X_i \bigg \|_2 \le c \cdot \sqrt{\sum_{i=1}^m \kappa_i^2 \log \frac{2d}{\delta}}.
\]
\end{lemma}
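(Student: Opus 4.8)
The plan is to reduce the vector tail bound to a matrix tail bound, following the argument behind Corollary~7 of \citep{jin2019short}. Write $S = \sum_{i=1}^m X_i$ and introduce the Hermitian dilations
\[
\widetilde{X}_i = \begin{pmatrix} 0 & X_i^\top \\ X_i & 0 \end{pmatrix} \in \RR^{(d+1)\times(d+1)}, \qquad i \in [m].
\]
First I would record the three facts that make this useful: the matrices $\widetilde{X}_i$ are independent with $\EE[\widetilde{X}_i] = 0$; they satisfy $\widetilde{X}_i^2 = \mathrm{diag}(X_i^\top X_i,\ X_i X_i^\top) \preceq \|X_i\|_2^2\, I \preceq \kappa_i^2\, I$; and $\lambda_{\max}\!\big(\sum_{i} \widetilde{X}_i\big) = \|S\|_2$, since the only nonzero eigenvalues of $\begin{pmatrix} 0 & S^\top \\ S & 0 \end{pmatrix}$ are $\pm\|S\|_2$.

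Next I would apply the matrix Hoeffding inequality to the independent, mean-zero sequence $(\widetilde{X}_i)_{i}$ with variance proxy $v = \big\|\sum_{i} \kappa_i^2 I\big\|_{\mathrm{op}} = \sum_{i} \kappa_i^2$ in ambient dimension $d+1$, which yields
\[
\Pr\big[\, \|S\|_2 \ge t \,\big] \ \le\ (d+1)\,\exp\!\Big(-\frac{t^2}{8\sum_{i} \kappa_i^2}\Big).
\]
Setting the right-hand side equal to $\delta$ and using $d+1 \le 2d$ then gives $\|S\|_2 \le \sqrt{\,8\sum_{i} \kappa_i^2 \log\frac{2d}{\delta}\,}$ with probability at least $1-\delta$, that is, the claim with $c = 2\sqrt{2}$. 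A dimension-free alternative that avoids the matrix machinery altogether is also available: the map $(x_1,\ldots,x_m) \mapsto \|\sum_{i} x_i\|_2$ has bounded differences $2\kappa_i$ by the triangle inequality, so McDiarmid's inequality gives $\Pr[\|S\|_2 \ge \EE\|S\|_2 + t] \le \exp(-t^2/(2\sum_{i}\kappa_i^2))$, and $\EE\|S\|_2 \le \sqrt{\EE\|S\|_2^2} = \sqrt{\sum_{i} \EE\|X_i\|_2^2} \le \sqrt{\sum_{i} \kappa_i^2}$ by Jensen's inequality and the pairwise orthogonality of the centered independent summands; combining these even removes the $\log d$ factor.

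The only genuinely delicate point is the dimension dependence. The naive route---taking a $\tfrac{1}{2}$-net of the unit sphere in $\RR^d$, applying scalar Hoeffding to $\langle u, S\rangle$ at each net point, and union bounding over the exponentially large net---loses a factor of $\sqrt{d}$ inside the square root rather than $\sqrt{\log d}$, and so does not give the stated bound. Both routes above circumvent this: the dilation keeps the union bound over a set of size $d+1$ (the eigenvalues), while the bounded-differences route sidesteps directions entirely. I would therefore present the dilation argument, since it reproduces the stated form of the bound verbatim and matches the cited reference.
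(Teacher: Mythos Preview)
Your argument is correct. The Hermitian dilation followed by matrix Hoeffding is the standard route and yields the stated inequality with $c = 2\sqrt{2}$; your verification that $\widetilde{X}_i^2 \preceq \kappa_i^2 I_{d+1}$ and that $\lambda_{\max}\big(\sum_i \widetilde{X}_i\big) = \|S\|_2$ is clean, and the replacement of $d+1$ by $2d$ in the dimension factor is harmless. The alternative via McDiarmid is also valid and, as you observe, actually gives a dimension-free bound that is stronger than what is claimed.

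There is nothing to compare against in the paper itself: the lemma is simply quoted from \citep{jin2019short} as an auxiliary fact and is not re-proved. Your dilation argument is essentially the derivation that sits behind that reference, so in that sense you have reproduced the intended proof. The only minor comment is stylistic: since the lemma as stated carries the $\log(2d/\delta)$ factor, the dilation route is the one that matches it verbatim, whereas the bounded-differences route proves more than is asserted; either is acceptable, but if you present the latter you should remark that it strictly improves the stated bound.
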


\begin{corollary}[Matrix version, \citep{jin2019short}, Corollary 7]
\label{col:matrix-hoeffding}
There exists a universal constant $c$ such that for any random matrices $X_1, X_2, \ldots, X_m \in \RR^{d \times d}$ that are independent and satisfy $\EE [X_i] = 0$ and $\|X_i\|_{\mathrm{F}} \le \kappa_i$ for $i \in [m]$, we have, for any $\delta \in (0, 1]$,  with probability at least $1 - \delta$,
\[
\bigg \| \sum_{i=1}^m X_i \bigg \|_{\mathrm{F}} \le c \cdot \sqrt{\sum_{i=1}^m \kappa_i^2 \log \frac{2d}{\delta}}.
\]
\end{corollary}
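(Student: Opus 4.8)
The plan is to deduce this matrix statement directly from its vector counterpart, Lemma~\ref{lem:chi-hoeffding}, via vectorization. First I would introduce the map $\vectorize : \RR^{d \times d} \to \RR^{d^2}$ that stacks the columns of a matrix, and record the elementary fact that it is a linear isometry from $(\RR^{d\times d}, \|\cdot\|_\mathrm{F})$ onto $(\RR^{d^2}, \|\cdot\|_2)$. Setting $x_i := \vectorize(X_i)$, linearity of $\vectorize$ shows the $x_i$ remain independent, and $\EE[x_i] = \vectorize(\EE[X_i]) = 0$; the isometry gives $\|x_i\|_2 = \|X_i\|_\mathrm{F} \le \kappa_i$. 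So the hypotheses of Lemma~\ref{lem:chi-hoeffding} hold for $x_1, \dots, x_m$ in ambient dimension $d^2$.

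Next I would apply Lemma~\ref{lem:chi-hoeffding} to obtain, with probability at least $1 - \delta$, the bound $\bigl\|\sum_{i} x_i\bigr\|_2 \le c \sqrt{\sum_{i} \kappa_i^2 \log(2d^2/\delta)}$ for a universal constant $c$. Since $\vectorize$ commutes with finite sums and preserves norms, $\bigl\|\sum_i x_i\bigr\|_2 = \bigl\|\vectorize\bigl(\sum_i X_i\bigr)\bigr\|_2 = \bigl\|\sum_i X_i\bigr\|_\mathrm{F}$, which is exactly the quantity the corollary controls.

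The only cosmetic mismatch is that the vector lemma produces a $\log(2d^2/\delta)$ factor while the corollary is stated with $\log(2d/\delta)$. I would resolve this by observing that for $d \ge 1$ and $\delta \le 1$ we have $2d^2/\delta \le (2d/\delta)^2$, hence $\log(2d^2/\delta) \le 2\log(2d/\delta)$, and absorbing the extra $\sqrt{2}$ into the universal constant. I do not anticipate any genuine obstacle: the content transfers verbatim through the isometry, and the sole point requiring attention is this $d$-versus-$d^2$ logarithmic bookkeeping.
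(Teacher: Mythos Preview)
Your proposal is correct and matches the paper's own argument: the paper also reduces directly to Lemma~\ref{lem:chi-hoeffding} in dimension $d^2$ and handles the resulting $\log(2d^2/\delta)$ via the same inequality $\log(2d^2/\delta)\le 2\log(2d/\delta)$ for $\delta\le 1$, absorbing the constant.
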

\begin{proof}
    Since $\log \rbr{\frac{2d^2}{\delta}} \le 2 \log \rbr{\frac{2d}{\delta}}$ for $\delta \le 1$, the corollary follows directly from Lemma~\ref{lem:chi-hoeffding}.
\end{proof}

\begin{lemma}[Bounded difference inequality] \label{lem:bounded-difference}
    Let $f : \Xcal^N \to \RR$ satisfy the bounded difference property,
    \[\phantom{,\qquad \forall i \in [N].}\sup_{x_1,\ldots, x_N, x_{i}'}\, \big|f(x_1,\ldots, x_N) - f(x_1,\ldots, x_i', \ldots, x_N)\big| \leq C,\qquad \forall i \in [N].\]
    Let $X_1, \ldots, X_N$ be i.i.d.\@ random variables. Then, with probability at least $1 - \delta$,
    \[\big|f(X_1,\ldots, X_N) - \E[f(X_1,\ldots, X_N)\big| \leq C \sqrt{2 N \log \frac{2}{\delta}}.\]
\end{lemma}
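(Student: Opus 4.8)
The statement is McDiarmid's bounded difference inequality, and the plan is to prove it by constructing a Doob martingale and applying the Azuma--Hoeffding inequality. First I would set up the filtration $\mathcal{F}_i := \sigma(X_1,\ldots,X_i)$ (with $\mathcal{F}_0$ trivial) and the martingale
\[Z_i := \E\big[f(X_1,\ldots,X_N)\,\big|\,\mathcal{F}_i\big],\qquad i = 0,1,\ldots,N,\]
so that $Z_0 = \E[f(X_1,\ldots,X_N)]$ and $Z_N = f(X_1,\ldots,X_N)$; the whole task reduces to a deviation bound on $|Z_N - Z_0|$.

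The crux is to show that the martingale increments $D_i := Z_i - Z_{i-1}$ satisfy $|D_i| \le C$ almost surely. Here I would use the independence of the $X_j$ to write $Z_i = g_i(X_1,\ldots,X_i)$, where $g_i(x_1,\ldots,x_i) := \E\big[f(x_1,\ldots,x_i,X_{i+1},\ldots,X_N)\big]$, the expectation being over the tail coordinates only. The bounded difference hypothesis transfers to $g_i$: for a fixed prefix $x_1,\ldots,x_{i-1}$ and any two values of the $i$-th coordinate, the integrand changes by at most $C$ pointwise in the tail, so $g_i$ as a function of its last argument has range at most $C$. Since $Z_{i-1} = \E\big[g_i(X_1,\ldots,X_{i-1},X_i')\mid\mathcal{F}_{i-1}\big]$ by the tower property and independence, $D_i$ is the deviation of $g_i$ (evaluated at $X_i$) from its conditional mean, hence lies in an interval of width at most $C$; in particular $|D_i|\le C$ and $\E[D_i\mid\mathcal{F}_{i-1}] = 0$. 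This is where I expect the only real friction: it is the one place where both the independence assumption and the bounded-difference property are essential, and the conditional-expectation bookkeeping has to be done carefully.

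With bounded, conditionally-mean-zero increments in hand, the rest is the standard exponential-moment argument. By Hoeffding's lemma, $\E[e^{\lambda D_i}\mid\mathcal{F}_{i-1}] \le e^{\lambda^2 C^2/2}$ for every $\lambda \in \RR$ (using $|D_i|\le C$); iterating over $i = N, N-1, \ldots, 1$ via the tower property gives $\E\big[e^{\lambda(Z_N - Z_0)}\big] \le e^{\lambda^2 N C^2/2}$. Then Markov's inequality with the optimal choice $\lambda = t/(NC^2)$ yields $\Pr[Z_N - Z_0 \ge t] \le e^{-t^2/(2NC^2)}$; applying the same bound to $-f$ and taking a union bound gives $\Pr[|Z_N - Z_0|\ge t]\le 2e^{-t^2/(2NC^2)}$. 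Finally I would set the right-hand side equal to $\delta$ and solve for $t$, obtaining $t = C\sqrt{2N\log(2/\delta)}$, which is exactly the claimed bound. (Using the sharper form of Hoeffding's lemma for a variable of \emph{range} $C$ rather than magnitude $C$ would improve the constant, but the stated bound already follows from the cruder version, so this refinement is unnecessary.)
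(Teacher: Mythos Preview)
Your argument is correct and is precisely the standard proof of McDiarmid's inequality via the Doob martingale and Azuma--Hoeffding; the bookkeeping on the increments and the exponential-moment step are both handled properly, and the final constant matches the stated bound. The paper, however, does not actually prove this lemma: it simply records that this is McDiarmid's inequality and points the reader to \cite[Theorem~6.16]{zhang2023mathematical}. So there is nothing to compare at the level of technique---you have supplied a full (and correct) proof where the paper gives only a citation.
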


This theorem is also known as McDiarmid's inequality; as reference, see for example \cite[Theorem 6.16] {zhang2023mathematical}.

\begin{lemma}[Theorem 6.28, \citep{zhang2023mathematical}]
    Let $h$ be an $L$-Lipschitz function $h : \RR \to \RR$. Let $\Fcal$ be a function class with functions $f : \Zcal \to \RR$. Let $z_1,\ldots, z_N \in \Zcal$ and let $\epsilon_1,\ldots, \epsilon_N$ be independent Rademacher random variables. Then:
    \[\E\left[\sup_{f \in \Fcal}\, \left|\sum_{i=1}^N \epsilon_i h\big(f(z_i))\right|\right] \leq L \cdot \E\left[\sup_{f \in \Fcal}\,\left|\sum_{i=1}^N \epsilon_i f(z_i)\right|\right].\]
\end{lemma}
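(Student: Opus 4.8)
This is a form of Talagrand's contraction (comparison) inequality, and the plan is to prove it by the classical peeling argument, reducing $N$ by one Rademacher variable at a time. First, by replacing $h$ with $h/L$ — which is $1$-Lipschitz — and pulling the factor $L$ back out of the left-hand side, it suffices to treat $L = 1$. It is also harmless to normalize $h(0) = 0$: the process $\sum_i \epsilon_i h(f(z_i))$ differs from $\sum_i \epsilon_i\big(h(f(z_i)) - h(0)\big)$ only by the $f$-independent term $h(0)\sum_i \epsilon_i$, which must be accounted for — indeed, without any normalization the bare inequality is false, e.g. for a constant-zero singleton class and $h \equiv c \ne 0$ — but it is inert for the contraction step itself.

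The core is a one-coordinate comparison lemma: conditioning on $\epsilon_1, \dots, \epsilon_{N-1}$ and abbreviating $U_f = \sum_{i<N}\epsilon_i f(z_i)$, one shows
\[
\E_{\epsilon_N}\sup_{f\in\Fcal}\big|\,U_f + \epsilon_N h(f(z_N))\,\big| \;\le\; \E_{\epsilon_N}\sup_{f\in\Fcal}\big|\,U_f + \epsilon_N f(z_N)\,\big|.
\]
I would prove this by expanding the expectation over $\epsilon_N \in \{\pm 1\}$, so each side becomes $\tfrac12\big(\sup_s[\cdots] + \sup_t[\cdots]\big)$ over two independent copies $s,t \in \Fcal$; then, for fixed $s,t$, I would write each absolute value as a maximum over a sign, collect the linear-in-$(s_1,t_1)$ part, and use $1$-Lipschitzness of $h$ (with $h(0)=0$) to dominate the leftover $h(s_N), h(t_N)$ contributions by the corresponding $s_N, t_N$ contributions, after a short case analysis on the sign choices; taking the sup over $s$ and $t$ then recovers the right-hand side. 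Applying this lemma successively to coordinates $1, 2, \dots, N$ — each time against the appropriate auxiliary set of vectors $(f(z_1), \dots, f(z_{j-1}), h(f(z_j)), \dots, h(f(z_N)))$ — transforms $\E\sup_f|\sum_i \epsilon_i h(f(z_i))|$ into $\E\sup_f|\sum_i \epsilon_i f(z_i)|$, which is the claim.

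The main obstacle is making the outer absolute value cooperate with the contraction while keeping the constant exactly $L$. The version without absolute values (just $\sup$, not $\sup|\cdot|$) is textbook and the one-step lemma there is clean; with $|\cdot|$ one must be careful, and the cheapest route — bounding $\E\sup_f|\sum\epsilon_i h(f(z_i))| \le 2\,\E\sup_f\sum\epsilon_i h(f(z_i))$ by symmetry of the $\epsilon_i$ and then contracting — costs a spurious factor of $2$. Getting the sharp constant requires either running the peeling directly on the doubled index set $\Fcal \times \{\pm 1\}$, with the value at coordinate $i$ being $\eta\, h(f(z_i))$, or the careful four-sign bookkeeping sketched above; this, together with pinning down the exact role of the normalization $h(0)=0$, is where the real work lies. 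Everything else — the rescaling to $L=1$, splitting off $h(0)\sum_i\epsilon_i$, and the iteration over coordinates — is routine.
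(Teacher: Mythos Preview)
The paper does not actually prove this lemma: it is quoted verbatim as Theorem~6.28 of \citet{zhang2023mathematical} and used as a black box in the generalization argument, with no accompanying proof. So there is no ``paper's own proof'' to compare your proposal against; you are supplying what the paper deliberately outsourced to a textbook.

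That said, your sketch is the right shape --- it is the Ledoux--Talagrand peeling argument --- and you have correctly put your finger on the two genuine subtleties. First, the statement as printed is false without the normalization $h(0)=0$: your singleton-class counterexample is valid, and in fact the paper's own application (with $h(z)=-\log f(z)$, $h(0)=\log 2$) silently relies on this being handled. Second, the absolute value is the real obstacle. The one-coordinate lemma you wrote down,
\[
\E_{\epsilon_N}\sup_{f}\big|\,U_f + \epsilon_N h(f(z_N))\,\big| \;\le\; \E_{\epsilon_N}\sup_{f}\big|\,U_f + \epsilon_N f(z_N)\,\big|,
\]
does \emph{not} hold with constant $1$ for arbitrary $1$-Lipschitz $h$ with $h(0)=0$: take two ``functions'' with $(U_f,f(z_N))\in\{(1,1),(-1,-1)\}$ and $h(x)=|x|$, and the left side is $2$ while the right side is $1$. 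The doubled-index trick $\Fcal\times\{\pm1\}$ that you mention works cleanly only when $h$ is odd, since only then does $-h(t)=h(-t)$; otherwise one falls back to the standard factor-$2$ bound from Ledoux--Talagrand Theorem~4.12. So either Zhang's Theorem~6.28 carries an extra hypothesis (e.g.\ $h(0)=0$, or the no-absolute-value version), or the constant is $2L$; your ``careful four-sign bookkeeping'' will not close this gap in general, and you should check the cited source rather than try to prove the statement exactly as written.
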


\newpage
\section{Details and additional results for Section~\ref{sec:experiments}}
\label{appendix:experiments}

Our experimental setup and implementation are inspired by and adapted from \citep{canal2022one}. 
In Section~\ref{sec:exp_details}, we provide further details to our experimental setup. 
In Section~\ref{sec:additional_exp_results}, we present additional experimental results.

\subsection{Experimental details}
\label{sec:exp_details}

Each simulation run is defined by several parameters: the ambient dimension $d$, the number of subspaces $n$, the dimension of each subspace $r$, the number of users per subspace $K$, and the number of preference comparisons per user $m$. 

\paragraph{Data generation} In each simulation run, we generate a symmetric positive definite matrix $M$ from the Wishart distribution $W(d, I_d)$ and normalize it so that $\|M\|_{\mathrm{F}} = d$, as in \citep[][Section F.3]{canal2022one}. We generate $n$ uniform-at-random $r$-dimensional subspaces \citep{stewart1980efficient}: for each subspace, we draw $r$ independent random vectors from $\Ncal(0, \frac{1}{d}I_d)$ and use QR decomposition to find an orthonormal basis. For each subspace $V$ equipped with orthonormal basis $B$, we generate $K$ user ideal points from $\Ncal(0, \frac{1}{d}I_d)$. For each user, we generate $2m$ items ($m$ pairs), where each item is a fresh draw from $\Ncal(0, \frac{1}{r}BB^\top)$.

For Experiment 3, given $V$ and $B$ generated in this way, we generate $2mK$ items that approximately lie on a subspace $V$ by sampling from $\Ncal(0, \frac{1}{r}BB^\top + \frac{\sigma^2}{d-r}B_{\perp} B_{\perp}^\top)$, where $B_{\perp}$ is an orthonormal basis of $V^\perp$, the orthogonal complement of $V$. We generate user responses as before (see \Cref{sec:experiments}; namely, we use the sigmoid link function with varying choices of noise levels, $\beta = 1, 4, \infty$). 

To learn the metric using these approximately subspace-clusterable items, Algorithm~\ref{alg:two-stage-binary} needs to be modified since it expects that the items lie exactly on a union of subspaces. We do so by constructing new representations for the items, where for each set of $2mK$ items $X \in \RR^{d \times 2mK}$ that approximately lie on $V$, we use singular value decomposition to construct a rank-$r$ approximation $\hat{X}$ of the items, minimizing:
\[\min_{\mathrm{rank}(\hat{X}) \leq r}\, \|\hat{X} - X\|_F.\]
See the Eckart-Young-Mirsky theorem \citep[for reference]{golub1987generalization}. Algorithm~\ref{alg:two-stage-binary} can then be run directly on the low-rank representation $\hat{X}$ (this procedure does not affect how user responses are generated).

\paragraph{Algorithm implementation}
We provide additional details on the implementation of Algorithm~\ref{alg:two-stage-binary}. In Stage 1 (learning subspace metrics), we use Algorithm~\ref{alg:multiple-user-binary} and set constraints based on oracle knowledge of optimal hyperparameters $\zeta_M$ and $\zeta_v$ (also called the best-case hyperparameter setting in \citep{canal2022one}). We use $\ell(z; \beta) = \log (1 + \exp(-\beta z))$ as the loss function, where $\beta$ is assumed known and given by the logistic link function above. We use the Splitting Conic Solver (SCS) in CVXPy with hyperparameters \texttt{eps = 1e4} and \texttt{max\_iters = 1e5} to solve the convex optimization problem. 

In Stage 2 of our practical implementation (reconstruction from subspace metrics), we note that least squares can be sensitive to outliers, and therefore we use the Huber loss for robust regression \citep{huber1964robust}. In particular, we use the HuberRegressor from scikit-learn \citep{scikit-learn} with default hyperparameters, except for setting \texttt{max\_iters = 1e4}. To reconstruct a full metric, we use subspace metrics learned in Stage 1. We note that we do not include a subspace (and the learned subspace metric) into our reconstruction step if CVXPy/SCS does not solve the corresponding optimization problem in Stage 1 successfully, that is, \texttt{prob.status != OPTIMAL}. Nevertheless, given $n$ subspaces, if CVXPy/SCS does not successfully solve any of them, we use the $n$-th subspace alone for reconstruction.

\begin{figure}
\begin{subfigure}{\textwidth}
    \includegraphics[height=0.32\linewidth]{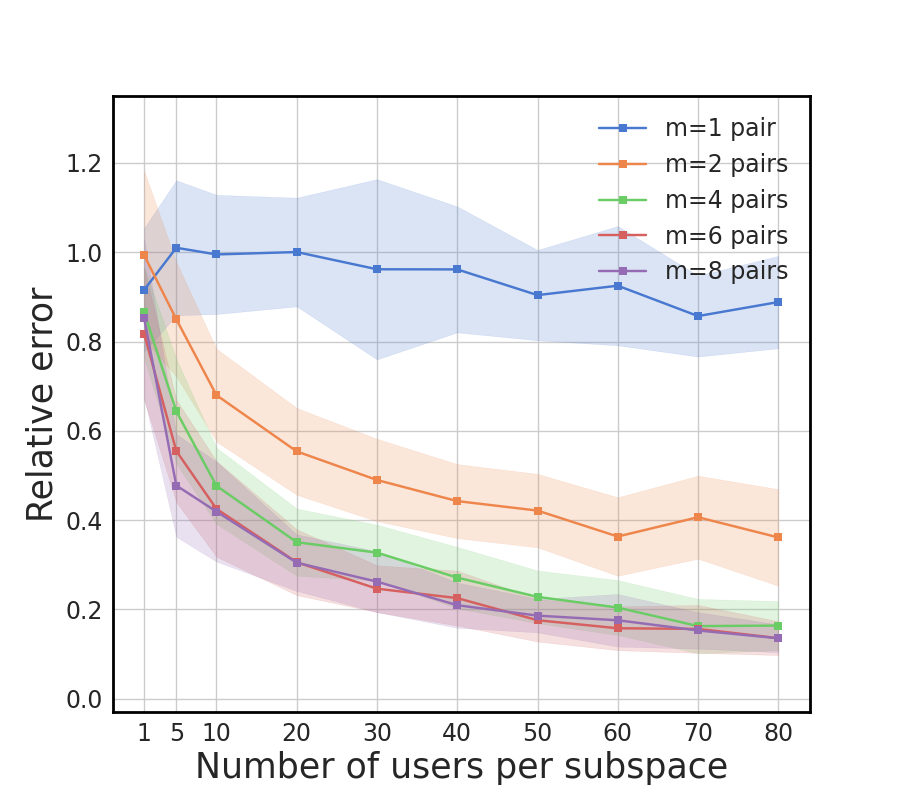}
    \hskip -15pt
    \includegraphics[height=0.32\linewidth]{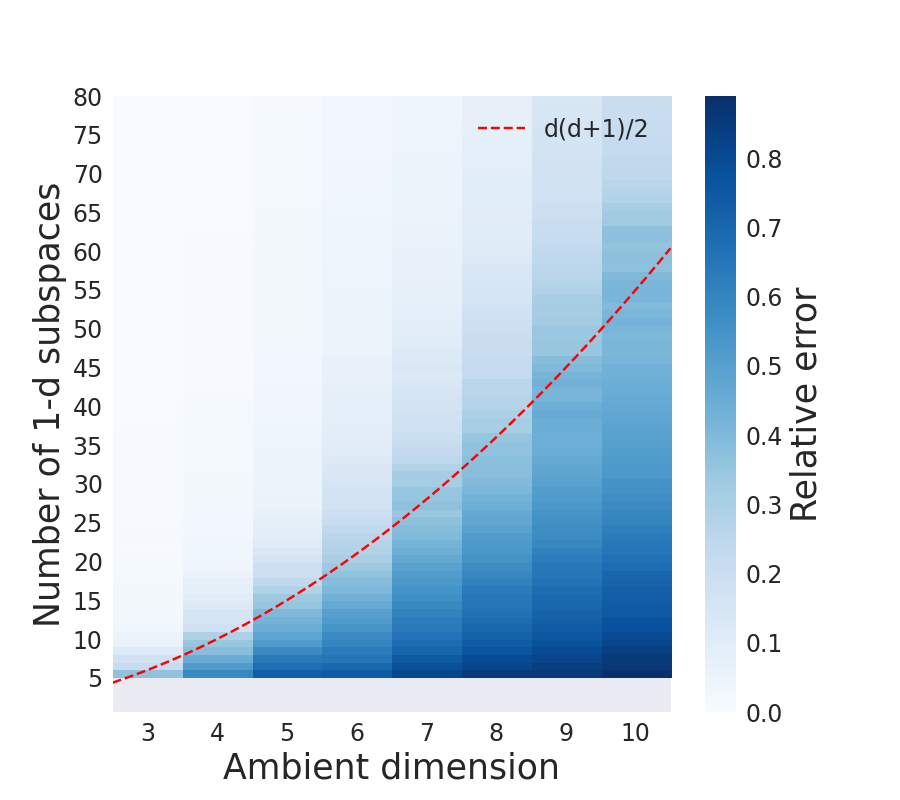}
    \hskip -15pt
    \includegraphics[height=0.32\linewidth]{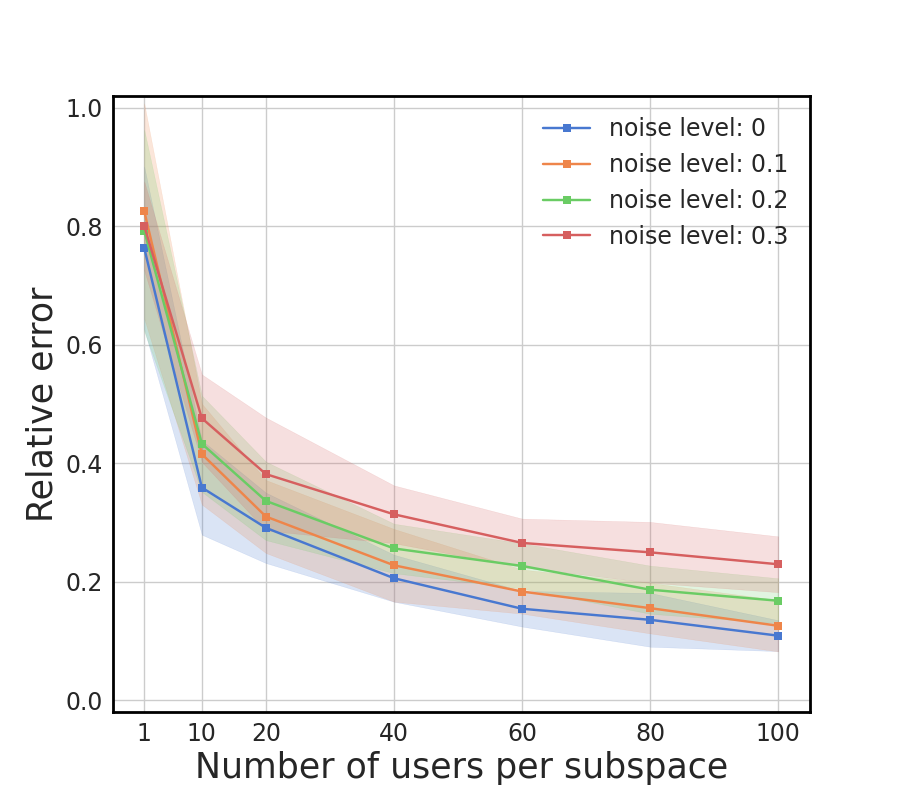}
    \caption{High noise $(\beta = 1)$}
    \label{beta1-hinge}
\end{subfigure}
\begin{subfigure}{\textwidth}
    \includegraphics[height=0.32\linewidth]{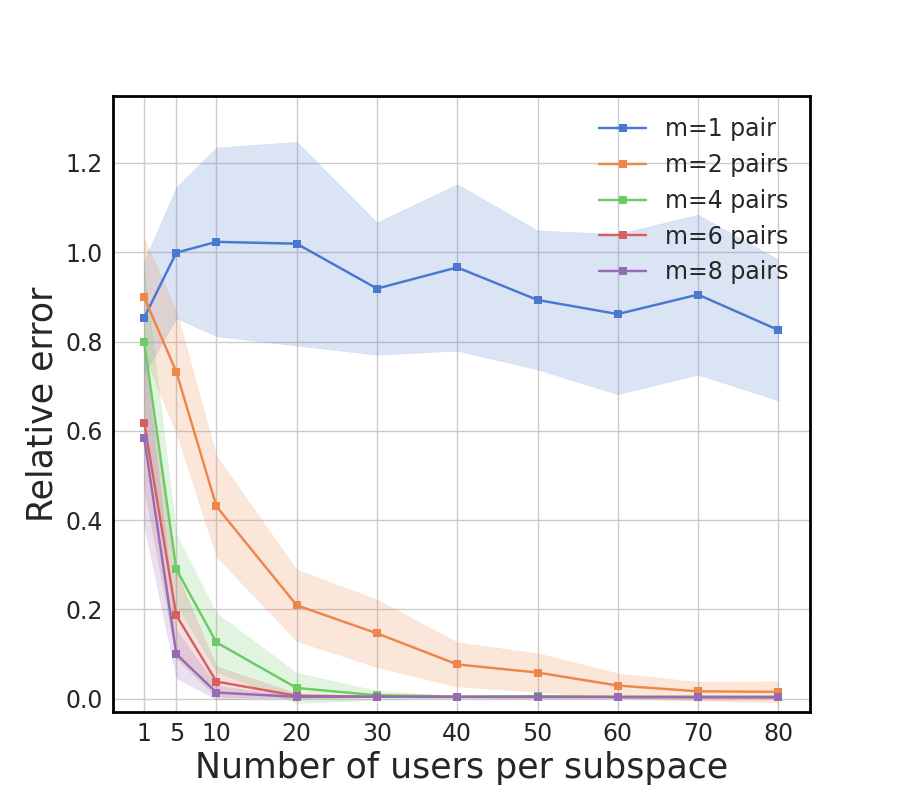}
    \hskip -15pt
    \includegraphics[height=0.32\linewidth]{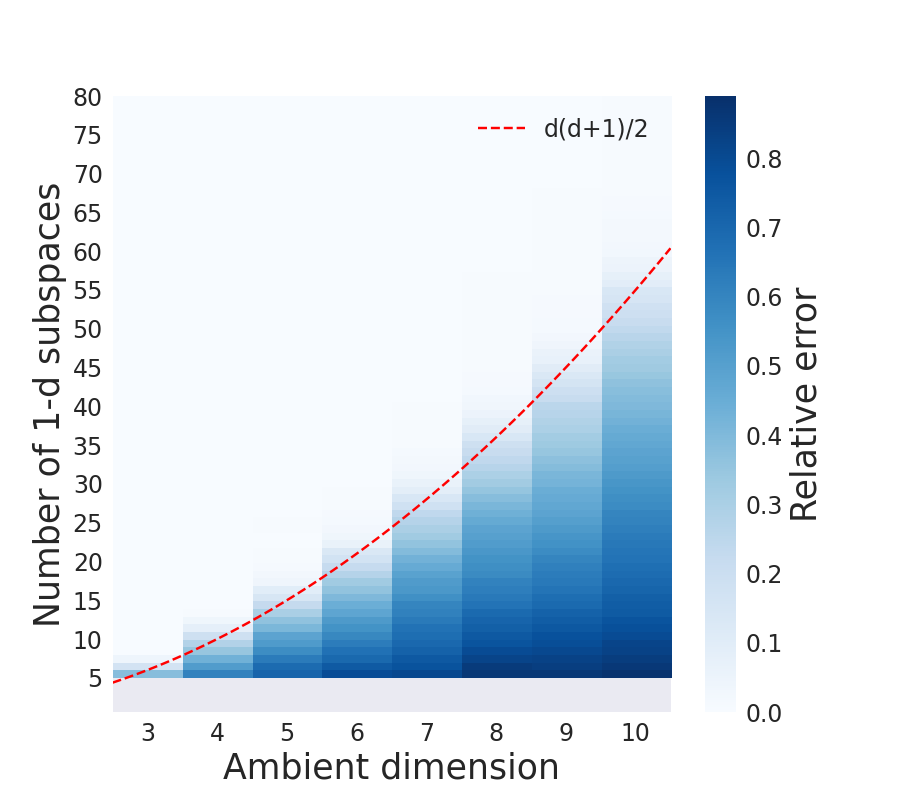}
    \hskip -15pt
    \includegraphics[height=0.32\linewidth]{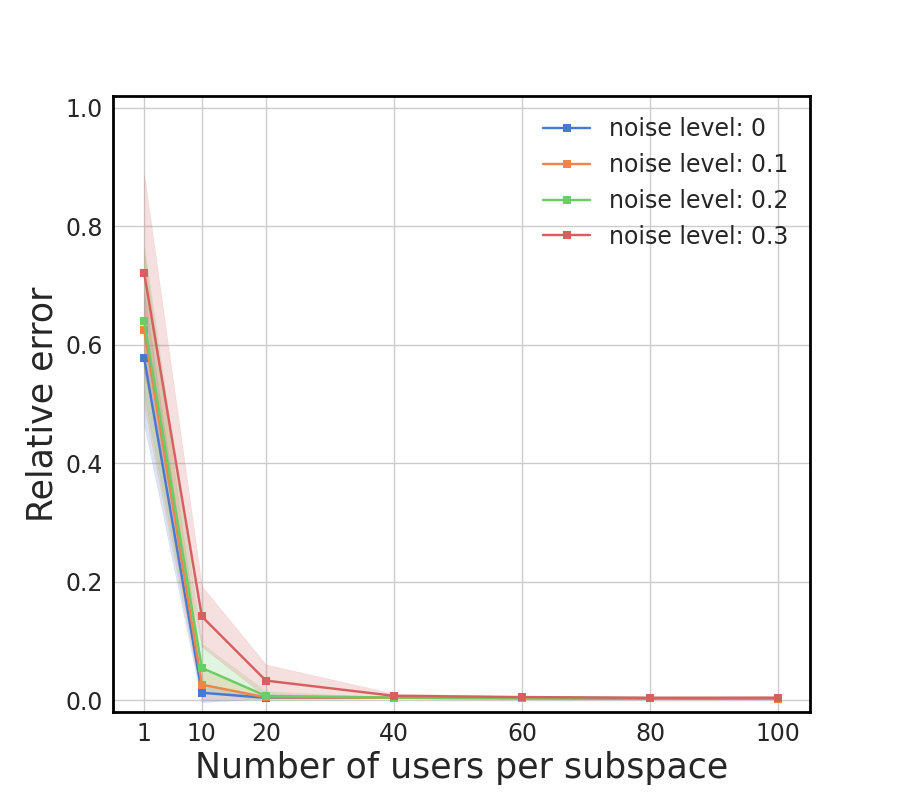}
    \caption{Medium noise $(\beta = 4)$}
    \label{beta4-hinge}
\end{subfigure}
\begin{subfigure}{\textwidth}
    \includegraphics[height=0.32\linewidth]{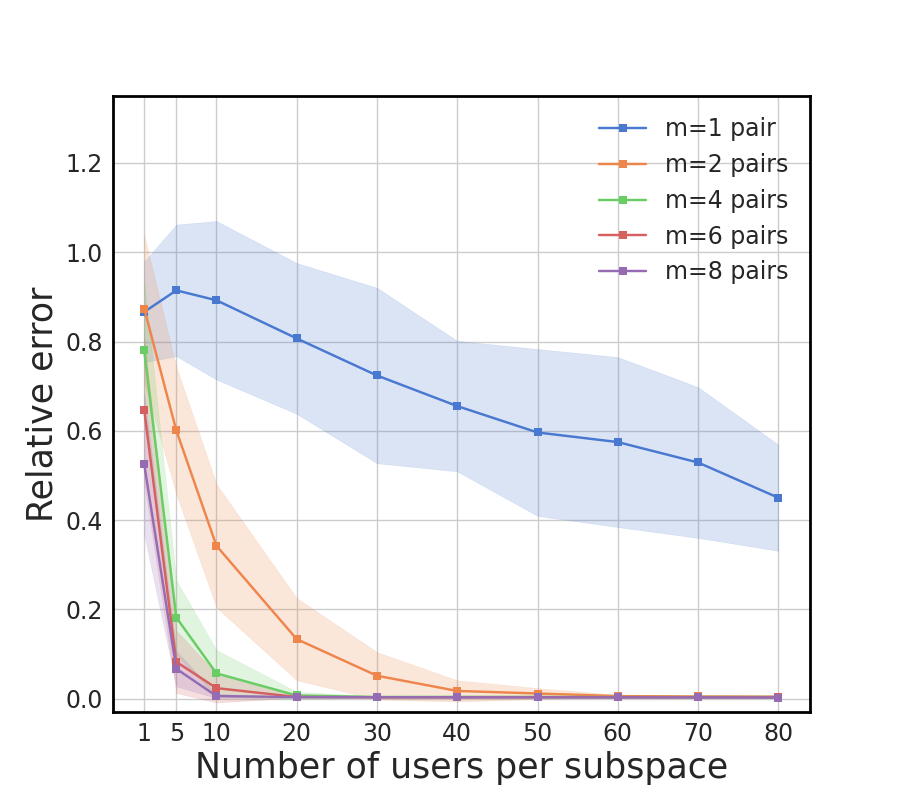}
    \hskip -15pt
    \includegraphics[height=0.32\linewidth]{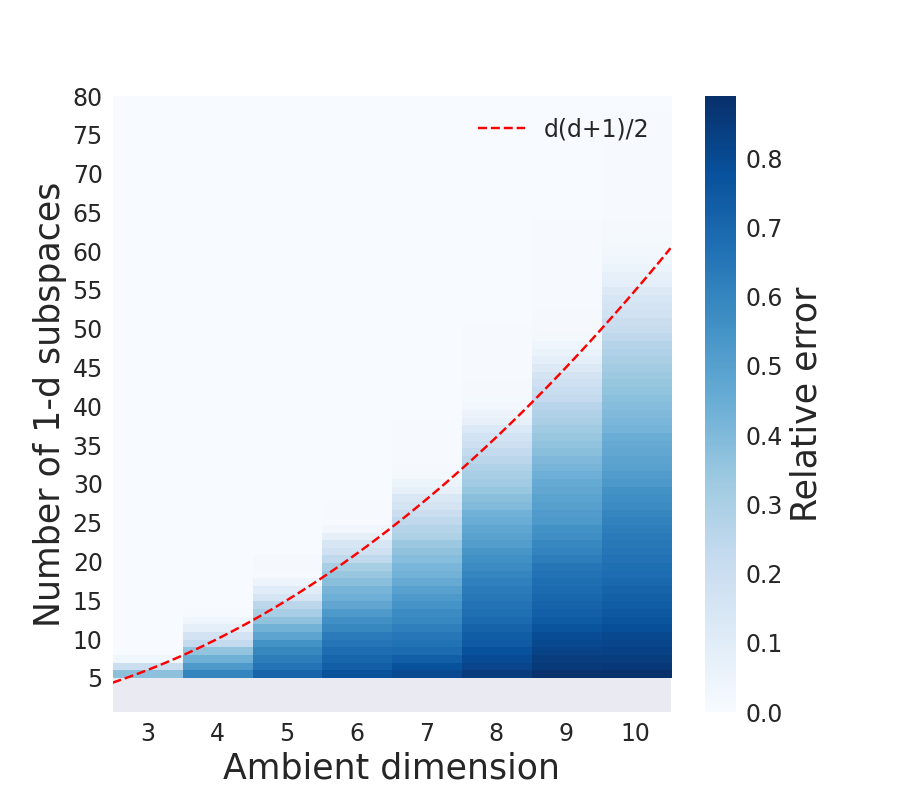}
    \hskip -15pt
    \includegraphics[height=0.32\linewidth]{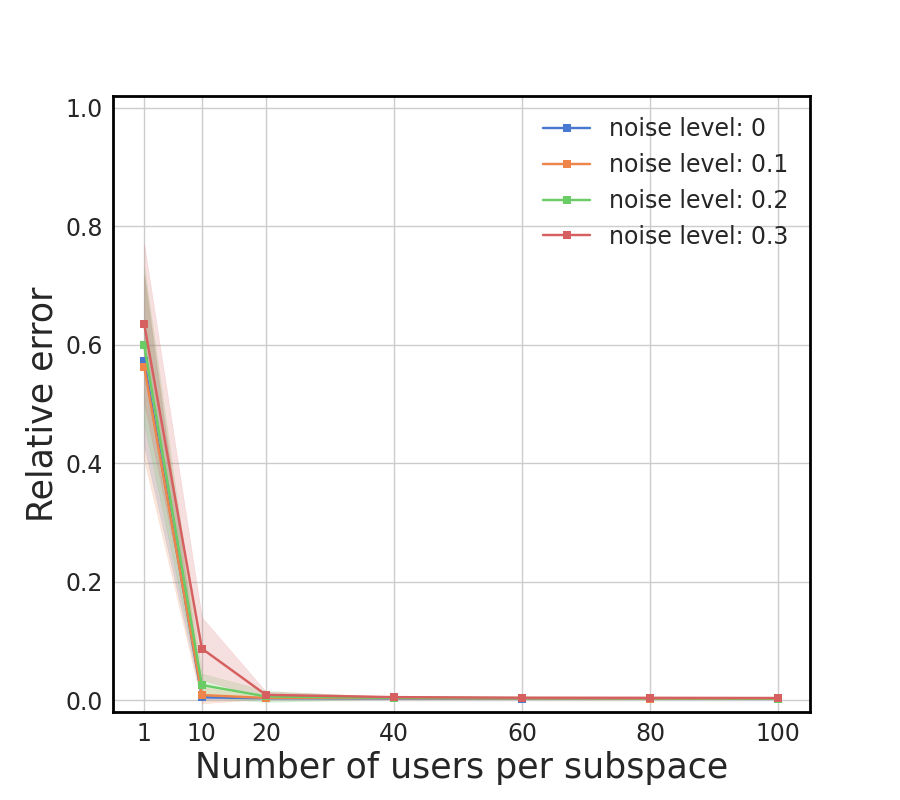}
    \caption{Noiseless ($\beta =  \infty$)}
    \label{noiseless-hinge}
\end{subfigure}
\caption{shows the results obtained using the same data in the three experiments (Section~\ref{sec:experiments}), wherein the learner now uses the hinge loss to recover subspace metrics (Algorithm~\ref{alg:multiple-user-binary}). Note that the $y$-axis scales in the plots for Experiment 3 have been slightly adjusted to enhance clarity.}
\label{figure:hinge}
\end{figure}

\begin{figure}[t]
    \hskip -3pt
    \begin{subfigure}{.32\textwidth}
        \centering
        \includegraphics[height=1.0\linewidth]{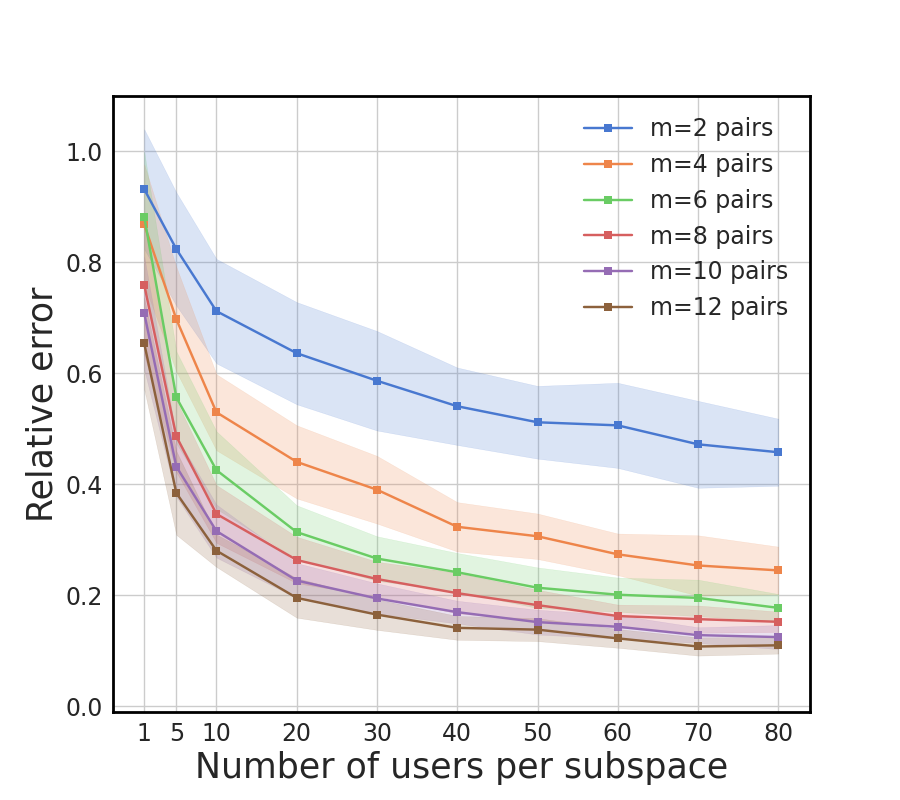}
        \caption{\label{figure:exp1-r2}}
    \end{subfigure}
    \hskip 4pt
    \begin{subfigure}{0.32\textwidth}
        \centering
        \includegraphics[height=1.0\linewidth]{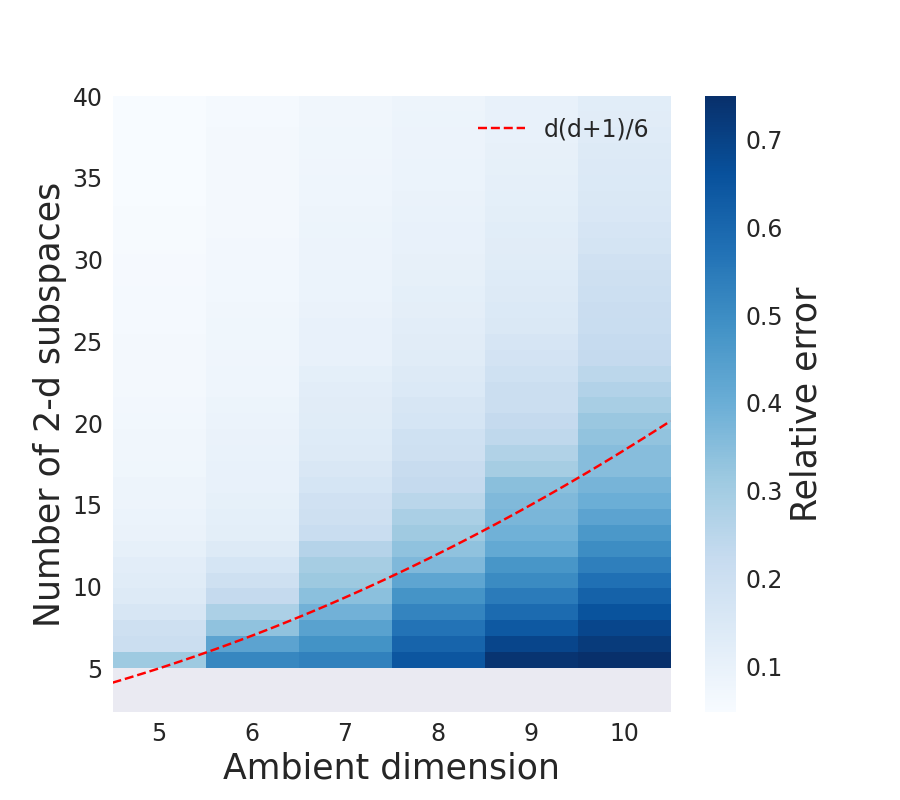}
        \caption{\label{figure:exp2-r2}}
    \end{subfigure}
    \hskip 4pt
    \begin{subfigure}{0.32\textwidth}
        \centering
        \includegraphics[height=1.0\linewidth]{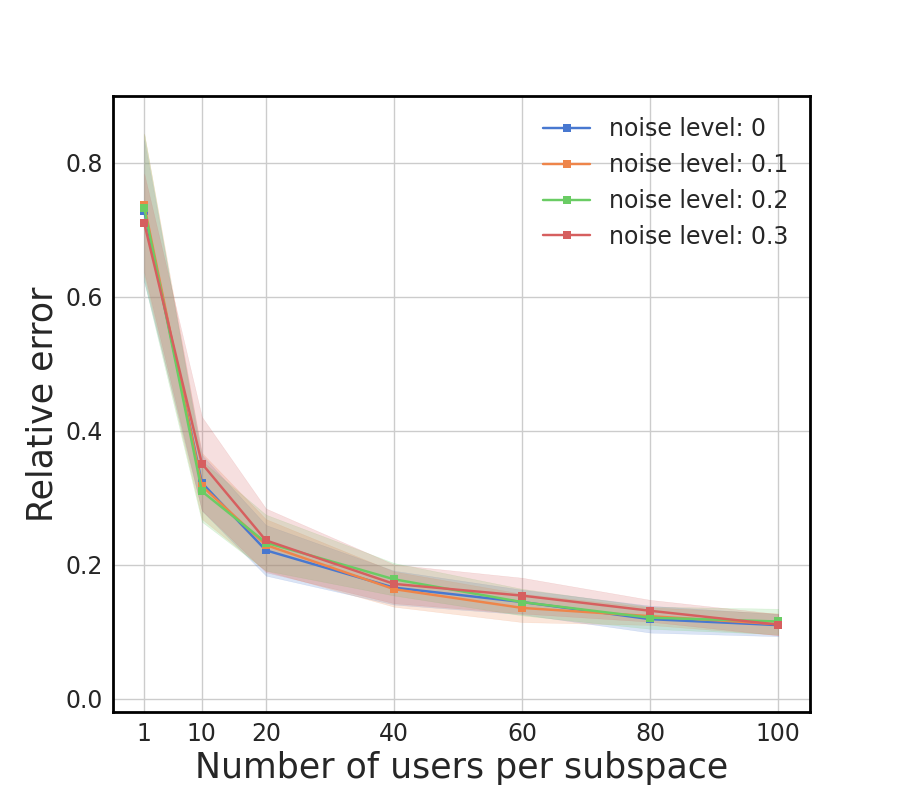}
        \caption{\label{figure:exp3-r2}}
    \end{subfigure}
    \caption{(a) shows the average relative errors over items that lie in a union of $40$ $2$-dimensional subspaces. 
    (b) shows the average relative errors for reconstructing $\hat{M}$ from increasing numbers of $2$-dimensional subspaces; for each subspace, $80$ users each provides $10$ preference comparisons. The dotted red curve illustrates the counting argument in Remark~\ref{rmk:min_subspaces}; here, each $2$-dimensional subspace can contribute at most $3$ degrees of freedom. (c) shows the average relative errors for varying subspace noise levels; here, items lie approximately in a union of $40$ $2$-dimensional subspaces and each user provides $10$ preference comparisons.}
    \label{figure:experiments-r2}
\end{figure}

\newpage
\subsection{Additional experimental results}
\label{sec:additional_exp_results}

We further study whether our two-stage approach requires exact knowledge of the probabilistic model under which user binary responses are generated. To this end, we repeated the three experiments in Section~\ref{sec:experiments} where user responses are sampled according to the logistic sigmoid link function with varying response noise levels, $\beta =1$, $\beta = 4$, and $\beta = \infty$ (noiseless). Given the same data used for the experiments discussed in Section~\ref{sec:experiments}, we now set the learner to use the hinge loss,
\[
\ell(z) = \max \rbr{0, 1 - z},
\]
instead of the negative log loss, to learn subspace metrics in Stage 1 of Algorithm~\ref{alg:multiple-user-binary}. Figure~\ref{figure:hinge} shows the performance of the learner. When compared with the results in Figures~\ref{figure:beta1-logistic} and~\ref{figure:logistic-other-response-noises}, the learner still recovers the full metric reasonably well. This further validates the effectiveness of our divide-and-conquer approach. \\

We also ran the three experiments in Section~\ref{sec:experiments} for subspace dimension $r=2$, with slightly different parameters. The response noise level was set to $\beta = 1$ and was known to the learner, and each experiment was run $30$ times. Figure~\ref{figure:exp1-r2} compares the average relative errors for varying $K$ and $m$, where items lie in a union of $40$ subspaces.
Figure~\ref{figure:exp2-r2} shows the average errors given increasing numbers of subspaces, where $K=80$ and $m = 10$. Note that by the dimension-counting argument in Remark~\ref{rmk:min_subspaces}, each $2$-dimensional subspace contributes at most $\frac{2(2+1)}{2} = 3$ degrees of freedom, and therefore a minimum of $\left \lceil \frac{d(d+1)}{6} \right \rceil$ subspaces are needed. Figure~\ref{figure:exp2-r2} shows the average recovery errors for varying subspace noise levels, $\sigma \in \cbr{0, 0.1, 0.2, 0.3}$, and varying $K$, where items lie in a union of $40$ subspaces and we set $m = 10$. 

\putbib
\end{bibunit}

\end{document}